\documentclass[10pt]{article} % For LaTeX2e

\usepackage[margin=1.1in]{geometry}
\usepackage{blindtext}
\usepackage{latexsym}
\usepackage{amsmath,amssymb,amsfonts,amsthm}
\usepackage{graphicx,subfigure,psfrag}
\usepackage{mathtools}
\usepackage{multicol,multirow}
\usepackage{algorithm}
\usepackage{algorithmic}
\usepackage{arydshln}
\usepackage{wrapfig}
\usepackage{lipsum}
\usepackage{fancybox}
\usepackage{xcolor}
\usepackage[most]{tcolorbox}
\usepackage{empheq}
\usepackage[authoryear, round]{natbib}
\usepackage{booktabs}
\usepackage{hyperref}
\usepackage{cleveref}
\usepackage{enumitem}
\usepackage{bm}
\usepackage{titlesec}
\usepackage{fancyhdr}
\usepackage{mdframed}
\usepackage{microtype}
\usepackage{parskip}

\usepackage{tikz}
\usepackage{pgfplots}
\usepgfplotslibrary{groupplots}
\pgfplotsset{compat=1.18}
\usetikzlibrary{arrows.meta,positioning,calc,fit,backgrounds}

\newtheorem{proposition}{Proposition}
\newtheorem{corollary}{Corollary}
\newtheorem{lemma}{Lemma}

\newtheorem{remark}{Remark}
\newtheorem{assumption}{Assumption}

\hypersetup{
  colorlinks=true,
  linkcolor=blue!70!black,
  citecolor=blue!50!black,
  urlcolor=blue!60!black
}

\providecommand{\E}{\mathbb{E}}
\providecommand{\Prob}{\mathbb{P}}
\newcommand{\Dtr}{\mathcal{D}_{\mathrm{tr}}}
\newcommand{\Dcal}{\mathcal{D}_{\mathrm{cal}}}

\newcommand{\ps}{p_{s}}

\newcommand{\BB}{\mathcal{B}}
\newcommand{\CC}{\mathcal{C}}
\newcommand{\R}{\mathbb{R}}
\newcommand{\1}{\mathbf{1}}
\newcommand{\N}{\mathcal{N}}
\newcommand{\mub}{\widehat{\mu}}
\newcommand{\sigb}{\widehat{\sigma}}
\newcommand{\betahat}{\widehat{\beta}}
\newcommand{\betastar}{\beta^\star}
\newcommand{\thetahat}{\widehat{\theta}}
\newcommand{\qhat}{\widehat{q}}
\newcommand{\xtest}{x_{\mathrm{test}}}

\newcommand{\Zw}{Z_w}
\DeclareMathOperator{\Var}{Var}
\newcommand{\ms}{m_s}           % source label marginal mean
\newcommand{\vs}{v_s}           % source label marginal std

\titleformat{\section}{\large\bfseries\color{blue!60!black}}{{\thesection}}{1em}{}[\titlerule]
\titleformat{\subsection}{\normalsize\bfseries\color{blue!40!black}}{{\thesubsection}}{1em}{}
\titleformat{\subsubsection}{\normalsize\itshape\bfseries}{{\thesubsubsection}}{1em}{}

\begin{document}

% -------------------------------------------------------
% Title block
% -------------------------------------------------------
\begin{center}
  {\LARGE\bfseries \textsf{Conformal Bayes under Continuous Label Shift: \\ [5pt]
  Sensitivity Analysis and the Limits of Exact Validity}}\\[2em]
%  {\large\itshape  Seungjin Choi}\\[0.5em]
  {\large  Seungjin Choi}\\[1em]
  {\normalsize CROID Research and aSSIST University, Seoul, Korea}\\[1em]
%  {\normalsize  \textsf{Choi-MEMO-2026-016a}, updated on \today}
  \end{center}

\vspace{0.5em}
\noindent\rule{\linewidth}{1.5pt}
\vspace{0.5em}

\begin{abstract}
Conformal Bayes combines Bayesian posterior predictive scores with conformal
calibration, but under continuous label shift both the score and calibration
weight depend on the unknown response-marginal density ratio. Existing methods
typically estimate one shift parameter from pseudo-labels or predictive samples
and plug it into calibration. We instead propose Joint Tilt-Sensitivity
Conformal Bayes (JTS-CB), which performs sensitivity analysis over a
prespecified set of plausible tilts; its split-conformal realization is
JTS-SCB. Each tilt jointly determines the Bayesian conformal score and
conformal importance weight. JTS-SCB forms a bounded sensitivity envelope over
candidate tilts, but its calibration-only construction does not inherit the
exact finite-sample weighted-conformal guarantee. We therefore study a separate
candidate-weighted exact counterpart and show that its usefulness depends
sharply on tail behavior. For scalar linear exponential tilts, any nonzero
candidate tilt makes the exact set unbounded. More generally, tail-growing
density ratios produce the same pathology, whereas quadratic tilts with a
negative coefficient on \(y^2\) have vanishing tail weights and admit bounded
exact inference on the original target. Ratio clipping provides a complementary
bounded exact construction for a surrogate target when tails grow. Experiments
show that strong plug-in predictive sampling can match the oracle when the shift
is well identified, while sensitivity analysis is most useful for richer,
weakly identified, or systematically biased shift models, at the cost of wider
prediction sets.
\end{abstract}

\clearpage
\tableofcontents
\newpage

\section{Introduction}
\label{sec:intro}

Conformal prediction provides a general framework for predictive uncertainty
quantification with finite-sample marginal coverage under exchangeability
\citep{VovkV2005book,PapadopoulosH2002ecml,LeiJ2018jasa,
ShaferG2008jmlr,BarberRF2023aos,AngelopoulosAN2023ftml}.  In split conformal prediction, a
predictive model is fitted on training data, a nonconformity score is evaluated
on a held-out calibration sample, and an empirical score quantile determines the
prediction set.  The predictive model may be flexible or misspecified: under
exchangeability, validity comes from the calibration rank rather than from exact
correctness of the fitted model.  The model controls efficiency, while the
conformal step supplies the coverage guarantee.

Bayesian predictive distributions fit naturally into this framework.  If a
Bayesian model fitted on source training data $\Dtr$ yields the posterior
predictive density $p_s(y\mid x,\Dtr)$, then
\[
  S_s(x,y)=-\log p_s(y\mid x,\Dtr)
\]
is a natural nonconformity score.  Candidate responses assigned high posterior
predictive density receive small scores, while surprising responses receive
large scores.  \emph{Conformal Bayes} uses this Bayesian predictive score inside
a conformal calibration procedure
\citep{MelluishT2001ecml,WassermanL2011ss,FongE2021neurips,
Choi2026eiml,LeeHS2026eiml,Choi2026arxiv_scbc}.  It therefore combines a
model-based description of predictive uncertainty with conformal calibration.

This appealing division of labor becomes more delicate under distribution
shift.  If the deployment population differs from the population used for
training and calibration, then source calibration scores need not represent
target test scores, and an ordinary conformal quantile may no longer deliver
the nominal target coverage.  Weighted conformal prediction addresses an
important class of such problems when the source-to-target density ratio is
known by reweighting source calibration observations according to their target
representativeness \citep{TibshiraniR2019neurips}.  More generally,
\citet{BarberRF2023aos} studied conformal prediction beyond exact
exchangeability and developed weighted procedures for distribution drift and
other departures from exchangeability.  In the present paper, the difficulty is
more specific: the required correction depends on the response rather than only
on the observed covariate.

We study \emph{continuous label shift},
\begin{equation}
  p_s(x\mid y)=p_t(x\mid y),
  \qquad
  p_s(y)\neq p_t(y),
  \label{eq:intro_label_shift}
\end{equation}
This invariance is the continuous-response analogue of \emph{target shift} in
statistical domain adaptation, where the response marginal changes while the
conditional distribution of inputs given the response is preserved
\citep{ZhangK2013icml}.  The corresponding source-to-target response-marginal
density ratio is
\begin{equation}
  w(y)=\frac{dP_t^Y}{dP_s^Y}(y).
  \label{eq:intro_density_ratio}
\end{equation}
If $w$ were known, source calibration observations could be reweighted toward
the target population.  The central obstacle is that $w$ depends on $y$, which
is precisely the response that is unavailable for a new target input.  This is
qualitatively different from ordinary covariate shift, where the importance
weight is a function of the observed input $x$.

The issue is especially important for Conformal Bayes because the same
label-shift correction enters \emph{twice}.  Under continuous label shift, a
candidate ratio $w$ changes the Bayesian predictive distribution through
\[
  p_t(y\mid x,\Dtr)\propto p_s(y\mid x,\Dtr)w(y),
\]
and therefore changes the Bayesian nonconformity score.  At the same time,
$w(Y_i)$ supplies the importance weight used during conformal calibration.
Thus a coherent procedure must modify the score and the weight together.

Recent split Conformal Bayes methods follow a natural \emph{plug-in}
strategy: estimate the unknown label shift from target pseudo-labels or
predictive samples and then calibrate using the resulting estimate
\citep{Choi2026eiml,LeeHS2026eiml,Choi2026arxiv_scbc,Choi2026copa}.  Point
pseudo-labels, source predictive sampling, and self-consistent tilted predictive
sampling provide increasingly rich approximations to the missing target label
distribution.  These plug-in procedures can be highly efficient when the
predictive model is reliable and sufficient target information is available.
Their vulnerability is that the shift correction is tied to an auxiliary
target-label estimation problem: errors in the pseudo-label distribution become
errors in the estimated density ratio and can be amplified by higher-order or
tail-sensitive terms.

This paper studies the complementary \emph{sensitivity-analysis} strategy:
rather than committing to one estimated tilt, specify a set of plausible tilts
and examine conformal calibration across that set.  Thus the central comparison
is between \emph{plug-in estimation of one shift} and \emph{sensitivity
analysis over a prespecified shift set}; neither is uniformly preferable.
A second question is theoretical: if one insists on the exact
candidate-weighted conformal construction, what form of finite-sample validity
is actually compatible with a useful bounded prediction set?  The answer
reveals a sharp limitation of exact validity for exponential tilts whose density ratios grow in an extreme score tail.

This paper takes a different route.  Rather than estimating one target shift,
we specify a plausible family of shifts and protect against all
members of that family.  We represent the response-marginal ratio by the
exponential-family tilt
\begin{equation}
  w(y;\boldsymbol\beta)
  =
  \exp\!\left\{
    \boldsymbol\beta^\top\phi(y)-A_s(\boldsymbol\beta)
  \right\},
  \qquad
  A_s(\boldsymbol\beta)
  =
  \log \E_{P_s^Y}
  \exp\{\boldsymbol\beta^\top\phi(Y)\},
  \label{eq:intro_exp_family_tilt}
\end{equation}
Parametric density-ratio models of this form have a long history in
semiparametric two-sample and case--control inference
\citep{QinJ1998biometrika}.  We place an analyst-specified uncertainty set
$\boldsymbol\beta\in\mathcal B$ around the tilt parameter.  The sufficient
statistics $\phi(y)$ encode
the plausible form of label shift, while $\mathcal B$ encodes uncertainty about
its magnitude.

Two nested examples are central in the paper.  The linear tilt uses
$\phi(y)=y$ and $w(y;\beta)\propto\exp(\beta y)$, whereas the quadratic tilt
uses $\phi(y)=(y,y^2)^\top$ and
$w(y;\beta_1,\beta_2)\propto\exp\{\beta_1y+\beta_2y^2\}$.  The linear family is
the special case $\beta_2=0$.  Figure~\ref{fig:tilt_marginals} illustrates the
two cases for a standard Gaussian source marginal.  A linear tilt changes the
location while preserving the variance; a quadratic tilt can change both
location and spread.

\begin{figure*}[t]
\centering
\begin{tikzpicture}
\begin{groupplot}[
    group style={group size=3 by 1,horizontal sep=0.9cm},
    width=0.30\textwidth,
    height=4.8cm,
    xlabel={$y$},
    ymin=0,
    ymax=0.62,
    axis lines=left,
    samples=300,
    tick label style={font=\small},
    label style={font=\small},
    title style={font=\small\bfseries},
    legend style={draw=none,fill=none,font=\scriptsize,cells={anchor=west},
      at={(0.5,0.98)},anchor=north}
]
\nextgroupplot[
    xmin=-4.5,xmax=4.5,domain=-4.5:4.5,
    ylabel={density},title={(a) Source marginal}]
\addplot[blue!65!black,thick]
{1/sqrt(2*pi) * exp(-x^2/2)};
\addlegendentry{$p_s(y)=\mathcal N(0,1)$}

\nextgroupplot[
    xmin=-4.5,xmax=5.0,domain=-4.5:5.0,
    title={(b) Linear tilt}]
\addplot[blue!65!black,thick]
{1/sqrt(2*pi) * exp(-x^2/2)};
\addlegendentry{source}
\addplot[orange!80!black,thick,dashed]
{1/sqrt(2*pi) * exp(-(x-1)^2/2)};
\addlegendentry{tilted: $\beta=1$}

\nextgroupplot[
    xmin=-5.0,xmax=6.0,domain=-5.0:6.0,
    title={(c) Quadratic tilt}]
\addplot[blue!65!black,thick]
{1/sqrt(2*pi) * exp(-x^2/2)};
\addlegendentry{source}
\addplot[orange!80!black,thick,dashed]
{1/sqrt(2*pi*(1/0.7)) * exp(-(x-(0.5/0.7))^2/(2*(1/0.7)))};
\addlegendentry{tilted: $(0.5,0.15)$}
\end{groupplot}
\end{tikzpicture}
\caption{Illustration of exponential-family response-marginal tilting from a
standard Gaussian source.  (a) The source marginal is $p_s(y)=\mathcal N(0,1)$.
(b) The linear tilt $w(y)\propto\exp(\beta y)$ with $\beta=1$ shifts the mean
while preserving the variance.  (c) The quadratic tilt
$w(y)\propto\exp(\beta_1y+\beta_2y^2)$ with
$(\beta_1,\beta_2)=(0.5,0.15)$ changes both location and spread.  The linear
model is nested at $\beta_2=0$.}
\label{fig:tilt_marginals}
\end{figure*}

Our proposed \emph{Joint Tilt-Sensitivity Conformal Bayes} (JTS-CB) principle replaces
estimation of one $\widehat{\boldsymbol\beta}$ by sensitivity analysis over a
prespecified tilt uncertainty set $\boldsymbol\beta\in\mathcal B$.  In this
paper we develop its split-conformal realization, \emph{Joint Tilt-Sensitivity
Split Conformal Bayes} (JTS-SCB).  For every candidate tilt, the same
$\boldsymbol\beta$ determines both the Bayesian conformal score and the
conformal importance weight.  This \emph{joint score--weight coupling}
is the key structural principle: the uncertainty set is traversed through
coherent pairs $(S_{\boldsymbol\beta},W_{\boldsymbol\beta})$, not by varying the
two ingredients independently.  For a fixed tilt, JTS-SCB first finds the
smallest weighted calibration threshold attaining mass $1-\alpha$; it then
takes the largest such requirement over all plausible tilts.
Section~\ref{sec:method} develops this sensitivity calibration rule and its
prediction-set construction.

The resulting method is a sensitivity analysis rather than another estimator
of the target shift.  Its protection over a set of plausible tilts is purchased
with width.  When a strong pseudo-label method such as tilted predictive sampling
estimates the shift reliably, it can closely match the oracle and is more
efficient than JTS-SCB.  JTS-SCB becomes attractive when a low-dimensional
uncertainty set of plausible shifts is easier to specify than a particular shift
is to estimate reliably.  The nested quadratic family provides a concrete
example: one two-parameter uncertainty set can contain both linear and genuinely
quadratic shifts, whereas a plug-in method must estimate a tail-sensitive
quadratic coefficient that may or may not be present.

JTS-SCB itself is the single practical procedure studied in this paper: its
uncertainty-set calibration threshold defines the bounded prediction set used in
the experiments.  A separate theoretical question is whether that practical set
inherits the exact finite-sample guarantee of weighted conformal prediction.
To answer this, we analyze the corresponding candidate-weighted exact conformal
construction, which includes the candidate response's own importance weight.
That exact counterpart has finite-sample marginal coverage whenever the true
tilt lies in the uncertainty set.  For the scalar linear exponential family,
however, any nonzero candidate tilt makes the exact union unbounded, even if the
true target has no shift.  We show more generally that the decisive issue is tail
behavior of the candidate weight: tail-growing ratios create the pathology,
whereas compact uncertainty sets restricted to tail-decaying quadratic tilts
with $\beta_2<0$ admit bounded exact inference directly on the original target.
This positive regime requires prior structural knowledge that plausible target
tilts lie on the tail-decaying side; it does not encompass nonzero linear
candidates.  Ratio clipping provides a separate
repair for tail-growing families by yielding a bounded exact construction for a
bounded-ratio surrogate target, with a total-variation transfer bound.  These
exact and clipped constructions are validity benchmarks for JTS-SCB, not
alternative ways of turning the JTS-SCB threshold into a prediction set.

Our main contributions are as follows.
\begin{itemize}[leftmargin=*]
\item We formulate continuous label shift through a structured
exponential-family density-ratio class and propose JTS-CB as sensitivity analysis
over a prespecified tilt uncertainty set.  Its defining structural feature is
joint score--weight coupling: the same tilt parameter generates both the Bayesian
score and the conformal importance weight.  We instantiate this principle with
split conformal calibration as JTS-SCB.

\item We formulate practical JTS-SCB as a two-stage sensitivity calibration rule.
For each fixed tilt, the first stage computes the smallest weighted calibration
threshold; the second stage takes the largest such requirement over the
uncertainty set.  We also show that the practical prediction sets are pathwise
nested as the uncertainty set expands, making the budget a genuine monotone
sensitivity parameter.

\item We separate the practical JTS-SCB set from its validity analysis and
extract a general tail criterion for the candidate-weighted exact counterpart.
For the scalar linear family, any nonzero candidate tilt makes the exact union
unbounded.  In contrast, quadratic tilts with $\beta_2<0$ suppress both tails,
and compact uncertainty sets restricted to this subfamily retain exact coverage
and boundedness on the original target without clipping.  This certificate is
structurally restricted and does not extend to uncertainty sets containing
nonzero linear candidates.  Ratio clipping gives a complementary bounded
surrogate construction
for tail-growing ratios.

\item We study linear and nested quadratic tilts empirically. A dedicated
negative-quadratic benchmark directly illustrates the bounded exact regime on
the original target. Strong tilted predictive sampling is more efficient when
the ratio estimate is reliable, whereas practical JTS-Q remains conservative
when the extra quadratic coefficient is weakly identified or systematically
biased, at a substantial width premium.
\end{itemize}

The remainder of the paper is organized as follows.
Section~\ref{sec:related} discusses related work.
Section~\ref{sec:background} reviews split Conformal Bayes, weighted calibration,
pseudo-label shift estimation, and predictive tilting developed in recent work.
Section~\ref{sec:method} presents the JTS-SCB construction, beginning from its
tilt-sensitivity calibration formulation.  Section~\ref{sec:experiments} presents the empirical
study, followed by discussion, limitations, and concluding remarks in
Section~\ref{sec:discussion}.

\section{Related work}
\label{sec:related}

\paragraph{Conformal prediction beyond exchangeability.}
Classical conformal prediction obtains finite-sample marginal coverage from
exchangeability; standard regression treatments include both split and
candidate-wise conformal constructions
\citep{VovkV2005book,LeiJ2018jasa,ShaferG2008jmlr,AngelopoulosAN2023ftml}.
Weighted conformal prediction extends the rank argument to certain shifts when
the source-to-target likelihood ratio is known
\citep{TibshiraniR2019neurips}.  \citet{BarberRF2023aos} further developed
conformal prediction beyond exact exchangeability, including weighted methods
for distribution drift.  Our setting is more structured: the shift is entirely
in the response marginal, and the required importance weight depends on the
unobserved response itself.

\paragraph{Label shift, target shift, and continuous-response correction.}
Label shift is most commonly studied in classification, where
$p_s(x\mid y)=p_t(x\mid y)$ and the unknown correction is a finite vector of
class-probability ratios
\citep{SaerensM2002neco,LiptonZ2018icml,AlexandariAM2020icml,GargS2020neurips}.
The same conditional-invariance structure appears in the domain-adaptation
literature under the name \emph{target shift} \citep{ZhangK2013icml}.
Within distribution-free uncertainty quantification, \citet{PodkopaevA2021uai}
study classification label shift directly and develop reweighted conformal and
calibration procedures using unlabeled target data.  Our setting replaces the
finite vector of class ratios by a continuous response-marginal density ratio
$w(y)$.  Recent split Conformal Bayes work estimates structured versions of this
ratio from target pseudo-labels or predictive samples
\citep{Choi2026eiml,LeeHS2026eiml,Choi2026arxiv_scbc,Choi2026copa}.  JTS-SCB is
complementary to these plug-in approaches: rather than estimating one ratio
parameter, it calibrates against a prespecified uncertainty set.

\paragraph{Conformal Bayes, density-ratio models, and predictive tilting.}
Bayesian predictive densities have long been used to define conformal scores
\citep{MelluishT2001ecml,WassermanL2011ss,FongE2021neurips}.  Parametric
exponential density-ratio models are also classical in semiparametric
two-sample and case--control inference \citep{QinJ1998biometrika}.  Recent
Conformal Bayes work under continuous label shift uses response-marginal tilting
to modify both the Bayesian predictive distribution and the conformal
calibration weights
\citep{Choi2026eiml,LeeHS2026eiml,Choi2026arxiv_scbc}.  Section~\ref{sec:background}
reviews this machinery because it is the starting point of JTS-SCB.  The new
contribution in Section~\ref{sec:method} is not predictive tilting itself, but
sensitivity calibration when the tilt parameter is not estimated from target
pseudo-labels.

\paragraph{Calibration over shift classes.}
Recent conformal methods also calibrate over families of shifts or nuisance
parameters.  \citet{GibbsI2025jrsssb} connect stronger coverage objectives to
robust weighted calibration over parametric shift classes, while
\citet{SchroderM2025neurips} study conformal inference with weighting and
nuisance structure induced by continuous treatments.  JTS-CB shares the idea
that the final conformal threshold is produced by an augmented calibration
problem.  Its distinctive feature is that the same tilt parameter changes
\emph{both} the Bayesian conformal score and the conformal importance weight, so the uncertainty
class consists of coherent score--weight pairs rather than alternative weights
for one fixed score.

\paragraph{Distributionally robust conformal prediction and sensitivity analysis.}
A complementary literature protects coverage over broader ambiguity sets,
including divergence neighborhoods \citep{CauchoisM2024jasa}, Wasserstein-style
robustness \citep{AiJ2024icml}, L\'evy--Prokhorov ambiguity sets
\citep{AolariteiL2025neurips}, and Wasserstein-regularized conformal prediction
\citep{XuR2025iclr}.  Sensitivity analysis has a longer statistical lineage:
classical formulations ask how conclusions change over an analyst-specified set
of departures from identifying or sampling assumptions, including observational-
study sensitivity models, nonignorable-missingness selection models, and
marginal sensitivity models for propensity-score weighting
\citep{RosenbaumPR2002book,ScharfsteinDO1999jasa,TanZ2006jasa,ZhaoQ2019jrsssb}.
Recent conformal work brings the same logic to individual treatment effects and
hidden confounding \citep{JinY2023pnas,YinM2024jasa}.

The role of $\kappa$ in JTS-CB is analogous to the sensitivity parameters in
that literature: it is specified by the analyst to control the size of the
departure class $\mathcal B(\kappa)$ rather than estimated as the realized
shift itself.  The models are nevertheless different.  Here the uncertainty
set is a structured exponential-tilt family for continuous label shift, and
each candidate tilt simultaneously determines the conformal importance weight
and the Bayesian nonconformity score.  JTS-CB therefore belongs more naturally
to the sensitivity-analysis tradition than to generic distributional
robustness, while retaining a problem-specific joint score--weight structure.

\section{From split conformal prediction to predictive tilting}
\label{sec:background}

This section reviews the ingredients on which JTS-SCB is built.  Split Conformal
Bayes, weighted calibration under continuous label shift, and predictive
tilting have been developed in recent work
\citep{Choi2026eiml,LeeHS2026eiml,Choi2026arxiv_scbc,Choi2026copa}.  They are
reviewed here to make the JTS-SCB contribution in Section~\ref{sec:method}
self-contained.  Nothing in this section is the tilt-sensitivity construction
itself.

\subsection{Split conformal prediction and Conformal Bayes}

Let $\Dcal=\{(X_i,Y_i)\}_{i=1}^n$ be calibration data exchangeable with a future
test point $(X_{n+1},Y_{n+1})$.  We use the standard split-conformal regression
construction \citep{LeiJ2018jasa}.  Given a nonconformity score $S(x,y)$, define
$S_i=S(X_i,Y_i)$.  Split conformal prediction returns
\begin{equation}
  \mathcal C(x)=\{y:S(x,y)\le \widehat q_{1-\alpha}\},
  \label{eq:basic_split_cp}
\end{equation}
where $\widehat q_{1-\alpha}$ is the usual conformal empirical quantile of the
calibration scores.  Exchangeability yields
$\mathbb P\{Y_{n+1}\in\mathcal C(X_{n+1})\}\ge 1-\alpha$.

In Conformal Bayes, a Bayesian regression model fitted on $\Dtr$ supplies the
score
\begin{equation}
  S_s(x,y)=-\log p_s(y\mid x,\Dtr).
  \label{eq:source_bayes_score}
\end{equation}
For a Gaussian posterior predictive distribution
\begin{equation}
  p_s(y\mid x,\Dtr)=\mathcal N\{\mub(x),\sigb^2(x)\},
  \label{eq:source_gaussian_pred}
\end{equation}
the score is
\begin{equation}
  S_s(x,y)
  =\frac{(y-\mub(x))^2}{2\sigb^2(x)}
   +\frac12\log\{2\pi\sigb^2(x)\}.
  \label{eq:source_gaussian_score}
\end{equation}
The Bayesian predictive distribution determines the geometry of the score,
while the conformal quantile calibrates that score.

\subsection{Weighted Conformal Bayes under continuous label shift}

Under continuous label shift,
$p_s(x\mid y)=p_t(x\mid y)$ and the source-to-target density ratio is
$w(y)=p_t(y)/p_s(y)$.  For a known ratio, weighted conformal prediction supplies
the required correction through the weighted-exchangeability framework of
\citet{TibshiraniR2019neurips}; \citet{PodkopaevA2021uai} give the
discrete-label-shift specialization in which the conformal weight depends on the
candidate label.  If a fixed tilt parameter is denoted by $\beta$, write
\[
  S_i(\beta)=S_\beta(X_i,Y_i),
  \qquad
  W_i(\beta)=w(Y_i;\beta).
\]
For a candidate response $y$ at a test input $x$, define
$S_{n+1}(y;\beta)=S_\beta(x,y)$ and
$W_{n+1}(y;\beta)=w(y;\beta)$.  The exact fixed-tilt weighted conformal p-value
is
\begin{equation}
  \pi_\beta(y)
  = \frac{\sum_{i=1}^n W_i(\beta)\1\{S_i(\beta)\geq S_{n+1}(y;\beta)\}
      + W_{n+1}(y;\beta)}
    {\sum_{i=1}^n W_i(\beta)+W_{n+1}(y;\beta)} ,
  \label{eq:exact_pvalue}
\end{equation}
with prediction set
\begin{equation}
  \CC^{\rm exact}_\beta(x)=\{y:\pi_\beta(y)>\alpha\}.
  \label{eq:exact_wt_set}
\end{equation}
This is the finite-sample theorem-bearing object for a known shift and is an
application of standard weighted-conformal validity, rather than a new coverage
result of this paper.  Formally, the weighted-exchangeability argument is applied
to the full observation $Z=(X,Y)$; because the Radon--Nikodym weight depends on
the candidate response, the candidate term $W_{n+1}(y;\beta)$ must remain inside
the inversion.  This is different from the familiar covariate-shift split
recipe, where the test weight is computable from the observed $x$ alone.

A commonly used calibration-only surrogate omits the candidate-dependent test
weight and uses the weighted calibration quantile
\begin{equation}
  \qhat^{\rm prac}_\beta
  = \inf\!\left\{q:
    \frac{\sum_{i=1}^n W_i(\beta)\1\{S_i(\beta)\le q\}}
         {\sum_{j=1}^n W_j(\beta)}
    \ge 1-\alpha
  \right\}.
  \label{eq:practical_quantile}
\end{equation}
The distinction between \eqref{eq:exact_pvalue} and
\eqref{eq:practical_quantile} becomes essential in Section~\ref{sec:method}:
the former supports exact weighted-conformal validity, whereas the latter is the
bounded calibration object used by the practical JTS-SCB procedure.

\subsection{Exponential predictive tilting}

The recent continuous-label-shift Conformal Bayes construction models the
response-marginal density ratio by an exponential tilt.  In the scalar linear
case,
\begin{equation}
  w(y;\beta)
  =\frac{p_t(y)}{p_s(y)}
  =\frac{\exp(\beta y)}{\Zw(\beta)},
  \qquad
  \Zw(\beta)=\E_{Y\sim P_s^Y}\{\exp(\beta Y)\}.
  \label{eq:label_tilt}
\end{equation}
For a Gaussian source label marginal with mean $\ms$ and variance $\vs^2$,
\begin{equation}
  \Zw(\beta)
  =\exp\!\left(\beta\ms+\frac12\beta^2\vs^2\right).
  \label{eq:label_tilt_gaussian_Z}
\end{equation}
More generally, the vector exponential-family model in
\eqref{eq:intro_exp_family_tilt} uses sufficient statistics $\phi(y)$ and
natural parameter $\boldsymbol\beta$.

The same response-marginal tilt modifies the source posterior predictive
through
\begin{equation}
  p_{\boldsymbol\beta}(y\mid x,\Dtr)
  =
  \frac{p_s(y\mid x,\Dtr)
        \exp\{\boldsymbol\beta^\top\phi(y)\}}
       {\int p_s(u\mid x,\Dtr)
        \exp\{\boldsymbol\beta^\top\phi(u)\}\,du}.
  \label{eq:posterior_tilting_general}
\end{equation}
The source-marginal normalizer in $w$ cancels from this predictive tilt because
it does not depend on $y$ conditional on the candidate parameter.

For the Gaussian predictive model \eqref{eq:source_gaussian_pred} and the linear
tilt, completing the square gives
\begin{equation}
  p_\beta(y\mid x,\Dtr)
  =\mathcal N\!\bigl(y;\mub(x)+\beta\sigb^2(x),\sigb^2(x)\bigr),
  \label{eq:gaussian_tilt}
\end{equation}
with score
\begin{equation}
  S_\beta(x,y)
  =-\log p_\beta(y\mid x,\Dtr)
  =\frac{(y-\mub(x)-\beta\sigb^2(x))^2}{2\sigb^2(x)}
   +\frac12\log\{2\pi\sigb^2(x)\}.
  \label{eq:gaussian_score}
\end{equation}
Thus linear predictive tilting shifts the predictive mean by
$\beta\sigb^2(x)$ while leaving the predictive variance unchanged.

For the quadratic family
$w(y;\beta_1,\beta_2)\propto\exp\{\beta_1y+\beta_2y^2\}$, the same calculation
changes both predictive mean and variance, subject to normalizability.  If the
source predictive variance is $v(x)$, then the tilted variance is
$v(x)/(1-2\beta_2v(x))$, so one requires
$1-2\beta_2v(x)>0$.

\subsection{Pseudo-label estimation of the tilt}

The unresolved quantity in the preceding construction is the target tilt.  The
recent pseudo-label approaches estimate it from unlabeled target inputs by
constructing surrogate target responses.  Three representative strategies used
throughout this paper are:
\begin{enumerate}[leftmargin=*]
\item \emph{Point pseudo-labeling}: replace each missing target response by a
predictive point estimate and fit the tilt to these values.
\item \emph{Source predictive sampling (SPS)}: draw pseudo-responses from the
source posterior predictive distribution, thereby retaining predictive
uncertainty that point pseudo-labels discard.
\item \emph{Tilted predictive sampling (TPS)}: update the pseudo-response
sampling distribution using the current tilt estimate and iterate toward a
self-consistent tilted predictive distribution.
\end{enumerate}
TPS is the richest of these plug-in procedures and can be highly efficient when
the predictive model is accurate and the tilt is well identified.  It can also
become unstable when the target sample is small, the predictive model is
systematically biased, or an additional tail-sensitive parameter must be
estimated.  These are estimation issues, not defects of predictive tilting
itself.

The present paper changes only this last step.  Instead of using target
pseudo-labels to produce one estimate $\widehat{\boldsymbol\beta}$, JTS-SCB takes
an uncertainty set $\mathcal B$ as input and performs conformal calibration
against every tilt in that set.  The next section formulates this new step as
a tilt-sensitivity calibration problem.

\section{Joint Tilt-Sensitivity Conformal Bayes}
\label{sec:method}

The main contribution of this paper is a sensitivity-analysis framework for an
\emph{unknown} label-shift tilt.  Our starting point is the Split Conformal
Bayes construction for a fixed tilt \citep{Choi2026eiml} reviewed in Section~\ref{sec:background}.
Whereas existing plug-in methods
\citep{LeeHS2026eiml,Choi2026arxiv_scbc} first estimate a single
target tilt from pseudo-labels or predictive samples and then calibrate at that
estimate, JTS-SCB treats the tilt as a sensitivity parameter known only to belong
to a prespecified uncertainty set $\mathcal B$ of plausible values.  We call
the general principle \emph{Joint Tilt-Sensitivity Conformal Bayes} (JTS-CB), and its
split-conformal realization \emph{Joint Tilt-Sensitivity Split Conformal Bayes}
(JTS-SCB).  The name reflects two distinct ingredients.  The term
{\em tilt-sensitivity} means that inference is examined over the entire
prespecified tilt set rather than at one estimated shift, while the term
{\em joint} means that, for every $\boldsymbol\beta\in\mathcal B$, the Bayesian
conformal score and conformal importance weight are generated by that same
tilt.  We refer to the latter as the \emph{joint score--weight coupling
principle}, which keeps every candidate score--weight pair coherent with a
single source-to-target density ratio.  Accordingly, the calibration rule,
prediction-set construction, algorithms, and empirical studies below concern
JTS-SCB, while JTS-CB refers to the broader sensitivity-analysis principle.

\subsection{Tilt-sensitivity calibration problem}
\label{sec:quantile_regression}

For each $\boldsymbol\beta\in\mathcal B$, define the tilted Bayesian conformal
score and the corresponding conformal importance weight
\begin{equation}
  S_i(\boldsymbol\beta)
  =-\log p_{\boldsymbol\beta}(Y_i\mid X_i,\Dtr),
  \qquad
  W_i(\boldsymbol\beta)
  =w(Y_i;\boldsymbol\beta),
  \qquad i=1,\ldots,n.
  \label{eq:jts_coupled_pair}
\end{equation}
The same $\boldsymbol\beta$ is used in both components.  Denote the normalized weight by
\[
 \overline W_i(\boldsymbol\beta)
  =\frac{W_i(\boldsymbol\beta)}{\sum_{j=1}^nW_j(\boldsymbol\beta)}
\]
and define the weighted empirical score distribution
\begin{equation}
  \widehat F_{\boldsymbol\beta}(\theta)
  =\sum_{i=1}^n
  \overline W_i(\boldsymbol\beta)
   \1 \left\{S_i(\boldsymbol\beta)\le\theta \right\}.
  \label{eq:jts_weighted_cdf}
\end{equation}
For a fixed tilt, the smallest calibration threshold carrying weighted mass
$1-\alpha$ is
\begin{equation}
  \widehat q_{\boldsymbol\beta}
  =
  \inf\Bigl\{
    \theta\in\R:
    \widehat F_{\boldsymbol\beta}(\theta)\ge1-\alpha
  \Bigr\}.
  \label{eq:weighted_empirical_quantile}
\end{equation}
JTS-SCB then takes the largest of these thresholds over the tilt uncertainty set:
\begin{equation}
  \boxed{
  \widehat\theta_{\rm JTS}
  =
  \sup_{\boldsymbol\beta\in\mathcal B}
  \inf\Bigl\{
    \theta\in\R:
    \widehat F_{\boldsymbol\beta}(\theta)\ge1-\alpha
  \Bigr\}
  =
  \sup_{\boldsymbol\beta\in\mathcal B} \,
  \widehat q_{\boldsymbol\beta}.}
  \label{eq:jts_sensitivity_threshold}
\end{equation}
Equation~\eqref{eq:jts_sensitivity_threshold} is the main sensitivity-calibration
rule. For each candidate tilt, the inner step identifies the \emph{smallest
weighted calibration threshold} attaining mass $1-\alpha$; the outer sensitivity
step then takes the \emph{largest threshold} over the uncertainty set.  
The construction is \emph{joint} because the score and the weight for each candidate tilt are generated by the same
$\boldsymbol\beta$.

This single uncertainty-set threshold is the output of JTS-SCB calibration.  
It is important, however, not to interpret the maximizing tilt as a universally ``worst'' tilt for prediction.  
The supremum compares the scalar
calibration requirements $\widehat q_{\boldsymbol\beta}$, while the conformal
score $S_{\boldsymbol\beta}(x,y)$ itself also changes with
$\boldsymbol\beta$. Hence the tilt requiring the largest calibration threshold
need not generate a prediction region containing those associated with all other
plausible tilts.  
Section~\ref{sec:practical_ts} therefore separates two roles: the supremum
protects against the largest \emph{calibration threshold}, while the subsequent
prediction-set construction via a union over all plausible tilts protects
against the different \emph{score geometries} induced by the uncertainty set.

\paragraph{Equivalent weighted quantile-regression characterization.}
The direct definition of the inner JTS-SCB threshold is the weighted empirical
quantile in \eqref{eq:weighted_empirical_quantile}.  This threshold also has the
standard weighted check-loss/quantile-regression characterization; see, for
example, \citet[Chapter~5]{KoenkerR2005book}.  Its use in conformal calibration
is discussed by \citet{GibbsI2025jrsssb}.  Define the weighted pinball
objective
\begin{equation}
  L_{\boldsymbol\beta}(\theta)
  =
  \sum_{i=1}^n
  W_i(\boldsymbol\beta)\,
  \ell_\alpha\!\left(\theta,S_i(\boldsymbol\beta)\right),
  \qquad
  \ell_\alpha(\theta,S)
  =(1-\alpha)(S-\theta)_+ + \alpha(\theta-S)_+,
  \label{eq:weighted_pinball_loss}
\end{equation}
where $(S-\theta)_+$ denotes $\max(0, S-\theta)$.
For each fixed $\boldsymbol\beta$, the minimizer set
\begin{equation}
  \mathcal Q_{\boldsymbol\beta}
  =
  \operatorname*{arg\,min}_{\theta\in\R}
  L_{\boldsymbol\beta}(\theta)
  \label{eq:jts_quantile_regression}
\end{equation}
is precisely the set of weighted empirical $(1-\alpha)$ quantiles.  Because
this set need not be a singleton for a discrete empirical distribution, the
lower-quantile convention used in \eqref{eq:weighted_empirical_quantile}
selects its left endpoint:
\begin{equation}
  \widehat q_{\boldsymbol\beta}
  =
  \inf \mathcal Q_{\boldsymbol\beta}
  =
  \inf\operatorname*{arg\,min}_{\theta\in\R}
  L_{\boldsymbol\beta}(\theta).
  \label{eq:jts_quantile_regression_lower}
\end{equation}
Equivalently, any $\theta\in\mathcal Q_{\boldsymbol\beta}$ satisfies
\begin{equation}
  \widehat F_{\boldsymbol\beta}(\theta^-)
  \le 1-\alpha
  \le
  \widehat F_{\boldsymbol\beta}(\theta),
  \label{eq:weighted_quantile_condition}
\end{equation}
where
\[
\widehat F_{\boldsymbol\beta}(\theta^-)
=
\frac{\sum_{i=1}^n W_i(\boldsymbol\beta)
\1\{S_i(\boldsymbol\beta)<\theta\}}
{\sum_{i=1}^n W_i(\boldsymbol\beta)}.
\]
The difference
$\widehat F_{\boldsymbol\beta}(\theta)
-\widehat F_{\boldsymbol\beta}(\theta^-)$
is the normalized weight of calibration scores equal to $\theta$.
Thus the main JTS-SCB calibration rule remains the transparent two-step
construction in \eqref{eq:jts_sensitivity_threshold}: first compute the smallest
sufficient weighted quantile $\widehat q_{\boldsymbol\beta}$ for each fixed
tilt, and then take the supremum of these thresholds over $\boldsymbol\beta\in\mathcal B$.
The quantile-regression formulation is an equivalent characterization of the
inner threshold.

The quantile-regression view is used only to characterize the fixed-tilt
calibration threshold.  The outer sensitivity step itself is simpler: JTS-SCB computes
$\widehat q_{\boldsymbol\beta}$ for every plausible tilt and then takes
\[
  \widehat\theta_{\rm JTS}
  =\sup_{\boldsymbol\beta\in\mathcal B}
   \widehat q_{\boldsymbol\beta}.
\]
Thus the method is best viewed as sensitivity analysis over a coupled family of
scores and weights, rather than as a conventional saddle-point optimization.
A subgradient proof of the quantile-regression equivalence,
including the nonunique-minimizer case, is given in Appendix~\ref{app:pinball}.

The formulation also explains why score and weight cannot be varied adversarially
separately.  Replacing $S_i(\boldsymbol\beta)$ by
$\sup_{\boldsymbol\beta}S_i(\boldsymbol\beta)$ and
$W_i(\boldsymbol\beta)$ by
$\sup_{\boldsymbol\beta}W_i(\boldsymbol\beta)$ generally combines two different
tilts and therefore no longer corresponds to any coherent target distribution.
Section~\ref{sec:independent_envelope} returns to this point formally.

Figure~\ref{fig:jts_schematic} summarizes the internal calibration flow of the
practical JTS-SCB procedure.  The exact validity question is deferred until
after the practical prediction set has been fully defined.

\begin{figure*}[t]
\centering
\begin{tikzpicture}[
    >=Latex,
    font=\small,
    box/.style={draw,rounded corners=4pt,thick,align=center,inner sep=5pt,
      minimum height=11mm,fill=white},
    bluebox/.style={box,fill=blue!6,draw=blue!55!black},
    greenbox/.style={box,fill=green!7,draw=green!45!black},
    orangebox/.style={box,fill=orange!10,draw=orange!60!black},
    purplebox/.style={box,fill=purple!7,draw=purple!55!black},
    graybox/.style={box,fill=gray!8,draw=gray!55},
    arr/.style={-{Latex[length=2.5mm]},thick}
]
\node[bluebox,text width=3.5cm] (data) at (0,0) {
\textbf{Source data}\\[2pt]training $\Dtr$\\calibration $\Dcal$};
\node[greenbox,text width=3.5cm,below=7mm of data] (fit) {
\textbf{Fit source predictive model}\\[2pt]
$p_s(y \,|\, x, \Dtr)$\\$\mub(x),\sigb^2(x)$};
\node[bluebox,text width=3.5cm,below=7mm of fit] (budget) {
\textbf{Uncertainty set}\\[2pt]$\boldsymbol\beta\in\mathcal B$};

\node[orangebox,text width=4.5cm] (coupled) at (5.3,0) {
\textbf{For each $\boldsymbol\beta$}\\[3pt]
$S_i(\boldsymbol\beta)=-\log p_{\boldsymbol\beta}(Y_i \,|\, X_i, \Dtr)$\\[2pt]
$W_i(\boldsymbol\beta)=w(Y_i;\boldsymbol\beta)$};
\node[orangebox,text width=4.5cm,below=9mm of coupled] (qr) {
\textbf{Inner step: weighted quantile}\\[3pt]
$\widehat q_{\boldsymbol\beta}
=\inf\{\theta:\widehat F_{\boldsymbol\beta}(\theta)\ge 1-\alpha\}$};

\node[purplebox,text width=4.2cm] (maxstep) at (10.9,0) {
\textbf{Sensitivity aggregation}\\[3pt]
$\widehat\theta_{\rm JTS}
=\sup_{\boldsymbol\beta\in\mathcal B}\widehat q_{\boldsymbol\beta}$};
\node[purplebox,text width=4.2cm,below=9mm of maxstep] (output) {
\textbf{Practical prediction set}\\[3pt]
$\CC^{\rm prac}_{\rm JTS}(x)$};
\node[graybox,text width=4.2cm,below=9mm of output] (nopseudo) {
\textbf{Pseudo-label-free}\\[2pt]
No single target tilt $\widehat{\boldsymbol\beta}$ is estimated.};

\draw[arr] (data)--(fit);
\draw[arr] (fit)--(budget);
\draw[arr] (fit.east)--++(4mm,0)|-(coupled.west);
\draw[arr] (budget.east)--++(4mm,0)|-(coupled.west);
\draw[arr] (coupled)--(qr);
\draw[arr] (qr.east)--++(5mm,0)|-(maxstep.west);
\draw[arr] (maxstep)--(output);
\draw[arr] (output)--(nopseudo);
\end{tikzpicture}
\caption{Schematic view of practical JTS-SCB. A candidate tilt
$\boldsymbol\beta$ jointly determines the Bayesian conformal score and the conformal
importance weight. The inner step computes the weighted empirical quantile
(equivalently, the smallest minimizer of a weighted quantile-regression problem)
and returns the calibration threshold for that fixed tilt. The sensitivity
aggregation then takes the largest threshold over the uncertainty set. JTS-SCB
therefore provides a sensitivity-analysis alternative to estimating one target
tilt from pseudo-labels or predictive samples.}
\label{fig:jts_schematic}
\end{figure*}

\subsection{Gaussian linear specialization and uncertainty budget}
\label{sec:gaussian_ts}

For the main theoretical development, we assume the scalar linear tilt
\[
w(y;\beta)=\exp\{\beta y-A_s(\beta)\},
\]
because it yields an interpretable one-dimensional uncertainty budget and
closed-form prediction sets.
Throughout the validity analysis, the uncertainty set is fixed
independently of the calibration labels and the test point.  Any working
quantities used to parameterize that set, such as $\ms$ and $\vs^2$ below,
are treated as fixed constants conditional on $\Dtr$ or are estimated from an
independent source-reference sample.  They are not estimated from $\Dcal$ when
the theorem is invoked.  In the experiments, we specify the $\beta$-bound directly.

Assume the following working model for the source label marginal:
\begin{equation*}
P_s^Y=\mathcal N(\ms,\vs^2).
\end{equation*}
Then \eqref{eq:label_tilt_gaussian_Z} gives
\begin{equation}
  W_i(\beta)
  =\exp\!\left\{
     \beta(Y_i-\ms)-\frac12\beta^2\vs^2
   \right\}.
  \label{eq:linear_gaussian_weight}
\end{equation}
\begin{remark}[Normalizer cancellation]
\label{rem:weight_normalizer_cancellation}
For every fixed $\beta$,
\[
  W_i(\beta)
  =
  c(\beta)e^{\beta Y_i},
  \qquad
  c(\beta)=\exp\!\left\{-\beta\ms-\frac12\beta^2\vs^2\right\}.
\]
The factor $c(\beta)$ is common to all calibration observations and to the
candidate weight.  It therefore cancels from both the normalized empirical CDF
\eqref{eq:jts_weighted_cdf} and the exact p-value
\eqref{eq:exact_pvalue}.  More generally, any exponential-family normalizer
depending only on the candidate parameter cancels in the same way.  Thus the
weighting procedure only requires the density ratio up to a
parameter-dependent proportionality constant; $\ms$ and $\vs^2$ matter here
primarily for interpreting the Gaussian working model and mapping a
mean-displacement budget to a $\beta$-budget.  This cancellation is also why
the weighting formulas remain well defined in the bimodal-residual experiments,
where the Gaussian source-marginal normalizer is only a working model.
\end{remark}

Under this Gaussian working model, a linear tilt shifts the source label mean by
$\beta\vs^2$. If $\kappa\ge0$
denotes a bound on the absolute mean displacement, the corresponding uncertainty
set is
\begin{equation}
  \BB(\kappa)
  =\{\beta:|\beta|\vs^2\le\kappa\}
  =\left[-\frac{\kappa}{\vs^2},\frac{\kappa}{\vs^2}\right],
  \label{eq:linear_budget}
\end{equation}
with one-sided versions available when the shift direction is known.  In the
sensitivity-analysis convention, $\kappa$ is therefore an analyst-specified
budget controlling the departure class, not an estimate of the realized target
shift \citep{RosenbaumPR2002book,TanZ2006jasa,ZhaoQ2019jrsssb}.

Under the Gaussian posterior predictive model
\eqref{eq:source_gaussian_pred}, the same $\beta$ gives the tilted predictive
\eqref{eq:gaussian_tilt} and score \eqref{eq:gaussian_score}.  Hence every
candidate $\beta$ simultaneously moves the predictive center and reweights the
calibration labels.  This is the concrete Gaussian instance of the coupled pair
in \eqref{eq:jts_coupled_pair}.

\subsection{Practical JTS-SCB prediction set}
\label{sec:practical_ts}

For the scalar specialization, write
$\qhat^{\rm prac}_\beta=\widehat q_\beta$ for the fixed-tilt inner solution in
\eqref{eq:weighted_empirical_quantile}, and write
$\thetahat^{\rm prac}=\widehat\theta_{\rm JTS}$ for the common JTS-SCB
calibration threshold. Thus
\begin{equation}
  \thetahat^{\rm prac}
  =\sup_{\beta\in\BB(\kappa)}\qhat^{\rm prac}_\beta.
  \label{eq:jts_practical_threshold}
\end{equation}
Let
\begin{equation}
  \betahat^*\in
  \operatorname*{arg\,max}_{\beta\in\BB(\kappa)}
  \qhat^{\rm prac}_\beta
  \label{eq:jts_threshold_maximizer}
\end{equation}
be any tilt attaining the supremum when a maximizer exists. This is only a
maximizer of the calibration threshold; it is not an estimate of the true
target tilt. A tempting shortcut is to use only this tilt and return
\begin{equation}
  \CC^*(x)
  =\{y:S_{\betahat^*}(x,y)\le\thetahat^{\rm prac}\}.
  \label{eq:point_tilt_set}
\end{equation}
This shortcut is generally not the JTS-SCB sensitivity set.  The reason is that
$\betahat^*$ is worst only with respect to the \emph{scalar calibration
threshold}.  Changing $\beta$ also changes the score
$S_\beta(x,y)$ and therefore changes the corresponding score-based prediction
region in response space.  Consequently,
\[
  \qhat^{\rm prac}_{\beta_1}
  \le \qhat^{\rm prac}_{\beta_2}
  \quad\not\Longrightarrow\quad
  \{y:S_{\beta_1}(x,y)\le\qhat^{\rm prac}_{\beta_1}\}
  \subseteq
  \{y:S_{\beta_2}(x,y)\le\qhat^{\rm prac}_{\beta_2}\}.
\]
The practical JTS-SCB prediction set therefore uses the common uncertainty-set
threshold but retains \emph{all} plausible score geometries:
\begin{equation}
  \CC^{\rm prac}_{\rm JTS}(x)
  =\bigcup_{\beta\in\BB(\kappa)}
   \{y:S_\beta(x,y)\le\thetahat^{\rm prac}\}.
  \label{eq:jts_practical_set}
\end{equation}
Thus the supremum in \eqref{eq:jts_practical_threshold} protects against the
largest calibration requirement, while the union in
\eqref{eq:jts_practical_set} protects against uncertainty in the score-induced
prediction geometry.  This is the practical JTS-SCB prediction set used throughout the
main experiments.  The point-tilt set \eqref{eq:point_tilt_set} is retained
only as an efficiency diagnostic; by construction,
\[
  \CC^*(x)\subseteq\CC^{\rm prac}_{\rm JTS}(x).
\]

For the Gaussian score this distinction is especially transparent. At a
common threshold, the fixed-$\beta$ prediction region is an interval centered at
$\mub(x)+\beta\sigb^2(x)$.  Hence a tilt different from $\betahat^*$ can move
the accepted interval farther left or right even if it requires a smaller
calibration threshold.  Selecting only $\betahat^*$ would therefore discard
part of the response-space protection supplied by the uncertainty set.

For comparison only, one can also retain a separate threshold for each tilt,
\begin{equation}
  \CC^{\rm prac}_\beta(x)
  =\{y:S_\beta(x,y)\le\qhat^{\rm prac}_\beta\},
  \label{eq:fixed_beta_practical_set}
\end{equation}
and form the beta-specific union
\begin{equation}
  \CC^{\rm prac,union}_{\rm JTS}(x)
  =\bigcup_{\beta\in\BB(\kappa)}\CC^{\rm prac}_\beta(x).
  \label{eq:beta_specific_practical_union}
\end{equation}
We use this latter object only to diagnose how much conservatism is introduced
by replacing the beta-specific thresholds by the single common JTS threshold.
It is not a second practical JTS-SCB method. The common-threshold set can
be more conservative, but it has a closed form under Gaussian Conformal Bayes.

\bigskip
\begin{proposition}[The common-threshold set is a practical envelope]
\label{prop:practical_envelope}
For every $x$,
\[
  \CC^{\rm prac,union}_{\rm JTS}(x)
  \subseteq
  \CC^{\rm prac}_{\rm JTS}(x).
\]
\end{proposition}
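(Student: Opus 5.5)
The argument is a direct pointwise comparison of thresholds, carried out one tilt at a time; no Gaussian structure is needed. Fix $x$ and take any $y\in\CC^{\rm prac,union}_{\rm JTS}(x)$. By the definition of the beta-specific union in \eqref{eq:beta_specific_practical_union}, there is some $\beta_0\in\BB(\kappa)$ with $y\in\CC^{\rm prac}_{\beta_0}(x)$. By \eqref{eq:fixed_beta_practical_set}, this means $S_{\beta_0}(x,y)\le\qhat^{\rm prac}_{\beta_0}$.

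The key step is to compare this tilt-specific threshold with the common one. By \eqref{eq:jts_practical_threshold}, $\thetahat^{\rm prac}=\sup_{\beta\in\BB(\kappa)}\qhat^{\rm prac}_\beta$. Since $\beta_0\in\BB(\kappa)$, it follows that $\qhat^{\rm prac}_{\beta_0}\le\thetahat^{\rm prac}$. Chaining the two inequalities gives
\[
S_{\beta_0}(x,y)\le\thetahat^{\rm prac}.
\]
Hence $y$ lies in the $\beta_0$-slice $\{y':S_{\beta_0}(x,y')\le\thetahat^{\rm prac}\}$ of the union in \eqref{eq:jts_practical_set}. Therefore $y\in\CC^{\rm prac}_{\rm JTS}(x)$. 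As $y$ was arbitrary, this gives the inclusion. Equivalently, each slice satisfies $\CC^{\rm prac}_\beta(x)\subseteq\{y:S_\beta(x,y)\le\thetahat^{\rm prac}\}$, and taking unions over the same index set $\BB(\kappa)$ on both sides preserves inclusion.

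The only point needing care, and the nearest thing to an obstacle, is degenerate thresholds. Each $\qhat^{\rm prac}_\beta$ is finite, because $\widehat F_\beta$ is a step function that reaches $1$ at the largest calibration score. The supremum $\thetahat^{\rm prac}$ may equal $+\infty$ if $\BB(\kappa)$ is unbounded or the map $\beta\mapsto\qhat^{\rm prac}_\beta$ is unbounded. In that case the right-hand slices are all of $\R$ and the inclusion holds trivially. The comparison $\qhat^{\rm prac}_{\beta_0}\le\thetahat^{\rm prac}$ is valid in the extended reals, so the argument above covers this case without modification. No maximizer $\betahat^*$ needs to exist, because we compare against the supremum directly.
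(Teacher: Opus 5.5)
Your proposal is correct and follows essentially the same route as the paper's proof: both use $\qhat^{\rm prac}_\beta\le\thetahat^{\rm prac}$ for each $\beta\in\BB(\kappa)$ to get a slice-wise inclusion, then take the union over $\beta$. Your extra remark on finiteness of each $\qhat^{\rm prac}_\beta$ and the possibly infinite supremum is a harmless refinement that the paper leaves implicit.
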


\begin{proof}
For every $\beta\in\BB(\kappa)$,
$\qhat^{\rm prac}_\beta\le\thetahat^{\rm prac}$.  Therefore the fixed-$\beta$
set using $\qhat^{\rm prac}_\beta$ is contained in the fixed-$\beta$ set using
$\thetahat^{\rm prac}$.  Taking the union over $\beta$ proves the claim.
\end{proof}

\bigskip
\begin{proposition}[Pathwise monotonicity under nested uncertainty sets]
\label{prop:budget_monotonicity}
Let $\mathcal B_1\subseteq\mathcal B_2$ be two uncertainty sets, and let
$\widehat\theta_1,\widehat\theta_2$ and
$\CC^{\rm prac}_{{\rm JTS},1}(x),\CC^{\rm prac}_{{\rm JTS},2}(x)$ denote the
corresponding practical common-threshold constructions.  Then, for every
realization of the calibration data and every test input $x$,
\[
  \widehat\theta_1\le\widehat\theta_2,
  \qquad
  \CC^{\rm prac}_{{\rm JTS},1}(x)
  \subseteq
  \CC^{\rm prac}_{{\rm JTS},2}(x).
\]
Consequently, the marginal coverage probability is nondecreasing as the
uncertainty set expands.
\end{proposition}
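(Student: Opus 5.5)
The plan is to use only the definitions \eqref{eq:jts_sensitivity_threshold} and \eqref{eq:jts_practical_set}, together with two elementary facts: a supremum over a larger index set is at least as large, and sublevel sets grow with the threshold. Nothing probabilistic enters until the final coverage statement, and that statement follows from the pathwise inclusion.

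\textbf{Step 1 (threshold monotonicity).} Fix a realization of $\Dtr$ and $\Dcal$. For each $\boldsymbol\beta$, the fixed-tilt quantile $\widehat q_{\boldsymbol\beta}$ in \eqref{eq:weighted_empirical_quantile} depends only on $\boldsymbol\beta$ and the data, not on which uncertainty set contains $\boldsymbol\beta$. Hence $\{\widehat q_{\boldsymbol\beta}:\boldsymbol\beta\in\mathcal B_1\}\subseteq\{\widehat q_{\boldsymbol\beta}:\boldsymbol\beta\in\mathcal B_2\}$, and therefore
\[
\widehat\theta_1=\sup_{\boldsymbol\beta\in\mathcal B_1}\widehat q_{\boldsymbol\beta}\le\sup_{\boldsymbol\beta\in\mathcal B_2}\widehat q_{\boldsymbol\beta}=\widehat\theta_2 .
\]
This holds in the extended reals, so it remains valid if a supremum is $+\infty$. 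One minor point should be noted: $\widehat q_{\boldsymbol\beta}$ is finite whenever $\alpha>0$, because $\widehat F_{\boldsymbol\beta}$ reaches $1$ at the largest calibration score. If $\mathcal B_1$ is empty, its supremum is $-\infty$ by convention and the inequality is trivial.

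\textbf{Step 2 (set inclusion).} Take $y\in\CC^{\rm prac}_{{\rm JTS},1}(x)$. Then there is some $\boldsymbol\beta\in\mathcal B_1$ with $S_{\boldsymbol\beta}(x,y)\le\widehat\theta_1$. By Step 1, $S_{\boldsymbol\beta}(x,y)\le\widehat\theta_2$. Since $\boldsymbol\beta\in\mathcal B_2$, this index appears in the union defining $\CC^{\rm prac}_{{\rm JTS},2}(x)$, so $y$ belongs to that set.

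Two effects are at work here, and both move in the same direction. First, the common threshold weakly increases. Second, the union runs over more score geometries. This is the only point needing care, because the score $S_{\boldsymbol\beta}$ changes with $\boldsymbol\beta$. The argument works because we keep the same witness $\boldsymbol\beta$ when passing from $\mathcal B_1$ to $\mathcal B_2$, so no comparison across different score geometries is ever required.

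\textbf{Step 3 (coverage).} Step 2 holds for every realization of the calibration data and every $x$. It therefore holds at the random test input $X_{n+1}$, giving the event inclusion
\[
\{Y_{n+1}\in\CC^{\rm prac}_{{\rm JTS},1}(X_{n+1})\}\subseteq\{Y_{n+1}\in\CC^{\rm prac}_{{\rm JTS},2}(X_{n+1})\}.
\]
Monotonicity of probability, under whatever joint law governs the calibration data and the target test point, then gives nondecreasing marginal coverage.

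The main obstacle is only bookkeeping. The uncertainty sets must be fixed independently of $\Dcal$ and the test point, as stipulated in Section~\ref{sec:gaussian_ts}, so that both constructions are measurable functions of the same data. Then the probability comparison is between events on a common probability space.
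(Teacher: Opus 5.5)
Your proposal is correct and follows essentially the same argument as the paper. The supremum over the larger set gives $\widehat\theta_1\le\widehat\theta_2$, keeping the same witness $\boldsymbol\beta\in\mathcal B_1\subseteq\mathcal B_2$ gives the pathwise inclusion, and monotonicity of probability then yields the coverage claim; your extra remarks on extended-real suprema and measurability are harmless refinements that the paper's proof does not need.
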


\begin{proof}
Because $\mathcal B_1\subseteq\mathcal B_2$, the supremum defining the JTS
threshold is taken over a larger set, so
\[
  \widehat\theta_1
  =\sup_{\boldsymbol\beta\in\mathcal B_1}\widehat q_{\boldsymbol\beta}
  \le
  \sup_{\boldsymbol\beta\in\mathcal B_2}\widehat q_{\boldsymbol\beta}
  =\widehat\theta_2.
\]
If $y\in\CC^{\rm prac}_{{\rm JTS},1}(x)$, then there exists
$\boldsymbol\beta\in\mathcal B_1$ such that
$S_{\boldsymbol\beta}(x,y)\le\widehat\theta_1$.  The same
$\boldsymbol\beta$ belongs to $\mathcal B_2$ and
$\widehat\theta_1\le\widehat\theta_2$, hence
$y\in\CC^{\rm prac}_{{\rm JTS},2}(x)$.  This proves pathwise set inclusion;
taking probabilities gives the monotonicity of marginal coverage.
\end{proof}

The fixed-tilt Gaussian predictive shift and score used below are inherited from
earlier continuous-label-shift Conformal Bayes constructions
\citep{Choi2026eiml}.

\bigskip
\begin{proposition}[Closed-form practical interval under Gaussian Conformal Bayes]
\label{prop:closed_form_interval}
Let $\BB(\kappa)=[\beta_{\rm lo},\beta_{\rm hi}]$ and assume the Gaussian score
\eqref{eq:gaussian_score} with $\sigb^2(x)>0$. Define
\[
  r(x)
  =
  \thetahat^{\rm prac}
  -\frac12\log\{2\pi\sigb^2(x)\}.
\]
If $r(x)<0$, then
\[
  \CC^{\rm prac}_{\rm JTS}(x)=\emptyset.
\]
If $r(x)\ge0$, define the fixed-tilt half-width
\[
  h(x)
  =
  \sqrt{2\sigb^2(x)\,r(x)}.
\]
Then
\begin{equation}
  \CC^{\rm prac}_{\rm JTS}(x)
  =
  \bigl[
  \mub(x)+\beta_{\rm lo}\sigb^2(x)-h(x),\;
  \mub(x)+\beta_{\rm hi}\sigb^2(x)+h(x)
  \bigr].
  \label{eq:union_interval}
\end{equation}
Equivalently, for each fixed $\beta\in\BB(\kappa)$,
\[
  \{y:S_\beta(x,y)\le\thetahat^{\rm prac}\}
  =
  \bigl[
  \mub(x)+\beta\sigb^2(x)-h(x),\;
  \mub(x)+\beta\sigb^2(x)+h(x)
  \bigr],
\]
and $\CC^{\rm prac}_{\rm JTS}(x)$ is the union of these intervals over
$\beta\in[\beta_{\rm lo},\beta_{\rm hi}]$.

Consequently, the width of the practical JTS-SCB interval is
\[
  2h(x)
  +
  (\beta_{\rm hi}-\beta_{\rm lo})\sigb^2(x),
\]
where the first term is the fixed-tilt interval width and the second is the
additional width induced by sensitivity over the tilt uncertainty set.
\end{proposition}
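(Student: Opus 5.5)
The plan is to work directly from the definition \eqref{eq:jts_practical_set}, which writes $\CC^{\rm prac}_{\rm JTS}(x)$ as the union over $\beta\in\BB(\kappa)$ of the fixed-tilt sublevel sets $\{y:S_\beta(x,y)\le\thetahat^{\rm prac}\}$ at the common threshold. First we characterize each fixed-tilt set, and then we take the union over the interval $[\beta_{\rm lo},\beta_{\rm hi}]$.

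\textbf{Fixed-tilt sets.} Fix $\beta$ and substitute the Gaussian score \eqref{eq:gaussian_score}. Subtracting $\frac12\log\{2\pi\sigb^2(x)\}$ from both sides, the condition $S_\beta(x,y)\le\thetahat^{\rm prac}$ becomes
\[
\frac{(y-\mub(x)-\beta\sigb^2(x))^2}{2\sigb^2(x)}\le r(x).
\]
Since $\sigb^2(x)>0$, the left-hand side is nonnegative. Two cases follow.
\begin{itemize}
\item If $r(x)<0$, no $y$ satisfies the inequality for any $\beta$, so every fixed-tilt set is empty and so is their union.
\item If $r(x)\ge0$, multiply by $2\sigb^2(x)$ and take square roots. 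The condition is then $|y-\mub(x)-\beta\sigb^2(x)|\le h(x)$, which gives the stated closed interval centered at $c(\beta)=\mub(x)+\beta\sigb^2(x)$.
\end{itemize}
The key point is that $h(x)$ does not depend on $\beta$. This is because the common threshold $\thetahat^{\rm prac}$ is used, and the linear tilt leaves the predictive variance unchanged.

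\textbf{Union of translated intervals.} The center $c(\beta)$ is continuous and strictly increasing in $\beta$, since $\sigb^2(x)>0$. So the sets are intervals of fixed half-width $h(x)$ whose centers sweep the interval $[c(\beta_{\rm lo}),c(\beta_{\rm hi})]$.
\begin{itemize}
\item \emph{Inclusion ``$\subseteq$''.} If $|y-c(\beta)|\le h(x)$ for some $\beta$, then $y\ge c(\beta)-h(x)\ge c(\beta_{\rm lo})-h(x)$, and symmetrically $y\le c(\beta_{\rm hi})+h(x)$.
\item \emph{Inclusion ``$\supseteq$''.} Take $y$ in the claimed interval and project it onto the center range, setting $c^\star=\min\{\max\{y,c(\beta_{\rm lo})\},c(\beta_{\rm hi})\}$. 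By continuity and the intermediate value theorem, $c^\star=c(\beta^\star)$ for some $\beta^\star\in[\beta_{\rm lo},\beta_{\rm hi}]$. We then check $|y-c^\star|\le h(x)$ in three cases:
 \begin{itemize}
 \item if $y$ lies in the center range, the distance is $0$;
 \item if $y$ lies below the range, the distance is $c(\beta_{\rm lo})-y\le h(x)$;
 \item if $y$ lies above the range, the distance is $y-c(\beta_{\rm hi})\le h(x)$.
 \end{itemize}
\end{itemize}
This proves \eqref{eq:union_interval}.

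\textbf{Width.} The width is the difference of the endpoints, $2h(x)+(\beta_{\rm hi}-\beta_{\rm lo})\sigb^2(x)$.

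\textbf{Main obstacle.} The only real subtlety is the ``$\supseteq$'' direction, because it needs the uncertainty set to be a connected interval. For a disconnected $\BB(\kappa)$, gaps larger than $2h(x)$ could appear, and the projection argument above handles this by using the intermediate value theorem on $[\beta_{\rm lo},\beta_{\rm hi}]$. A secondary point to state explicitly is that $\thetahat^{\rm prac}$ is finite, which holds because each $\qhat^{\rm prac}_\beta$ is a finite weighted order statistic. We would either assume that the supremum is finite or note that otherwise the set is all of $\R$.
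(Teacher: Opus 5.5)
Your proposal is correct and follows the paper's proof. Both arguments reduce each fixed-tilt sublevel set to an interval centered at $\mub(x)+\beta\sigb^2(x)$ with $\beta$-independent half-width $h(x)$ (or the empty set when $r(x)<0$), then union these translated intervals over $[\beta_{\rm lo},\beta_{\rm hi}]$; your projection and intermediate-value argument just spells out the union step that the paper states in one line. Your finiteness caveat can be proved rather than assumed: each $\qhat^{\rm prac}_\beta$ equals some $S_i(\beta)$, and these scores are continuous on the compact tilt interval, so $\thetahat^{\rm prac}\le\max_i\sup_{\beta}S_i(\beta)<\infty$.
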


\begin{proof}
For fixed $\beta$, the inequality
$S_\beta(x,y)\le\thetahat^{\rm prac}$ is equivalent, when $r(x)\ge0$, to
\[
  |y-\mub(x)-\beta\sigb^2(x)|\le h(x).
\]
Thus the fixed-$\beta$ region is an interval whose center is affine in $\beta$
and whose half-width $h(x)$ is independent of $\beta$.  Unioning these moving
intervals over $\beta\in[\beta_{\rm lo},\beta_{\rm hi}]$ gives
\eqref{eq:union_interval}; subtracting the endpoints gives the stated width.
If $r(x)<0$, even the minimum Gaussian score exceeds
$\thetahat^{\rm prac}$, so the set is empty.
\end{proof}

Under the Gaussian score, the point-tilt diagnostic
\eqref{eq:point_tilt_set} has the closed form
\begin{equation}
  \CC^*(x)
  =\bigl[
     \mub(x)+\betahat^*\sigb^2(x)- h(x),\;
     \mub(x)+\betahat^*\sigb^2(x)+ h(x)
   \bigr].
  \label{eq:point_interval}
\end{equation}
It is narrower because it keeps only the score geometry associated with the
threshold-maximizing tilt; it is used only as an efficiency diagnostic, not as
the practical JTS-SCB prediction set.

\subsection{Exact candidate-weighted counterpart as a validity benchmark}
\label{sec:oracle_ts}

The practical JTS-SCB set above is constructed using the calibration-only
threshold $\thetahat^{\rm prac}$. Because it does not include the candidate
response's own conformal importance weight, it does not inherit the exact
finite-sample validity guarantee of weighted conformal prediction. To examine
what exact validity would require, we therefore introduce a separate
candidate-weighted conformal construction. For each fixed
$\beta\in\BB(\kappa)$, define
\begin{equation}
  \CC^{\rm exact}_\beta(x)
  =
  \{y:\pi_\beta(y)>\alpha\},
  \label{eq:jts_fixed_exact_set}
\end{equation}
where $\pi_\beta(y)$ is the fixed-tilt weighted conformal $p$-value in
\eqref{eq:exact_pvalue}. The corresponding JTS exact counterpart is obtained
by taking the union over all tilts in the uncertainty set:
\begin{equation}
  \CC^{\rm exact}_{\rm JTS}(x)
  =
  \bigcup_{\beta\in\BB(\kappa)}
  \CC^{\rm exact}_\beta(x)
  =
  \bigcup_{\beta\in\BB(\kappa)}
  \{y:\pi_\beta(y)>\alpha\}.
  \label{eq:jts_exact_union}
\end{equation}
This construction serves as a validity benchmark rather than as an alternative
practical prediction set.

\bigskip
\begin{assumption}[Exponential label shift and sample splitting]
\label{ass:budget}
The training data $\Dtr$ is independent of the calibration sample
$\Dcal=\{(X_i,Y_i)\}_{i=1}^n$ and the test point
$(X_{\rm test},Y_{\rm test})$.  Conditional on $\Dtr$ and on any independent
source-reference data used to specify working constants, both the uncertainty
set $\BB(\kappa)$ and the score family
$\{S_\beta:\beta\in\BB(\kappa)\}$ are fixed and measurable; in
particular, the region is not tuned using the calibration labels.  The calibration
points are i.i.d.\ from $P_s$, the test point is drawn from $P_t$, and
\[
  \frac{dP_t}{dP_s}(x,y)
  =\frac{p_t(y)}{p_s(y)}
  =w(y;\betastar)
\]
for some $\betastar\in\BB(\kappa)$.
\end{assumption}

\bigskip
The finite-sample validity used here is inherited from the general
weighted-exchangeability result of
\citet[Theorem~2]{TibshiraniR2019neurips}; see also
\citet[Theorem~2]{PodkopaevA2021uai} for the discrete-label-shift analogue with
candidate-label-dependent weights.  The additional JTS step is only that a
prespecified uncertainty-set union contains the exact set corresponding to the
true tilt.

\begin{corollary}[Exact uncertainty-set coverage inherited from weighted exchangeability]
\label{cor:exact_coverage}
Under Assumption~\ref{ass:budget},
\[
  \Prob\{Y_{\rm test}\in
  \CC^{\rm exact}_{\rm JTS}(X_{\rm test})\}\ge1-\alpha.
\]
\end{corollary}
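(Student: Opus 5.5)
The plan is to reduce the claim to the fixed-tilt weighted conformal guarantee at the true tilt $\betastar$ and then use monotonicity of the union. Since Assumption~\ref{ass:budget} places $\betastar$ in $\BB(\kappa)$, the definition \eqref{eq:jts_exact_union} gives the pathwise inclusion
\[
  \CC^{\rm exact}_{\betastar}(x)\subseteq\CC^{\rm exact}_{\rm JTS}(x)
  \qquad\text{for every }x,
\]
so
\[
\Prob\{Y_{\rm test}\in\CC^{\rm exact}_{\rm JTS}(X_{\rm test})\}\ge\Prob\{Y_{\rm test}\in\CC^{\rm exact}_{\betastar}(X_{\rm test})\}.
\]
It then suffices to show that the latter probability is at least $1-\alpha$. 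One measurability point should be noted: $\CC^{\rm exact}_{\rm JTS}$ is an uncountable union. However, the coverage event contains the measurable event $\{\pi_{\betastar}(Y_{\rm test})>\alpha\}$. We can therefore bound the (inner) probability from below without establishing measurability of the full union, or else we can invoke continuity of $\beta\mapsto\pi_\beta(y)$ on the Gaussian family to restrict to a countable dense subset.

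For the fixed-tilt step I would condition on $\Dtr$ and on any independent source-reference data. Under Assumption~\ref{ass:budget}, the score $S_{\betastar}$ and the weight function $w(\cdot;\betastar)$ are then deterministic, and the calibration points are i.i.d.\ from $P_s$, independent of the test point drawn from $P_t$. The assumption also gives $dP_t/dP_s(x,y)=w(y;\betastar)$. This is exactly the covariate-shift-type setting of \citet[Theorem~2]{TibshiraniR2019neurips}, applied to the full observation $Z=(X,Y)$ with likelihood ratio $w(Z)=w(y;\betastar)$. That theorem shows that the weighted p-value \eqref{eq:exact_pvalue} evaluated at $y=Y_{\rm test}$, with the test point's own weight included, satisfies $\Prob\{\pi_{\betastar}(Y_{\rm test})\le\alpha\}\le\alpha$. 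Equivalently, $Y_{\rm test}\in\CC^{\rm exact}_{\betastar}(X_{\rm test})$ with probability at least $1-\alpha$. Integrating over the conditioning variables then gives the unconditional statement.

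The only step needing care is checking that the hypotheses of the weighted-exchangeability theorem hold. For $Z_1,\dots,Z_n\sim P_s$ and $Z_{n+1}\sim P_t$, the joint law is weighted exchangeable with weight functions $w_i\equiv1$ for $i\le n$ and $w_{n+1}=w(\cdot;\betastar)$. I would verify this by writing the joint density as $\prod_{i\le n}p_s(z_i)\,w(y_{n+1};\betastar)p_s(z_{n+1})$, which is a symmetric function times $w(y_{n+1};\betastar)$. The weighted-exchangeability argument then shows that, conditional on the unordered multiset of $Z$'s, the test point equals each $Z_i$ with probability proportional to $w(Y_i;\betastar)$. Given this, the p-value is super-uniform, and ties in scores are handled by the ``$\ge$'' convention in \eqref{eq:exact_pvalue}. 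The score must not depend on calibration data, and this is guaranteed by sample splitting. I would also remark that the proportionality constant in $w$ cancels, by Remark~\ref{rem:weight_normalizer_cancellation}, so knowledge of $w$ only up to a constant suffices. The main potential obstacle is not technical; it is confirming that $\BB(\kappa)$ is chosen independently of $\Dcal$, which Assumption~\ref{ass:budget} supplies directly.
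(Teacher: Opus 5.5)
Your proposal is correct and follows the paper's proof: the inclusion $\CC^{\rm exact}_{\beta^\star}(x)\subseteq\CC^{\rm exact}_{\rm JTS}(x)$, conditioning on the training and reference data, applying \citet[Theorem~2]{TibshiraniR2019neurips} to $Z=(X,Y)$ with ratio $w(Y;\beta^\star)$, and then averaging. One aside is wrong: $\beta\mapsto\pi_\beta(y)$ is not continuous, since the indicators $\1\{S_i(\beta)\ge S_{n+1}(y;\beta)\}$ jump at score ties, so the countable-dense-subset alternative fails as stated; your inner-probability bound via the measurable event $\{\pi_{\beta^\star}(Y_{\rm test})>\alpha\}$ already suffices.
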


\begin{proof}
Since $\betastar\in\BB(\kappa)$,
$\CC^{\rm exact}_{\betastar}(x)\subseteq\CC^{\rm exact}_{\rm JTS}(x)$ for every
$x$.  Let $\mathcal H$ denote the information generated by $\Dtr$ and by any
independent source-reference data used to specify the fixed uncertainty set and
score family.  Conditional on $\mathcal H$, apply the weighted-exchangeability
rank argument of \citet[Theorem~2]{TibshiraniR2019neurips} to the full
observation $Z=(X,Y)$, whose source-to-target Radon--Nikodym ratio is
$w(Y;\betastar)$.  Because this weight depends on the candidate response, the
exact fixed-tilt set is obtained by candidate-wise inversion of
\eqref{eq:exact_pvalue}.  Weighted exchangeability gives
\[
  \Prob\{Y_{\rm test}\in
  \CC^{\rm exact}_{\betastar}(X_{\rm test})\mid\mathcal H\}\ge1-\alpha.
\]
Averaging over $\mathcal H$ and using the set inclusion above proves the
corollary.
\end{proof}

\bigskip
\begin{remark}[Extension to vector-valued tilt families]
\label{rem:vector_tilt}
The inclusion argument underlying the exact JTS construction is not restricted
to a scalar tilt parameter. Consider a parametric family
$\{w(\cdot;\boldsymbol\beta):\boldsymbol\beta\in\mathcal B\}$, where
$\boldsymbol\beta$ may be vector-valued, and suppose that the true density ratio
is represented by some $\boldsymbol\beta^\star\in\mathcal B$. If the exact
fixed-$\boldsymbol\beta^\star$ weighted conformal set has finite-sample coverage
at level $1-\alpha$, then
\[
  \CC^{\rm exact}_{\boldsymbol\beta^\star}(x)
  \subseteq
  \bigcup_{\boldsymbol\beta\in\mathcal B}
  \CC^{\rm exact}_{\boldsymbol\beta}(x),
\]
so the uncertainty-set union inherits the same coverage guarantee whenever
$\boldsymbol\beta^\star\in\mathcal B$.

The quadratic tilt considered later takes
$\boldsymbol\beta=(\beta_1,\beta_2)$ with sufficient statistics $(y,y^2)$,
\[
  w(y;\beta_1,\beta_2)
  =
  \exp\{\beta_1 y+\beta_2 y^2-A_s(\beta_1,\beta_2)\},
\]
and contains the scalar linear tilt as the special case $\beta_2=0$.
The admissible uncertainty set must also satisfy the relevant normalizability
conditions. For a Gaussian source marginal with variance $\vs^2$,
normalizability requires
\[
  \beta_2<\frac{1}{2\vs^2},
\]
while for a Gaussian source predictive distribution with variance
$\sigb^2(x)$, the corresponding tilted predictive distribution requires
\[
  1-2\beta_2\sigb^2(x)>0.
\]
\end{remark}

Starting from the standard candidate-weighted conformal construction under
weighted exchangeability
\citep{TibshiraniR2019neurips,PodkopaevA2021uai}, the following elementary
reduction isolates the extreme-score regime created by response-dependent
candidate weights.  The reduction itself is simple; its role here is to expose
the tail mechanism used in the subsequent linear, quadratic, and clipped
results.

\bigskip
\begin{lemma}[Tail acceptance criterion for candidate-weighted exact conformal prediction]
\label{lem:candidate_tail_criterion}
Fix a tilt $\beta$ and define
\[
  C_\beta=\sum_{i=1}^n W_i(\beta),
  \qquad
  K_\beta=\frac{\alpha}{1-\alpha}\,C_\beta .
\]
For any candidate response $y$ whose score exceeds every calibration score,
\[
  S_{n+1}(y;\beta)
  >
  \max_{1\le i\le n} S_i(\beta),
\]
all calibration indicators in the exact weighted conformal $p$-value vanish,
so that
\begin{equation}
  \pi_\beta(y)
  =
  \frac{W_{n+1}(y;\beta)}
       {C_\beta+W_{n+1}(y;\beta)}.
  \label{eq:tail_pvalue_identity}
\end{equation}
Consequently,
\begin{equation}
  \pi_\beta(y)>\alpha
  \quad\Longleftrightarrow\quad
  W_{n+1}(y;\beta)>K_\beta
  =
  \frac{\alpha}{1-\alpha}\,C_\beta .
  \label{eq:tail_acceptance_criterion}
\end{equation}
Therefore, in any score tail in which
$S_{n+1}(y;\beta)>\max_i S_i(\beta)$ eventually holds,
a candidate weight satisfying $W_{n+1}(y;\beta)\to\infty$ implies eventual
acceptance of sufficiently extreme candidates, whereas
$W_{n+1}(y;\beta)\to0$ implies their eventual rejection.
The same argument applies to any nonnegative weighting function, including
the clipped weights introduced below, with the corresponding calibration and
candidate weights substituted above.
\end{lemma}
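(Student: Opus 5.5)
The plan is to work directly from the definition of the exact fixed-tilt $p$-value in \eqref{eq:exact_pvalue} and proceed in three short steps: identity, equivalence, tail consequence.

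\emph{Step 1: the identity.} Suppose $S_{n+1}(y;\beta)>\max_i S_i(\beta)$. Then for every $i$ we have $S_i(\beta)<S_{n+1}(y;\beta)$, so each indicator $\1\{S_i(\beta)\ge S_{n+1}(y;\beta)\}$ equals zero. The numerator of \eqref{eq:exact_pvalue} therefore reduces to the candidate term $W_{n+1}(y;\beta)$. The denominator is $C_\beta+W_{n+1}(y;\beta)$ by the definition of $C_\beta$. This gives \eqref{eq:tail_pvalue_identity}.

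\emph{Step 2: the equivalence.} I will rearrange $\pi_\beta(y)>\alpha$. This needs a positive denominator. Since all weights are nonnegative, that holds whenever $C_\beta+W_{n+1}(y;\beta)>0$, which is automatic for exponential-family weights because they are strictly positive. I will note the degenerate case $C_\beta=W_{n+1}=0$ separately, or exclude it by the positivity convention. Multiplying through gives
\[
W_{n+1}>\alpha\bigl(C_\beta+W_{n+1}\bigr),
\]
that is, $(1-\alpha)W_{n+1}>\alpha C_\beta$. Dividing by $1-\alpha>0$ (using $\alpha\in(0,1)$) gives \eqref{eq:tail_acceptance_criterion}.

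\emph{Step 3: the tail consequence.} Fix the calibration sample, so that $C_\beta$, and hence $K_\beta$, is a finite deterministic constant that does not depend on $y$. Take a sequence or direction of candidates along which eventually $S_{n+1}(y;\beta)>\max_i S_i(\beta)$. On that eventual region the criterion of Step 2 applies exactly.
\begin{itemize}
\item If $W_{n+1}(y;\beta)\to\infty$, it eventually exceeds the fixed $K_\beta<\infty$, so these candidates are eventually accepted.
\item If $W_{n+1}(y;\beta)\to0$, the conclusion needs $K_\beta>0$, that is, $C_\beta>0$. This holds for strictly positive weights, or more generally whenever some calibration weight is positive. Then $W_{n+1}$ eventually falls below $K_\beta$, so these candidates are eventually rejected.
\end{itemize}

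\emph{Generalization.} The argument uses only the nonnegativity of the weights and the form of the $p$-value. It therefore transfers verbatim to any nonnegative weighting function, including clipped weights, by substituting the corresponding calibration and candidate weights.

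\emph{Main obstacle.} There is no real analytic difficulty. The only point needing care is the bookkeeping of degenerate cases: $C_\beta=0$ in the vanishing-weight direction, and a possibly zero denominator in Step 2. I will handle these by stating the positivity condition explicitly, and by noting that in the exponential-tilt setting it holds automatically.
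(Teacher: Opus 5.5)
Your proposal is correct and follows the same route as the paper's proof: all calibration indicators vanish, you solve $W_{n+1}/(C_\beta+W_{n+1})>\alpha$ for $W_{n+1}$, and you read off eventual acceptance or rejection against the fixed constant $K_\beta$. Your explicit caveat that the rejection direction needs $K_\beta>0$ (i.e.\ some positive calibration weight) is a small refinement that the paper leaves implicit in its ``any nonnegative weighting function'' remark; it holds automatically for exponential-tilt and clipped weights.
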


\begin{proof}
Fix $\beta$.  When
$S_{n+1}(y;\beta)>\max_i S_i(\beta)$, every indicator
$\1\{S_i(\beta)\ge S_{n+1}(y;\beta)\}$ equals zero, so
\eqref{eq:tail_pvalue_identity} follows directly from the candidate-weighted
$p$-value.  Solving
\[
  \frac{W_{n+1}(y;\beta)}{C_\beta+W_{n+1}(y;\beta)}>\alpha
\]
gives $W_{n+1}(y;\beta)>K_\beta$, which is
\eqref{eq:tail_acceptance_criterion}.  If the score-tail condition holds
eventually, the eventual-acceptance and eventual-rejection statements follow
immediately according as the candidate weight tends to infinity or to zero.
\end{proof}

\paragraph{Key insight.}
The lemma identifies the mechanism governing the extreme tails of the exact
candidate-weighted conformal set. Once a candidate response is more
nonconforming than every calibration point, all calibration contributions
vanish from the numerator of the conformal $p$-value. The candidate is then
accepted if and only if its own importance weight exceeds the finite threshold
$K_\beta=\alpha C_\beta/(1-\alpha)$.
This behavior is counterintuitive from the usual conformal perspective.
An increasingly extreme candidate typically has an increasingly large
nonconformity score, suggesting that it should eventually be rejected.
However, in the exact weighted construction the candidate response also
contributes its own importance weight. If this weight grows sufficiently fast
in the same tail, it can dominate the conformal $p$-value and cause even
arbitrarily extreme candidates to be accepted. Conversely, if the candidate
weight vanishes in an extreme score tail, then sufficiently extreme candidates
are eventually rejected. Thus the tail behavior of the importance ratio,
rather than the score alone, determines whether the exact prediction set is
bounded.

\bigskip
\begin{corollary}[Tail decay versus tail growth]
\label{cor:tail_decay_growth}
Fix a tilt $\beta$, and let
\[
  C_\beta=\sum_{i=1}^n W_i(\beta),
  \qquad
  K_\beta=\frac{\alpha}{1-\alpha}\,C_\beta.
\]
Suppose first that
\[
  S_{n+1}(y;\beta)\to\infty
  \qquad\text{as } y\to+\infty.
\]
If
\[
  \limsup_{y\to+\infty} W_{n+1}(y;\beta)<K_\beta,
\]
then there exists a finite $R$ such that
\[
  \CC^{\rm exact}_\beta(x)\cap(R,\infty)=\varnothing.
\]
If instead
\[
  \liminf_{y\to+\infty} W_{n+1}(y;\beta)>K_\beta,
\]
then there exists a finite $R$ such that
\[
  (R,\infty)\subseteq\CC^{\rm exact}_\beta(x).
\]

The analogous conclusions hold in the lower tail when
$S_{n+1}(y;\beta)\to\infty$ as $y\to-\infty$. In particular, if the candidate
weight tends to zero in an extreme score tail, then sufficiently extreme
candidates are eventually rejected, whereas if the candidate weight tends to
infinity, then sufficiently extreme candidates are eventually accepted.
\end{corollary}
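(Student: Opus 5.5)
The plan is to reduce both claims to Lemma~\ref{lem:candidate_tail_criterion}, with $\beta$, $x$ and the calibration sample held fixed. Then $C_\beta$ and $K_\beta$ are finite deterministic constants, and $M_\beta=\max_{1\le i\le n}S_i(\beta)$ is a finite number. The hypothesis $S_{n+1}(y;\beta)\to\infty$ as $y\to+\infty$ gives a finite $R_1$ such that $S_{n+1}(y;\beta)>M_\beta$ for all $y>R_1$. On $(R_1,\infty)$ the lemma applies, so $\pi_\beta(y)=W_{n+1}(y;\beta)/(C_\beta+W_{n+1}(y;\beta))$. Hence $y\in\CC^{\rm exact}_\beta(x)$ if and only if $W_{n+1}(y;\beta)>K_\beta$ there.

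For the rejection statement, assume $\limsup_{y\to+\infty}W_{n+1}(y;\beta)<K_\beta$. Choose $\epsilon>0$ with $\limsup W_{n+1}<K_\beta-\epsilon$. By the definition of $\limsup$ there is $R_2$ with $W_{n+1}(y;\beta)<K_\beta$ for all $y>R_2$. Set $R=\max(R_1,R_2)$. Every $y>R$ then violates the acceptance criterion \eqref{eq:tail_acceptance_criterion}, so $\CC^{\rm exact}_\beta(x)\cap(R,\infty)=\varnothing$.

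The acceptance statement is symmetric. The condition $\liminf_{y\to+\infty}W_{n+1}(y;\beta)>K_\beta$ gives $R_2$ with $W_{n+1}(y;\beta)>K_\beta$ for all $y>R_2$. With $R=\max(R_1,R_2)$, every $y>R$ satisfies the criterion, which gives $(R,\infty)\subseteq\CC^{\rm exact}_\beta(x)$.

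For the lower tail, the same argument applies with $y\to-\infty$ and sets of the form $(-\infty,-R)$. The special cases follow directly:
\begin{itemize}
\item $W_{n+1}\to0$ gives $\limsup W_{n+1}=0$. This is strictly below $K_\beta$ provided $C_\beta>0$, i.e.\ at least one calibration weight is positive, which holds for exponential weights.
\item $W_{n+1}\to\infty$ gives $\liminf W_{n+1}=\infty>K_\beta$.
\end{itemize}

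There is no substantial obstacle here. The only points needing care are the following:
\begin{itemize}
\item The strict inequalities must be carried correctly from $\limsup/\liminf$ to ``eventually'' statements.
\item One must note that $K_\beta$ does not depend on $y$, so the threshold is fixed as the candidate moves along the tail.
\item $C_\beta>0$ is needed in the vanishing-weight case. If $C_\beta=0$, the $p$-value equals $1$ (or is $0/0$) and the rejection claim degenerates, so I would state this mild nondegeneracy explicitly.
\end{itemize}
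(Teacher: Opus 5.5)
Your proposal is correct and follows essentially the same route as the paper: a score cutoff from the divergence of $S_{n+1}(y;\beta)$ past the finite calibration maximum, a weight cutoff from the strict $\limsup$/$\liminf$ inequality, and then Lemma~\ref{lem:candidate_tail_criterion} applied beyond the larger of the two cutoffs. Your explicit remark that the vanishing-weight special case needs $C_\beta>0$ is a harmless refinement that the paper leaves implicit; it holds automatically for exponential weights.
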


\begin{proof}
We prove the upper-tail statement; the lower-tail argument is identical.  Since
$S_{n+1}(y;\beta)\to\infty$ and the calibration sample is finite, there exists
$R_S<\infty$ such that
$S_{n+1}(y;\beta)>\max_iS_i(\beta)$ for every $y>R_S$.

If
\[
  \limsup_{y\to\infty}W_{n+1}(y;\beta)<K_\beta,
\]
choose $\varepsilon>0$ so that the limsup is at most
$K_\beta-2\varepsilon$.  Then there exists $R_W<\infty$ such that
$W_{n+1}(y;\beta)<K_\beta-\varepsilon<K_\beta$ for every $y>R_W$.
Lemma~\ref{lem:candidate_tail_criterion} therefore rejects every
$y>\max\{R_S,R_W\}$.

If instead
\[
  \liminf_{y\to\infty}W_{n+1}(y;\beta)>K_\beta,
\]
there exists $R_W<\infty$ beyond which
$W_{n+1}(y;\beta)>K_\beta$.  The same lemma then accepts every
$y>\max\{R_S,R_W\}$.  The special cases in which the candidate weight tends to
zero or infinity follow immediately.
\end{proof}

\paragraph{Key insight.}
This corollary turns the algebraic acceptance rule in
Lemma~\ref{lem:candidate_tail_criterion} into a statement about the geometry of
the exact prediction set. Suppose, for example, that
$S_{n+1}(y;\beta)\to\infty$ as $y\to+\infty$. Then sufficiently large
candidate responses eventually have scores exceeding every calibration score,
so their acceptance is determined solely by whether the candidate importance
weight exceeds the fixed threshold
\[
  K_\beta=\frac{\alpha}{1-\alpha}C_\beta.
\]
If the candidate weight eventually stays below $K_\beta$, the upper tail is
eventually rejected and the exact prediction set is bounded from above. If the
candidate weight eventually stays above $K_\beta$, the entire sufficiently
extreme upper tail is accepted, making the exact set unbounded from above.
The same reasoning applies to the lower tail.

The most important special cases are particularly simple:
\[
  W_{n+1}(y;\beta)\to0
  \quad\Longrightarrow\quad
  \text{eventual rejection},
\]
whereas
\[
  W_{n+1}(y;\beta)\to\infty
  \quad\Longrightarrow\quad
  \text{eventual acceptance}.
\]
Thus, once the candidate score is sufficiently extreme, boundedness of the
exact candidate-weighted conformal set is governed by the tail behavior of the
candidate importance weight rather than by the nonconformity score alone.
This observation is the key to the results below: scalar linear tilts have a
tail in which the importance weight diverges, producing an unbounded exact
set, whereas suitably tail-decaying quadratic tilts can force rejection in
both tails and yield bounded exact inference.

The exact candidate-weighted set is inherited from weighted conformal prediction
and earlier Conformal Bayes constructions
\citep{TibshiraniR2019neurips,Choi2026eiml}.  The proposition below identifies a
specific tail consequence of that construction for a nonzero linear
response-marginal tilt.

\bigskip
\begin{proposition}[Any nonzero linear candidate tilt makes the exact JTS union unbounded]
\label{prop:unbounded}
Fix a test input $x$ and consider the Gaussian Conformal Bayes score
\eqref{eq:gaussian_score} with the scalar linear exponential tilt
\eqref{eq:label_tilt}. For a fixed $\beta\ne0$, let
\[
  C_\beta=\sum_{i=1}^n W_i(\beta),
  \qquad
  K_\beta=\frac{\alpha}{1-\alpha}C_\beta .
\]
The candidate importance weight is
\[
  W_{n+1}(y;\beta)
  =
  \exp\left\{
    \beta(y-\ms)-\frac12\beta^2\vs^2
  \right\}.
\]
Define the candidate-weight crossing location $y_W(\beta)$ by
$W_{n+1}(y_W(\beta);\beta)=K_\beta$, namely
\begin{equation}
  y_W(\beta)
  =
  \ms+\frac{\beta\vs^2}{2}
  +\frac{1}{\beta}
   \log\!\left\{
     \frac{\alpha}{1-\alpha}C_\beta
   \right\}.
  \label{eq:unbounded_tail_location}
\end{equation}

If $\beta>0$, there exists a finite score cutoff
$y_S^+(x,\beta)$ such that
\[
  S_{n+1}(y;\beta)>\max_{1\le i\le n}S_i(\beta)
  \qquad
  \text{for all }y>y_S^+(x,\beta),
\]
and
\begin{equation}
  \bigl(
    \max\{y_S^+(x,\beta),y_W(\beta)\},
    \infty
  \bigr)
  \subseteq
  \CC^{\rm exact}_\beta(x).
  \label{eq:upper_tail_inclusion}
\end{equation}

If $\beta<0$, there exists a finite score cutoff
$y_S^-(x,\beta)$ such that
\[
  S_{n+1}(y;\beta)>\max_{1\le i\le n}S_i(\beta)
  \qquad
  \text{for all }y<y_S^-(x,\beta),
\]
and
\begin{equation}
  \bigl(
    -\infty,
    \min\{y_S^-(x,\beta),y_W(\beta)\}
  \bigr)
  \subseteq
  \CC^{\rm exact}_\beta(x).
  \label{eq:lower_tail_inclusion}
\end{equation}

Consequently, if the uncertainty set $\BB(\kappa)$ contains any
$\beta\ne0$, then
\[
  \CC^{\rm exact}_{\rm JTS}(x)
  =
  \bigcup_{\beta\in\BB(\kappa)}
  \CC^{\rm exact}_\beta(x)
\]
contains an entire unbounded half-line and therefore has infinite Lebesgue
measure, regardless of the true target tilt $\betastar$.
\end{proposition}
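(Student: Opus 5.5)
The plan is to reduce everything to Lemma~\ref{lem:candidate_tail_criterion}, treating the upper tail for $\beta>0$ in detail. The lower tail for $\beta<0$ then follows by the reflection $y\mapsto -y$, $\beta\mapsto-\beta$.

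\emph{Score cutoff.} Fix $x$ and $\beta>0$. The Gaussian score \eqref{eq:gaussian_score} is a quadratic in $y$ with positive leading coefficient $1/(2\sigb^2(x))$, centered at $\mub(x)+\beta\sigb^2(x)$. Hence it is strictly increasing for $y$ beyond that center and tends to $+\infty$. The calibration scores $S_i(\beta)$, $i=1,\dots,n$, form a finite set, so $M_\beta=\max_i S_i(\beta)<\infty$. I will take $y_S^+(x,\beta)$ to be the larger root of $S_\beta(x,y)=M_\beta$ when $M_\beta$ exceeds the minimal score $\tfrac12\log\{2\pi\sigb^2(x)\}$, and the center otherwise. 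Explicitly,
\[
y_S^+(x,\beta)=\mub(x)+\beta\sigb^2(x)+\sqrt{2\sigb^2(x)\bigl(M_\beta-\tfrac12\log\{2\pi\sigb^2(x)\}\bigr)_+}.
\]
For $y>y_S^+$ the strict inequality $S_{n+1}(y;\beta)>M_\beta$ then holds. In the degenerate case $(\cdot)_+=0$, strictness follows because $y$ is strictly away from the center.

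\emph{Weight crossing.} By \eqref{eq:linear_gaussian_weight} the candidate weight is $W_{n+1}(y;\beta)=\exp\{\beta(y-\ms)-\tfrac12\beta^2\vs^2\}$. For $\beta>0$ it is continuous and strictly increasing in $y$, with range $(0,\infty)$. Since $C_\beta>0$ (every $W_i>0$) and $\alpha\in(0,1)$, we have $K_\beta\in(0,\infty)$. Therefore a unique crossing point $y_W(\beta)$ exists. Taking logarithms in $W_{n+1}(y;\beta)=K_\beta$ and solving for $y$ gives \eqref{eq:unbounded_tail_location}, with $W_{n+1}(y;\beta)>K_\beta$ exactly when $y>y_W(\beta)$.

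\emph{Inclusion.} For $y>\max\{y_S^+,y_W\}$, both hypotheses of the lemma hold. Its equivalence \eqref{eq:tail_acceptance_criterion} then gives $\pi_\beta(y)>\alpha$, which yields \eqref{eq:upper_tail_inclusion}. For $\beta<0$ the same steps apply to the lower tail. There $y_S^-$ is the smaller root (center minus the square root), $W_{n+1}$ is strictly decreasing, and $W_{n+1}>K_\beta$ iff $y<y_W(\beta)$; the formula for $y_W$ is unchanged because dividing by $\beta$ already accounts for the sign.

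\emph{Union and measure.} If $\BB(\kappa)$ contains some $\beta_0\neq0$, then $\CC^{\rm exact}_{\beta_0}(x)\subseteq\CC^{\rm exact}_{\rm JTS}(x)$. The union therefore contains a half-line and has infinite Lebesgue measure. None of the steps used $\betastar$, so the conclusion holds whatever the true tilt is.

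The main point to handle carefully is the choice of score cutoff. Strictness must be guaranteed, including the degenerate case where $M_\beta$ equals the minimal score. One must also note that the cutoff is finite because $n<\infty$ and $\sigb^2(x)>0$; everything else is direct substitution into the lemma.
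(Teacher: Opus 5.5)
Your proposal is correct and follows essentially the same route as the paper. Both arguments combine two facts through Lemma~\ref{lem:candidate_tail_criterion}: finiteness of the calibration sample plus divergence of the Gaussian score gives a tail cutoff, and the monotone exponential weight exceeds $K_\beta$ exactly beyond $y_W(\beta)$; any nonzero $\beta$ then contributes a half-line to the union. Your explicit formula for $y_S^{+}(x,\beta)$, including the strictness check when $M_\beta$ equals the minimal score, is simply a concrete instance of the existence claim the paper asserts directly.
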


\begin{proof}
For the Gaussian score, $S_\beta(x,y)\to\infty$ as $y\to+\infty$ and as
$y\to-\infty$.  Because the calibration sample is finite, there are finite
cutoffs $y_S^+(x,\beta)$ and $y_S^-(x,\beta)$ beyond which the candidate score
exceeds every calibration score, so
Lemma~\ref{lem:candidate_tail_criterion} applies.

For $\beta>0$,
\[
  W_{n+1}(y;\beta)
  =\exp\!\left\{\beta(y-\ms)-\frac12\beta^2\vs^2\right\}
\]
is increasing in $y$ and exceeds
$K_\beta=\{\alpha/(1-\alpha)\}C_\beta$ exactly when
$y>y_W(\beta)$.  Combining the score and weight conditions gives
\eqref{eq:upper_tail_inclusion}.  For $\beta<0$, the same weight diverges as
$y\to-\infty$, and solving the inequality reverses the direction to
$y<y_W(\beta)$, giving \eqref{eq:lower_tail_inclusion}.  Since the exact JTS
set is the union over candidate tilts, the presence of any nonzero $\beta$
contributes an unbounded half-line, proving the final claim.
\end{proof}

\paragraph{Key insight.}
This proposition specializes the general tail criterion above to the scalar
linear Gaussian tilt and makes the resulting pathology explicit. Two finite
cutoffs determine what happens in an extreme tail. The score cutoff
$y_S^\pm(x,\beta)$ marks the point beyond which the candidate is more
nonconforming than every calibration observation, while $y_W(\beta)$ marks
the point at which its own importance weight crosses the acceptance threshold
$K_\beta$. Once both conditions hold, Lemma~\ref{lem:candidate_tail_criterion}
forces the candidate to be accepted.

The resulting behavior is counterintuitive. For $\beta>0$, increasingly large
responses have increasingly large Gaussian nonconformity scores, suggesting
that they should be rejected. At the same time, however, their candidate
importance weights grow exponentially:
\[
  W_{n+1}(y;\beta)\to\infty
  \qquad\text{as }y\to+\infty.
\]
Eventually the weight effect dominates the exact conformal $p$-value, so every
sufficiently large response is accepted. For $\beta<0$, the same phenomenon
occurs in the lower tail. Thus every nonzero scalar linear candidate tilt
contributes an entire accepted extreme tail.

Importantly, this conclusion depends on the candidate tilts included in the
uncertainty set, not on the true target tilt. Even if $\betastar=0$, so that
there is in fact no label shift, including any nonzero $\beta$ in
$\BB(\kappa)$ makes the exact JTS union unbounded. Hence the exact
finite-sample coverage guarantee can become practically uninformative for the
linear tilt family. The results below show that this phenomenon is driven by
tail-growing importance ratios rather than by candidate weighting itself:
tail-decaying quadratic tilts can instead reject sufficiently extreme
responses in both tails and yield bounded exact prediction sets.

\bigskip
\begin{remark}[Why a small nonzero budget only appears bounded numerically]
\label{rem:tail_onset}
The linear-family unboundedness above is a property of the candidate-weighted
construction, not of whether the true target is actually shifted.  Under the
normalized Gaussian tilt used in this experiment,
$\mathbb E_{P_s}\{W_i(\beta)\}=1$ for every fixed $\beta$, so
$C_\beta/n\to1$ by the law of large numbers and hence $C_\beta\approx n$ for
moderate $n$.  Moreover,
$\operatorname{Var}_{P_s}\{W_i(\beta)\}
=\exp(\beta^2\vs^2)-1$, so this finite-sample approximation becomes noisier as
$|\beta|$ grows.  Writing
\[
  L=\log\!\left(\frac{\alpha n}{1-\alpha}\right),
\]
the approximate positive-tail crossing is
\begin{equation}
  y_W(\beta)
  \approx
  \ms+\frac{L}{\beta}+\frac{\beta\vs^2}{2},
  \qquad \beta>0.
  \label{eq:tail_onset_approx}
\end{equation}
When $L>0$, this function is convex and is minimized at
\begin{equation}
  \beta_{\rm on}=\sqrt{\frac{2L}{\vs^2}},
  \qquad
  y_{\rm on}^{\min}
  =\ms+\sqrt{2\vs^2L}.
  \label{eq:tail_onset_floor}
\end{equation}
Thus, for the one-sided uncertainty interval $[0,B]$, the earliest approximate
upper-tail onset is
\begin{equation}
  y_{\rm on}^{\cup}(B)
  \approx
  \begin{cases}
    y_W(B), & 0<B\le\beta_{\rm on},\\
    y_{\rm on}^{\min}, & B\ge\beta_{\rm on}.
  \end{cases}
  \label{eq:union_tail_onset}
\end{equation}
In the experiments, $n=300$, $\alpha=.1$, $\vs^2=1.3$, and $\ms=0$, giving
$\beta_{\rm on}\approx2.32$ and $y_{\rm on}^{\min}\approx3.02$.  Every budget
reported in Table~\ref{tab:unbounded} has $B\le0.9<\beta_{\rm on}$, so its
earliest union onset is indeed attained at the endpoint and is correctly
reported as $y_W(B)$.  Equation~\eqref{eq:tail_onset_approx} also shows
$y_W(\beta)\to+\infty$ as $\beta\downarrow0$: shrinking a nonzero budget pushes
the offending tail farther away but never restores boundedness.  Conversely,
once $B$ reaches $\beta_{\rm on}$, increasing it cannot move the approximate
union onset closer than the floor $y_{\rm on}^{\min}$.  Because
$C_\beta\approx n$ is an approximation, these onset values are diagnostic rather
than exact sample-wise bounds.
\end{remark}

\bigskip
\begin{corollary}[Tail-decaying quadratic tilts admit bounded exact inference]
\label{cor:quadratic_bounded_exact}
Consider the quadratic response-marginal tilt
\[
  w(y;\beta_1,\beta_2)
  =
  \exp\{\beta_1y+\beta_2y^2-A_s(\beta_1,\beta_2)\},
\]
together with the corresponding Gaussian tilted-predictive score. For a fixed
tilt $\boldsymbol\beta=(\beta_1,\beta_2)$, the tail behavior of the exact
candidate-weighted conformal set is determined by the sign of $\beta_2$:

\begin{enumerate}
\item[(i)] If $\beta_2<0$, then
\[
  w(y;\beta_1,\beta_2)\to0
  \qquad\text{as } |y|\to\infty,
\]
and the exact fixed-tilt prediction set
$\CC^{\rm exact}_{\boldsymbol\beta}(x)$ is bounded.

\item[(ii)] If $\beta_2>0$ and both the response-marginal tilt and the
corresponding tilted predictive distribution are normalizable, then
\[
  w(y;\beta_1,\beta_2)\to\infty
  \qquad\text{as } |y|\to\infty,
\]
and $\CC^{\rm exact}_{\boldsymbol\beta}(x)$ contains sufficiently extreme
responses in both tails and is therefore unbounded in both directions.

\item[(iii)] If $\beta_2=0$ and $\beta_1\ne0$, the quadratic family reduces to
the scalar linear tilt, and $\CC^{\rm exact}_{\boldsymbol\beta}(x)$ is
unbounded in one tail as in Proposition~\ref{prop:unbounded}.
\end{enumerate}
At the no-shift point $(\beta_1,\beta_2)=(0,0)$, the candidate weight is
constant. If $\alpha>1/(n+1)$, boundedness follows directly from
Corollary~\ref{cor:tail_decay_growth}; at the boundary
$\alpha=1/(n+1)$, an extreme candidate has
$\pi(y)=1/(n+1)=\alpha$ and is rejected because the prediction set uses the
strict rule $\pi(y)>\alpha$. Thus the no-shift exact set is bounded whenever
$\alpha\ge1/(n+1)$.

The boundedness statement for $\beta_2<0$ can also be made quantitative. Define
\[
  K_{\boldsymbol\beta}
  =\frac{\alpha}{1-\alpha}
    \sum_{i=1}^n W_i(\boldsymbol\beta),
  \qquad
  D_{\boldsymbol\beta}
  =A_s(\boldsymbol\beta)+\log K_{\boldsymbol\beta},
\]
and
\begin{equation}
  \Delta_{\boldsymbol\beta}
  =\beta_1^2+4\beta_2D_{\boldsymbol\beta}.
  \label{eq:quadratic_weight_discriminant}
\end{equation}
Choose finite score cutoffs $y_S^-(x,\boldsymbol\beta)<
y_S^+(x,\boldsymbol\beta)$ such that the candidate score exceeds every
calibration score outside this interval. If $\Delta_{\boldsymbol\beta}\le0$,
then the candidate weight never exceeds $K_{\boldsymbol\beta}$ and
\begin{equation}
  \CC^{\rm exact}_{\boldsymbol\beta}(x)
  \subseteq
  [y_S^-(x,\boldsymbol\beta),y_S^+(x,\boldsymbol\beta)].
  \label{eq:quadratic_bound_no_crossing}
\end{equation}
If $\Delta_{\boldsymbol\beta}>0$, let
\[
  y_{W,-}(\boldsymbol\beta)
  =\min_{\pm}
    \frac{-\beta_1\pm\sqrt{\Delta_{\boldsymbol\beta}}}{2\beta_2},
  \qquad
  y_{W,+}(\boldsymbol\beta)
  =\max_{\pm}
    \frac{-\beta_1\pm\sqrt{\Delta_{\boldsymbol\beta}}}{2\beta_2}.
\]
Then
\begin{equation}
  \CC^{\rm exact}_{\boldsymbol\beta}(x)
  \subseteq
  \Bigl[
    \min\{y_S^-(x,\boldsymbol\beta),y_{W,-}(\boldsymbol\beta)\},\;
    \max\{y_S^+(x,\boldsymbol\beta),y_{W,+}(\boldsymbol\beta)\}
  \Bigr].
  \label{eq:quadratic_explicit_bound}
\end{equation}

Moreover, let $\mathcal B_Q$ be a compact uncertainty set satisfying
\[
  \mathcal B_Q
  \subset
  \{(\beta_1,\beta_2):\beta_2<0\}.
\]
Then, for every fixed test input $x$, the exact uncertainty-set union
\[
  \CC^{\rm exact}_{\mathcal B_Q}(x)
  =
  \bigcup_{\boldsymbol\beta\in\mathcal B_Q}
  \CC^{\rm exact}_{\boldsymbol\beta}(x)
\]
is bounded. If the true quadratic tilt
$\boldsymbol\beta^\star$ belongs to $\mathcal B_Q$ and the
weighted-exchangeability conditions of Corollary~\ref{cor:exact_coverage} hold,
then
\[
  \Prob\!\left\{
    Y_{\rm test}
    \in
    \CC^{\rm exact}_{\mathcal B_Q}(X_{\rm test})
  \right\}
  \ge 1-\alpha .
\]
Thus, in this tail-decaying quadratic regime, one obtains both boundedness and
exact finite-sample marginal coverage for the original target, without ratio
clipping or total-variation slack.
\end{corollary}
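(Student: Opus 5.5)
The plan is to reduce every claim to Lemma~\ref{lem:candidate_tail_criterion} and Corollary~\ref{cor:tail_decay_growth}. The first step is to record that, under the Gaussian tilted-predictive score, the score is a nondegenerate quadratic in $y$. The tilted predictive is Gaussian with variance $\sigb^2(x)/(1-2\beta_2\sigb^2(x))>0$, which holds automatically when $\beta_2<0$ and by assumption in case (ii). Hence $S_{n+1}(y;\boldsymbol\beta)\to\infty$ as $|y|\to\infty$, and finite cutoffs $y_S^\pm(x,\boldsymbol\beta)$ exist outside which every calibration indicator vanishes.

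The fixed-tilt cases then follow from the limit of the candidate weight.
\begin{itemize}
\item If $\beta_2<0$, the exponent $\beta_1y+\beta_2y^2$ tends to $-\infty$ in both tails, so $w\to0$. The limsup branch of Corollary~\ref{cor:tail_decay_growth} then applies in both tails; $K_{\boldsymbol\beta}>0$ because all weights are positive.
\item If $\beta_2>0$, the weight tends to $\infty$ in both tails, so the liminf branch gives two accepted half-lines.
\item Case (iii) is literally Proposition~\ref{prop:unbounded}.
\item At $(0,0)$ we have $W\equiv1$ and $C=n$, so $K=\alpha n/(1-\alpha)$. The condition $1<K$ is equivalent to $\alpha>1/(n+1)$. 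At equality, the tail identity \eqref{eq:tail_pvalue_identity} gives $\pi=1/(n+1)=\alpha$, which the strict rule rejects.
\end{itemize}

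For the quantitative bound, take logs in $W_{n+1}(y;\boldsymbol\beta)>K_{\boldsymbol\beta}$. This gives $\beta_2y^2+\beta_1y-D_{\boldsymbol\beta}>0$, a downward parabola with discriminant $\Delta_{\boldsymbol\beta}$. If $\Delta\le0$, the inequality has no solution, so outside $[y_S^-,y_S^+]$ the tail criterion rejects everything, which gives \eqref{eq:quadratic_bound_no_crossing}. If $\Delta>0$, acceptance in the region where the tail criterion applies requires $y\in(y_{W,-},y_{W,+})$. A point outside the interval in \eqref{eq:quadratic_explicit_bound} lies both outside $[y_S^-,y_S^+]$ and outside the root interval, so it is rejected.

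The main obstacle is uniform boundedness of the union over compact $\mathcal B_Q$, because the cutoffs depend on $\boldsymbol\beta$. My approach is to bound both ingredients uniformly.
\begin{itemize}
\item \emph{Score cutoffs.} The maps $\boldsymbol\beta\mapsto S_i(\boldsymbol\beta)$ are continuous, so $M=\sup_{\mathcal B_Q}\max_iS_i(\boldsymbol\beta)<\infty$. The candidate score is jointly continuous and grows at least like $c\,y^2-c'$, with $c>0$ and $c'$ uniform over the compact set, since the tilted variance and mean are bounded there. Hence there is a single $R_S$ beyond which $S_{n+1}(y;\boldsymbol\beta)>M$ for all $\boldsymbol\beta\in\mathcal B_Q$.
\item \emph{Weight crossing.} I will bound $W_{n+1}(y;\boldsymbol\beta)/K_{\boldsymbol\beta}$ uniformly. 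We have $\beta_2\le-\eta<0$ on $\mathcal B_Q$, and $|\beta_1|$, $A_s$ and $\log C_{\boldsymbol\beta}$ are bounded there, with $C_{\boldsymbol\beta}>0$ continuous. Therefore $\log(W_{n+1}/K)\le-\eta y^2+b|y|+d\to-\infty$ uniformly, which gives an $R_W$.
\end{itemize}
Beyond $\max(R_S,R_W)$ every tilt rejects, so the union is bounded. Continuity of $A_s$ on the open normalizability region, which contains the half-plane $\beta_2<0$, is used implicitly. Alternatively, the normalizer cancels in $W/K$ by Remark~\ref{rem:weight_normalizer_cancellation}, which sidesteps $A_s$ entirely.

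Coverage follows verbatim from the inclusion argument of Corollary~\ref{cor:exact_coverage} and Remark~\ref{rem:vector_tilt}.
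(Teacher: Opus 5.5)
Your proposal is correct and follows the paper's proof essentially step for step: the fixed-tilt cases go through Corollary~\ref{cor:tail_decay_growth}, the quantitative bound reduces by logarithms to a downward parabola with discriminant $\Delta_{\boldsymbol\beta}$, the union over the compact set $\mathcal B_Q$ uses uniform score and weight cutoffs (from $\beta_2\le-\eta$ and uniformly bounded tilted centers and variances), and coverage follows by inclusion. The only variation is cosmetic: you bound the ratio $W_{n+1}/K_{\boldsymbol\beta}$ directly, letting the normalizer cancel, whereas the paper bounds $\sup_{\mathcal B_Q}W_{n+1}$ and $\inf_{\mathcal B_Q}\sum_iW_i$ separately; the two are equivalent.
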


\begin{proof}[Proof sketch]
For $\beta_2<0$, the negative quadratic term dominates the linear term, so the
candidate weight tends to zero as $|y|\to\infty$, while the Gaussian candidate
score diverges in both tails. Corollary~\ref{cor:tail_decay_growth} therefore
gives eventual rejection in both directions. For the quantitative bound, once
the score is outside
$[y_S^-(x,\boldsymbol\beta),y_S^+(x,\boldsymbol\beta)]$, acceptance is
equivalent to $W_{n+1}(y;\boldsymbol\beta)>K_{\boldsymbol\beta}$. Taking logs
gives
\[
  \beta_2y^2+\beta_1y-D_{\boldsymbol\beta}>0.
\]
Because $\beta_2<0$, this is a downward-opening quadratic. If
$\Delta_{\boldsymbol\beta}\le0$, it is never positive; if
$\Delta_{\boldsymbol\beta}>0$, it is positive only between the two roots,
which yields \eqref{eq:quadratic_bound_no_crossing} and
\eqref{eq:quadratic_explicit_bound}.

For $\beta_2>0$, the candidate weight diverges in both tails whenever the tilted
models remain normalizable, so Corollary~\ref{cor:tail_decay_growth} gives
eventual acceptance in both directions. The case $\beta_2=0$, $\beta_1\ne0$
is Proposition~\ref{prop:unbounded}. At $(0,0)$, an extreme candidate has
$\pi(y)=1/(n+1)$; the strict acceptance rule therefore rejects it also at the
boundary $\alpha=1/(n+1)$.

For the uncertainty-set statement, compactness and
$\mathcal B_Q\subset\{\beta_2<0\}$ imply that there exists $\delta>0$ such that
$\beta_2\le-\delta$ throughout $\mathcal B_Q$, while $\beta_1$ remains bounded.
Hence candidate weights decay to zero uniformly over
$\boldsymbol\beta\in\mathcal B_Q$, and the Gaussian candidate scores diverge
uniformly as $|y|\to\infty$. The finite-sample calibration score maxima are
uniformly bounded and the thresholds $K_{\boldsymbol\beta}$ are uniformly
positive by continuity and compactness. Thus one common finite tail cutoff
rejects all sufficiently extreme candidates for every
$\boldsymbol\beta\in\mathcal B_Q$, proving boundedness of the union. Exact
coverage follows because the union contains the valid fixed-tilt set at
$\boldsymbol\beta^\star$. A detailed proof of the compact-set uniformity argument is given in
Appendix~\ref{app:proofs_section4}.
\end{proof}

\paragraph{Key insight.}
This corollary shows that the unboundedness of the exact linear-tilt
construction is not caused by candidate weighting itself. The decisive feature
is whether the response-marginal density ratio grows or decays in the extreme
score tails. The quadratic coefficient $\beta_2$ controls this behavior. When
$\beta_2<0$, the factor $\exp(\beta_2y^2)$ suppresses both tails strongly enough
to dominate the linear term $\beta_1y$, so sufficiently extreme candidates are
rejected in both directions. When $\beta_2>0$, the same mechanism reverses and
both extreme tails are eventually accepted. The boundary case $\beta_2=0$ is
the scalar linear family, except at the isolated no-shift point.

The discriminant in \eqref{eq:quadratic_weight_discriminant} makes the bounded
regime quantitative. If $\Delta_{\boldsymbol\beta}\le0$, the candidate
importance weight never reaches the tail-acceptance threshold at all, so the
only possible accepted responses lie inside the finite score-central region. If
$\Delta_{\boldsymbol\beta}>0$, the weight can exceed the threshold only between
two finite roots, and \eqref{eq:quadratic_explicit_bound} combines this
weight-crossing interval with the finite score cutoffs. Thus the positive result
is not merely qualitative: it provides an explicit finite envelope for each
fixed tail-decaying tilt.

The compact uncertainty-set result is especially useful: if every plausible
tilt lies strictly in the tail-decaying region, the rejection occurs uniformly
over the whole set. Hence the exact union remains bounded while still containing
the valid fixed-tilt set at the true parameter. This is a regime in which exact
finite-sample validity, bounded prediction sets, and inference for the original
target are simultaneously achievable without clipping.

\paragraph{Scope of the bounded-exact regime.}
The sufficient condition $\mathcal B_Q\subset\{\beta_2<0\}$ is deliberately
restrictive. It excludes nonzero linear tilts ($\beta_2=0$, $\beta_1\ne0$) and,
as stated, also excludes the no-shift point. The isolated point $(0,0)$ can be
added without destroying boundedness when $\alpha\ge1/(n+1)$, but an uncertainty
set that also contains nonzero linear candidates inherits the one-sided
unboundedness of Proposition~\ref{prop:unbounded}. For Gaussian label marginals,
$\beta_2<0$ corresponds to a target variance smaller than the source variance.
Thus using this exact bounded certificate requires prior structural knowledge
that the plausible target shifts are tail-decaying; it cannot simultaneously
hedge over ``possibly linear'' and ``strictly tail-decaying quadratic'' shifts
while retaining this boundedness guarantee.

\bigskip
\begin{remark}[Exact validity, tail growth, and practical boundedness]
\label{rem:vacuous}
Corollary~\ref{cor:exact_coverage} certifies an exact candidate-weighted union,
while Proposition~\ref{prop:unbounded} shows that for the nonzero linear family
this exact object is vacuous in one tail.  Corollary~\ref{cor:quadratic_bounded_exact}
shows that this is not an intrinsic conflict between candidate weighting and
boundedness: tail-decaying quadratic tilts with $\beta_2<0$ admit both exact
validity and bounded prediction sets on the original target.  The obstruction is
instead growth of the candidate density ratio in an extreme score tail.  The
bounded practical linear interval in \eqref{eq:union_interval} avoids that tail
self-weighting by dropping the candidate weight, but therefore does not inherit
the exact weighted-conformal theorem.
\end{remark}

\subsection{Clipped exact counterpart for a bounded-ratio surrogate}
\label{sec:clipped_counterpart}

For the linear family and other tail-growing ratios, the same tail criterion also
explains the clipped construction.  Truncating extreme importance weights is a
classical stabilization device in importance sampling \citep{IonidesEL2008jcgs}.
Here clipping serves a different purpose: the normalized clipped ratio is taken
to define a surrogate target, and we derive an explicit deterministic condition
under which candidate-response weighting cannot produce an unbounded exact set.
For $M\ge1$, define
\[
  \widetilde w_M(y;\beta)=\operatorname{clip}\{w(y;\beta),1/M,M\},
  \qquad
  \widetilde W_i(\beta)=\widetilde w_M(Y_i;\beta),
\]
and for a candidate response $y$ let
$\widetilde W_{n+1}(y;\beta)=\widetilde w_M(y;\beta)$.  The clipped exact p-value and
fixed-tilt prediction set are
\begin{align}
  \widetilde\pi_{\beta,M}(y)
  &=
  \frac{
    \sum_{i=1}^n \widetilde W_i(\beta)
      \1\{S_i(\beta)\ge S_{n+1}(y;\beta)\}
    +\widetilde W_{n+1}(y;\beta)}
  {\sum_{i=1}^n\widetilde W_i(\beta)+\widetilde W_{n+1}(y;\beta)},
  \label{eq:clipped_exact_pvalue}\\
  \CC^{\rm clip}_{\beta}(x;M)
  &=
  \{y:\widetilde\pi_{\beta,M}(y)>\alpha\},
  \qquad
  \CC^{\rm clip}_{\rm JTS}(x;M)
  =
  \bigcup_{\beta\in\BB(\kappa)}
  \CC^{\rm clip}_{\beta}(x;M).
  \label{eq:clipped_robust_union}
\end{align}
Thus $\CC^{\rm clip}_{\rm JTS}$ is now defined explicitly and is a theoretical
counterpart of the practical JTS-SCB set, not another construction from
the calibration-only threshold $\thetahat^{\rm prac}$.
Clipping modifies the conformal importance weight but leaves the score
$S_\beta$ unchanged. This does not invalidate weighted-conformal inference for
the surrogate target: finite-sample validity requires a valid nonconformity
score together with the correct source-to-surrogate weighting scheme, not that
the score itself be the Bayesian predictive score induced by the clipped ratio.

Let
\[
  a_M(\beta)=\mathbb E_{P_s}\{\widetilde w_M(Y;\beta)\},
\]
and define the normalized surrogate target by
\[
  \frac{d\widetilde P_{t,\beta,M}}{dP_s}(x,y)
  =
  \frac{\widetilde w_M(y;\beta)}{a_M(\beta)}.
\]
The normalizing factor $a_M(\beta)$ is common to all calibration and candidate
weights for fixed $\beta$ and therefore cancels from
\eqref{eq:clipped_exact_pvalue}.

\bigskip
\begin{proposition}[A sufficient clipping condition for bounded exact surrogate inference]
\label{prop:bounded_ratio}
Let $M\ge1$ and suppose
\begin{equation}
  M^2\le\frac{\alpha n}{1-\alpha},
  \label{eq:clipping_boundedness_condition}
\end{equation}
or equivalently $M^2/(n+M^2)\le\alpha$. Then, for every fixed $\beta$, the
clipped exact prediction set $\CC^{\rm clip}_{\beta}(x;M)$ is bounded under the
Gaussian score. If $\BB(\kappa)$ is compact, the clipped uncertainty-set union
\[
  \CC^{\rm clip}_{\rm JTS}(x;M)
  =\bigcup_{\beta\in\BB(\kappa)}\CC^{\rm clip}_{\beta}(x;M)
\]
is bounded for every fixed test input $x$.

Now suppose that the true exponential tilt is $\betastar\in\BB(\kappa)$. Under
the normalized clipped-ratio surrogate target
\[
  \frac{d\widetilde P_{t,\betastar,M}}{dP_s}(x,y)
  =\frac{\widetilde w_M(y;\betastar)}{a_M(\betastar)},
\]
the clipped exact uncertainty-set union satisfies
\begin{equation}
  \Prob_{\widetilde P_{t,\betastar,M}}
  \left\{Y_{\rm test}\in
    \CC^{\rm clip}_{\rm JTS}(X_{\rm test};M)\right\}
  \ge 1-\alpha.
  \label{eq:clipped_surrogate_coverage}
\end{equation}
Under the original exponential-tilt target $P_t$,
\begin{equation}
  \Prob_{P_t}\left\{Y_{\rm test}\in
    \CC^{\rm clip}_{\rm JTS}(X_{\rm test};M)\right\}
  \ge
  1-\alpha-\operatorname{TV}\!\bigl(P_t,\widetilde P_{t,\betastar,M}\bigr),
  \label{eq:clipped_tv_transfer}
\end{equation}
where
\[
  \operatorname{TV}\!\bigl(P_t,\widetilde P_{t,\betastar,M}\bigr)
  =\frac12\,\mathbb E_{P_s}\left|
  w(Y;\betastar)-\frac{\widetilde w_M(Y;\betastar)}{a_M(\betastar)}
  \right|.
\]
Thus clipping guarantees bounded exact inference for the surrogate target,
while its guarantee for the original target incurs a total-variation penalty
measuring the distortion introduced by clipping.
\end{proposition}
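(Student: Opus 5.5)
The plan has three parts: boundedness for a fixed tilt via Lemma~\ref{lem:candidate_tail_criterion}, boundedness of the union by a compactness argument, and then coverage for the surrogate target followed by transfer to $P_t$.

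\emph{Fixed-tilt boundedness.} Fix $\beta$. The clipped weights satisfy $\widetilde W_i(\beta)\ge 1/M$, so $\widetilde C_\beta=\sum_i\widetilde W_i(\beta)\ge n/M$. The clipped tail threshold is therefore
\[
\widetilde K_\beta=\frac{\alpha}{1-\alpha}\widetilde C_\beta\ge \frac{\alpha n}{(1-\alpha)M}\ge M,
\]
where the last inequality is exactly \eqref{eq:clipping_boundedness_condition}. The candidate weight satisfies $\widetilde W_{n+1}(y;\beta)\le M\le\widetilde K_\beta$, so the strict inequality $\widetilde W_{n+1}>\widetilde K_\beta$ can never hold. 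Lemma~\ref{lem:candidate_tail_criterion} applies to any nonnegative weighting, so every $y$ whose score exceeds $\max_iS_i(\beta)$ is rejected. The equality case is harmless because acceptance uses the strict rule $\widetilde\pi>\alpha$. The Gaussian score \eqref{eq:gaussian_score} is a quadratic in $y$ that diverges in both tails. Hence the set $\{y:S_\beta(x,y)\le\max_iS_i(\beta)\}$ is a compact interval, and it contains $\CC^{\rm clip}_\beta(x;M)$. This argument uses no tail behaviour of $w$ at all; it needs only the uniform bounds $1/M\le\widetilde w_M\le M$.

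\emph{Union over compact $\BB(\kappa)$.} The containing interval is
\[
\bigl[\mub(x)+\beta\sigb^2(x)-h_\beta,\ \mub(x)+\beta\sigb^2(x)+h_\beta\bigr],
\qquad
h_\beta=\sqrt{2\sigb^2(x)\bigl(\max_iS_i(\beta)-\tfrac12\log\{2\pi\sigb^2(x)\}\bigr)_+}.
\]
Each $S_i(\beta)$ is continuous in $\beta$ (for $\beta$ fixed it is a quadratic in $\beta$ with data-dependent coefficients). The maximum of finitely many continuous functions is continuous, and so is bounded on compact $\BB(\kappa)$. The center is affine in $\beta$, so it is also bounded. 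One common interval therefore contains every $\CC^{\rm clip}_\beta(x;M)$, and the union is bounded. I expect this uniformity step to be the only mildly delicate point. It is easier here than in the quadratic case, because the weight bound $M\le\widetilde K_\beta$ holds for every $\beta$ deterministically, and no uniform weight decay is needed.

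\emph{Coverage.} Under $\widetilde P_{t,\betastar,M}$, the density ratio with respect to $P_s$ is $\widetilde w_M(y;\betastar)/a_M(\betastar)$. This ratio depends only on $y$ and is bounded, so the surrogate target is a valid probability measure. The normalizer $a_M$ cancels in \eqref{eq:clipped_exact_pvalue}. Exactly as in the proof of Corollary~\ref{cor:exact_coverage}, condition on $\mathcal H$ and apply \citet[Theorem~2]{TibshiraniR2019neurips} with candidate-wise inversion. This gives coverage of $\CC^{\rm clip}_{\betastar}(X_{\rm test};M)\subseteq\CC^{\rm clip}_{\rm JTS}(X_{\rm test};M)$ at level $1-\alpha$. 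Here the score $S_{\betastar}$ is any fixed measurable function given $\mathcal H$, which is all the theorem needs, and this proves \eqref{eq:clipped_surrogate_coverage}.

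For the transfer bound, work conditionally on the calibration data (and $\mathcal H$). The event $\{Y_{\rm test}\in\CC^{\rm clip}_{\rm JTS}(X_{\rm test};M)\}$ is then a measurable set $A$ of the test pair. Since $|P_t(A)-\widetilde P_{t,\betastar,M}(A)|\le\operatorname{TV}(P_t,\widetilde P_{t,\betastar,M})$, averaging over the calibration sample gives \eqref{eq:clipped_tv_transfer}. The calibration sample is drawn from $P_s$ in both cases, so only the test law changes. Finally, both measures are absolutely continuous with respect to $P_s$ and the $X\mid Y$ conditional is shared, so the total variation equals half the $L^1(P_s)$ distance of the $y$-densities. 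This is the stated formula.
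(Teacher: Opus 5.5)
Your proposal is correct and follows essentially the same route as the paper. The fixed-tilt step is the paper's bound $\widetilde\pi_{\beta,M}(y)\le M^2/(n+M^2)\le\alpha$, rephrased as $\widetilde W_{n+1}(y;\beta)\le M\le\widetilde K_\beta$; the union step is the same continuity-and-compactness argument giving one score cutoff for all $\beta\in\BB(\kappa)$, with the containing interval written out explicitly; and the coverage and transfer steps are the same weighted-exchangeability inclusion and total-variation inequality, where conditioning on the calibration sample and checking the $L^1(P_s)$ form of the TV are useful added details.
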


\begin{proof}[Proof sketch]
Fix $\beta$ and write
$\widetilde C_\beta=\sum_{i=1}^n\widetilde W_i(\beta)$. Because clipping
restricts every weight to $[1/M,M]$,
\[
  \widetilde C_\beta\ge\frac{n}{M},
  \qquad
  \widetilde W_{n+1}(y;\beta)\le M.
\]
In either Gaussian score tail, the candidate score eventually exceeds every
calibration score. Once this occurs,
\[
  \widetilde\pi_{\beta,M}(y)
  =\frac{\widetilde W_{n+1}(y;\beta)}
       {\widetilde C_\beta+\widetilde W_{n+1}(y;\beta)}
  \le\frac{M^2}{n+M^2}\le\alpha.
\]
Thus sufficiently extreme candidates are rejected in both tails. Compactness
of the uncertainty set and continuity of the Gaussian scores allow the
score-tail cutoff to be chosen uniformly over $\beta$, proving boundedness of
the clipped union.

Under $\widetilde P_{t,\betastar,M}$, the Radon--Nikodym ratio relative to $P_s$
is proportional to $\widetilde w_M(\cdot;\betastar)$; the normalizing constant
$a_M(\betastar)$ cancels from the weighted conformal $p$-value. The surrogate
coverage statement is therefore another direct application of standard weighted
exchangeability \citep{TibshiraniR2019neurips}, and the uncertainty-set union
inherits it by inclusion.
Finally, applying the standard total-variation inequality to the coverage event
gives \eqref{eq:clipped_tv_transfer}. A detailed proof of the uniform boundedness and surrogate-target
validity statements is given in Appendix~\ref{app:proofs_section4}.
\end{proof}

\paragraph{Key insight.}
Clipping provides a different route to bounded exact inference from the natural
tail-decay mechanism of Corollary~\ref{cor:quadratic_bounded_exact}. An unclipped
tail-growing ratio can make the candidate's own importance weight arbitrarily
large and eventually force an extreme response to be accepted. Clipping removes
that mechanism by enforcing
\[
  \frac1M\le\widetilde W\le M,
\]
while the $n$ calibration weights contribute at least $n/M$ in total. Hence the
largest possible exact conformal $p$-value after all calibration indicators
vanish is $M^2/(n+M^2)$. Condition
\eqref{eq:clipping_boundedness_condition} makes this no larger than $\alpha$, so
both sufficiently extreme tails are rejected.

The price is that clipping changes the target distribution. Exact finite-sample
validity applies to the normalized clipped-ratio surrogate
$\widetilde P_{t,\betastar,M}$, not automatically to the original exponential-
tilt target $P_t$. The total-variation term in
\eqref{eq:clipped_tv_transfer} quantifies this price. A smaller $M$ gives stronger
control of extreme candidate weights but can distort the target more severely;
a larger $M$ preserves the original ratio more faithfully but weakens the
worst-case boundedness condition. Unlike the tail-decaying quadratic regime,
clipping restores boundedness by modifying the effective target rather than by
using the natural tail behavior of the original density ratio.

\bigskip
\begin{remark}[One inequality drives the pathology, escape route, and repair]
Corollary~\ref{cor:tail_decay_growth}, Proposition~\ref{prop:unbounded},
Corollary~\ref{cor:quadratic_bounded_exact}, and
Proposition~\ref{prop:bounded_ratio} are all consequences of the same tail
acceptance criterion \eqref{eq:tail_acceptance_criterion}.  A tail-growing
candidate weight eventually crosses the acceptance boundary and makes the exact
set vacuous in that tail.  A tail-decaying weight instead falls below the
boundary and is eventually rejected, which is why quadratic tilts with
$\beta_2<0$ need no clipping.  Clipping enforces a third route: the candidate
weight is at most $M$ while the calibration weight sum is at least $n/M$,
yielding
\[
  M^2\le\frac{\alpha n}{1-\alpha}.
\]
In the common approximation $C_\beta\approx n$, the unclipped linear-tail
crossing occurs when the candidate weight reaches
$\alpha n/(1-\alpha)=(M^\star)^2$, where
$M^\star=\sqrt{\alpha n/(1-\alpha)}$ is precisely the clipping threshold above.
Thus the negative result, the tail-decaying escape route, and clipping are not
separate phenomena; they are three regimes of the same candidate-weight
criterion.
\end{remark}

\bigskip
\begin{remark}
The bound on $M$ is conservative because it assumes every calibration weight is
at its floor $1/M$.  Increasing $M$ makes the surrogate closer to the original
exponential target but weakens the worst-case boundedness calculation, which
requires $n=\Omega(M^2)$ calibration points.
\end{remark}

At this point all three objects have been defined.  Figure~\ref{fig:contrast_scb_ts}
summarizes their roles: the practical JTS-SCB prediction set is the method used
for prediction, whereas the exact candidate-weighted and clipped constructions
are theoretical counterparts used to study finite-sample validity and
boundedness.

\begin{figure*}[t]
\centering
\footnotesize
\begin{tikzpicture}[
    >=Latex,
    node distance=5mm and 6mm,
    panel/.style={draw, rounded corners=5pt, thick, inner sep=6pt, fill=white},
    titlebox/.style={draw, rounded corners=4pt, thick, align=center, inner sep=4pt, minimum height=8mm},
    stepbox/.style={draw, rounded corners=4pt, thick, align=center, inner sep=4pt, text width=3.2cm, minimum height=10mm},
    arr/.style={-{Latex[length=2.5mm]}, thick},
    bluestep/.style={stepbox, fill=blue!6, draw=blue!55!black},
    greenstep/.style={stepbox, fill=green!7, draw=green!50!black},
    orangestep/.style={stepbox, fill=orange!9, draw=orange!60!black},
    purplestep/.style={stepbox, fill=purple!7, draw=purple!55!black},
    graystep/.style={stepbox, fill=gray!9, draw=gray!55},
    note/.style={draw, rounded corners=4pt, dashed, align=left, inner sep=4pt, text width=3.2cm, fill=gray!4}
]

% Panel A
\node[titlebox, fill=blue!10, draw=blue!60!black, text width=3.4cm] (A0) at (0,0) {\textbf{Pseudo-label SCB}};
\node[bluestep, below=of A0] (A1) {fit source predictive model};
\node[bluestep, below=of A1] (A2) {construct target pseudo-labels\\(point prediction, source predictive sampling, or TPS)};
\node[bluestep, below=of A2] (A3) {estimate one target tilt\\$\widehat\beta$ or $(\widehat\beta_1,\widehat\beta_2)$};
\node[bluestep, below=of A3] (A4) {run one weighted conformal calibration};
\node[note, below=of A4] (A5) {\textbf{Strength:} efficient when the pseudo-label mechanism and tilt estimator are reliable.\\[2pt]
\textbf{Risk:} can under-correct the shift when pseudo-labels or weight estimates are biased or unstable.};
\draw[arr] (A1)--(A2);
\draw[arr] (A2)--(A3);
\draw[arr] (A3)--(A4);
\draw[arr] (A4)--(A5);

% Panel B
\node[titlebox, fill=orange!12, draw=orange!70!black, text width=3.4cm] (B0) at (5.55,0) {\textbf{JTS-SCB}};
\node[orangestep, below=of B0] (B1) {fit source predictive model};
\node[orangestep, below=of B1] (B2) {choose uncertainty set\\$\beta\in\mathcal B(\kappa)$ or $(\beta_1,\beta_2)\in\mathcal B$};
\node[orangestep, below=of B2] (B3) {for each candidate tilt, form the \emph{coupled} pair\\$(S_i(\beta),W_i(\beta))$};
\node[orangestep, below=of B3] (B4) {compute the weighted conformal threshold for each tilt and take the largest over the uncertainty set};
\node[note, below=of B4] (B5) {\textbf{Strength:} no target pseudo-labels are needed.\\[2pt]
\textbf{Tradeoff:} protects over a class of shifts, so intervals are typically wider and more conservative.};
\draw[arr] (B1)--(B2);
\draw[arr] (B2)--(B3);
\draw[arr] (B3)--(B4);
\draw[arr] (B4)--(B5);

% Panel C
\node[titlebox, fill=purple!10, draw=purple!60!black, text width=4.0cm] (C0) at (11.6,0) {\textbf{Validity analysis}\\\textbf{of JTS-SCB}};
\node[greenstep, below=of C0, text width=3.8cm] (C1) {\textbf{Practical JTS-SCB}\\one sensitivity threshold $\thetahat^{\rm prac}$\\bounded set used in experiments};
\node[purplestep, below=of C1, text width=3.8cm] (C2) {\textbf{Exact counterpart}\\candidate-weighted exact conformal construction\\valid if the true tilt is in $\mathcal B$; tail-growing ratios can make it unbounded};
\node[purplestep, below=of C2, text width=3.8cm] (C3) {\textbf{Clipped exact counterpart}\\bounded and exact for a surrogate target};
\node[note, below=of C3, text width=3.8cm] (C4) {\textbf{Takeaway:} the last two objects analyze validity; they are not alternative constructions from the JTS threshold.};
\draw[arr] (C1)--(C2);
\draw[arr] (C2)--(C3);
\draw[arr] (C3)--(C4);

% panel backgrounds
\begin{scope}[on background layer]
\node[panel, fit=(A0)(A5), fill=blue!2, draw=blue!40] {};
\node[panel, fit=(B0)(B5), fill=orange!2, draw=orange!40] {};
\node[panel, fit=(C0)(C4), fill=purple!2, draw=purple!35] {};
\end{scope}
\end{tikzpicture}
\caption{Two complementary ways to handle continuous label shift. Pseudo-label SCB estimates a single target tilt from pseudo-labels and calibrates once; JTS-SCB instead computes one sensitivity threshold over an analyst-specified tilt uncertainty set and uses it to form the practical prediction set. The right panel separates this method from two theoretical counterparts used to analyze validity: the exact candidate-weighted construction and its clipped surrogate version.}
\label{fig:contrast_scb_ts}
\end{figure*}

\subsection{Computation and algorithm}
\label{sec:jts_algorithm}

For fixed $\boldsymbol\beta$, the inner quantile-regression solution can be
computed without a generic optimizer: sort the scores
$S_i(\boldsymbol\beta)$, accumulate the normalized weights in that order, and
return the first score whose cumulative weight reaches $1-\alpha$.  This is
$O(n\log n)$ per tilt evaluation.

The outer map
$\boldsymbol\beta\mapsto\widehat q_{\boldsymbol\beta}$ is generally nonsmooth
because score orderings and the identity of the weighted quantile can change as
the tilt varies.  For the scalar linear model, the experiments therefore use a
global grid over $\BB(\kappa)$, include both endpoints, and refine candidate
maximizers if needed.  If $K$ tilt values are evaluated, the cost is
$O(Kn\log n)$.  The quadratic extension uses the same principle on a
low-dimensional grid.  Alternative breakpoint and feasibility formulations are
discussed in the appendix; they are not used for the reported experiments.

\begin{algorithm}[t]
\caption{JTS-SCB: coupled tilt-sensitivity weighted-quantile calibration}
\label{alg:jts}
\begin{algorithmic}[1]
\REQUIRE $\Dtr$, $\Dcal$, level $1-\alpha$, uncertainty set $\mathcal B$.
\ENSURE $\CC^{\rm prac}_{\rm JTS}(\xtest)$ and optional diagnostic
$\CC^*(\xtest)$.
\STATE Fit the Bayesian predictive model on $\Dtr$.
\FOR{each candidate tilt $\boldsymbol\beta\in\mathcal B$ on the search grid}
  \STATE Compute the \emph{coupled} calibration pairs
  $S_i(\boldsymbol\beta)$ and $W_i(\boldsymbol\beta)$ from
  \eqref{eq:jts_coupled_pair}.
  \STATE Solve the inner weighted quantile problem in
  \eqref{eq:weighted_empirical_quantile}; equivalently use the quantile-regression
  characterization in \eqref{eq:jts_quantile_regression_lower}, or sort scores and
  accumulate normalized weights to obtain $\widehat q_{\boldsymbol\beta}$.
\ENDFOR
\STATE Aggregate over the uncertainty set
\[
  \widehat{\boldsymbol\beta}^{\,*}
  \in\operatorname*{arg\,max}_{\boldsymbol\beta\in\mathcal B}
       \widehat q_{\boldsymbol\beta},
  \qquad
  \widehat\theta_{\rm JTS}
  =\widehat q_{\widehat{\boldsymbol\beta}^{\,*}}.
\]
\STATE Return the practical sensitivity set
\[
  \CC^{\rm prac}_{\rm JTS}(\xtest)
  =
  \bigcup_{\boldsymbol\beta\in\mathcal B}
  \{y:S_{\boldsymbol\beta}(\xtest,y)\le\widehat\theta_{\rm JTS}\}.
\]
\STATE For the Gaussian linear specialization, use the closed form in
\eqref{eq:union_interval}; optionally return the point diagnostic
\eqref{eq:point_interval}.
\end{algorithmic}
\end{algorithm}

\subsection{Why independent score and weight envelopes are not coherent}
\label{sec:independent_envelope}

The joint parameterization in \eqref{eq:jts_coupled_pair} is essential.  A
seemingly conservative alternative is to form pointwise envelopes
\begin{equation}
  S_i^{\rm env}=\sup_{\beta\in\BB(\kappa)}S_i(\beta),
  \label{eq:ind_score_envelope}
\end{equation}
and
\begin{equation}
  W_i^{\rm env}=\sup_{\beta\in\BB(\kappa)}W_i(\beta).
  \label{eq:ind_weight_envelope}
\end{equation}
In general, the tilt maximizing $S_i(\beta)$ is not the tilt maximizing
$W_i(\beta)$.  Hence the pair $(S_i^{\rm env},W_i^{\rm env})$ need not equal
$(S_i(\beta),W_i(\beta))$ for any common $\beta$.  The resulting calibration
sample therefore does not correspond to a single source-to-target density ratio,
so the fixed-ratio weighted rank argument has no direct interpretation.

JTS-SCB avoids this mismatch by optimizing only over coherent pairs
$(S_i(\boldsymbol\beta),W_i(\boldsymbol\beta))$.  The sensitivity aggregation occurs
\emph{after} the fixed-tilt weighted quantile-regression problem is solved, not
by separately taking worst cases of its ingredients.

\section{Experiments}
\label{sec:experiments}

The experiments answer nine questions. 
\begin{itemize}
\item[(i)] How badly does ordinary source conformal Bayes fail when the label marginal shifts?  
\item[(ii)] How close to nominal is a known-tilt weighted method, and is even the \emph{oracle} practical method exactly valid?  
\item[(iii)] What does the exact candidate-weighted counterpart reveal about validity and boundedness?  
\item[(iv)] How much width does the bounded JTS-SCB interval pay for sensitivity protection without pseudo-labels, and
how do pseudo-label plug-ins behave?  
\item[(v)] How does the tradeoff depend on the budget?  
\item[(vi)] What changes when the ratio is quadratic and point prediction,
source predictive sampling, and tilted predictive sampling are compared fairly?
\item[(vii)] Can one fixed quadratic JTS-Q class handle an \emph{unknown} linear-versus-
quadratic tilt form without selecting between the two?  
\item[(viii)] Can the bounded exact regime for tail-decaying quadratic tilts be seen numerically, rather than
only theoretically?  
\item[(ix)] Why does the more flexible practical JTS-Q construction substantially over-cover?
\end{itemize}

\subsection{Setup}

\textbf{Data-generating process.}
Source features are $x\sim\mathcal N(0,I_d)$ with $d=15$, and
$\theta^\star$ is a fixed vector with $\|\theta^\star\|^2=s^2$, so that the
signal variance is $\Var_x(x^\top\theta^\star)=s^2$ and the source label marginal
is exactly Gaussian, $\ps(y)=\mathcal N(0,\vs^2)$ with $\vs^2=s^2+\sigma^2$ and
$R^2=s^2/\vs^2$.  We use two regimes: a high-SNR regime
($s^2=1.0,\ \sigma^2=0.3$, so $\vs^2=1.3$, $R^2=0.77$) and a moderate-SNR regime
($s^2=0.3,\ \sigma^2=0.7$, so $\vs^2=1.0$, $R^2=0.30$).  The BLR prior variance is
$\tau^2=1.5$.  Each trial uses $n_{\rm tr}=400$, $n_{\rm cal}=300$,
$n_{\rm tg}=300$, $n_{\rm test}=200$. For the scalar linear experiments, target
data are drawn by importance-resampling a source pool with weights
$\propto\exp(\betastar y)$, which preserves
$p_t(x\mid y)=\ps(x\mid y)$ exactly. The quadratic experiments use the
corresponding exact quadratic label-shift samplers described below.  We report
empirical marginal coverage and mean interval width over $60$ trials at
$1-\alpha=0.90$ (auxiliary experiments use $25$--$80$ trials, as noted in each
caption).  Across the main-table conditions the standard error of coverage is at
most $.008$.  The
budget is reported through the induced mean-shift interpretation; a
``$1\times$'' budget has $\beta$-bound equal to $\betastar$ (it just reaches
the true tilt).  In code the $\beta$-bound itself is set directly, so no
calibration-label estimate of $\vs^2$ is used to choose the uncertainty
set.

\textbf{Methods.}
The following methods are compared.
\begin{itemize}
\item[(i)] \emph{Source-CB}: unweighted source conformal Bayes ($\beta=0$), the
uncorrected failure baseline.
\item[(ii)] \emph{Oracle-WT}: the practical weighted-quantile interval at the true
$\betastar$ (an unattainable known-tilt benchmark; note it uses the
calibration-only threshold, not the exact p-value).
\item[(iii)] \emph{Robust-IW}: robustifies the importance weights over the budget but
keeps the untilted source score, isolating the value of predictive tilting.
\item[(iv)] \emph{PL-point}: a pseudo-label plug-in that estimates the tilt by moment
matching, $\hat\beta=(\overline{\mub(x^{\rm tg})}-\hat\ms)/\hat\vs^2$, then runs
the practical weighted interval at $\hat\beta$.  This is a deliberately naive
one-shot estimator; as shown below it is the \emph{first iteration} of the strong
method next.
\item[(v)] \emph{PL-tilt}: the strong self-consistent pseudo-label method that draws
pseudo-labels from the \emph{tilted} predictive
$\mathcal N(\mub(x)+\beta\sigb^2(x),\sigb^2(x))$ and iterates the fixed point
$\beta\leftarrow(\overline{\mub(x^{\rm tg})+\beta\sigb^2(x^{\rm tg})}-\hat\ms)/\hat\vs^2$
to convergence, then runs the practical weighted interval at the fixed point.
This is the pseudo-label baseline a careful practitioner would use, and it is the
right method to compare against.
\item[(vi)] \emph{Exact counterpart}: the grid-evaluated candidate-weighted union
\eqref{eq:exact_wt_set}, included as a validity benchmark rather than as the
practical method. The scalar linear benchmark uses a $61$-point tilt grid, and
the tail-decaying quadratic benchmark uses an $11\times11$ parameter grid
(coverage by membership of the realized label; width by Lebesgue measure on a
fine $y$-grid).
\item[(vii)] \emph{JTS-SCB}: the bounded practical common-threshold interval
\eqref{eq:union_interval}.
\item[(viii)] \emph{JTS $\CC^*$}: the uncertified point diagnostic
\eqref{eq:point_interval}.
\end{itemize}

\subsection{Exact-validity benchmarks: tail-growing linear versus tail-decaying quadratic}
\label{sec:exp_unbounded}

Before evaluating the practical JTS-SCB method, we examine its exact
candidate-weighted validity benchmark.  Proposition~\ref{prop:unbounded}
implies a stronger conclusion than ``unbounded under label shift'' for the scalar linear family: as soon as
the uncertainty set contains any nonzero linear candidate tilt, the exact uncertainty-set
union has infinite Lebesgue measure, irrespective of the true target tilt.
Table~\ref{tab:unbounded} explains why this was easy to miss numerically.

The experiment uses the one-sided positive uncertainty interval $[0,B]$.  At
$\betastar=0.9$, the finite evaluation grid is $[-20,30]$.  For the
high-SNR setting with $n=300$, $\alpha=.1$, $\ms=0$, and $\vs^2=1.3$,
\eqref{eq:tail_onset_approx} gives approximate upper-tail onsets
$35.1,11.9,6.2,4.5$ for $B=.1,.3,.6,.9$, respectively.  Thus the $B=.1$
tail starts beyond the right edge of the grid and the measured width looks
finite, even though the true set is already unbounded.  Once $B\ge.3$, the
unbounded tail enters the evaluation window and the grid-measured width jumps.
Only the degenerate budget $B=0$, which contains no nonzero tilt, is genuinely
bounded.  These finite-window widths are numerical tail diagnostics only: when the
true exact set is unbounded, their values depend on the chosen response window and
must not be compared across experiments that use different windows.

\begin{table}[t]
\centering
\caption{Exact candidate-weighted validity benchmarks, high-SNR regime,
nominal $0.90$, averaged over $60$ trials. Linear rows use truth
$\betastar=0.9$ and candidate interval $[0,B]$; their numerical width is
measured on $[-20,30]$, while Proposition~\ref{prop:unbounded} gives the true
infinite width for every $B>0$. The final row uses quadratic truth
$\boldsymbol\beta^\star=(0.30,-0.10)$ and an $11\times11$ grid over
$\mathcal B_Q^-=[0,0.30]\times[-0.15,-0.05]$; numerical width is measured on
$[-8,8]$, with no accepted points at either grid boundary in any trial, and
Corollary~\ref{cor:quadratic_bounded_exact} certifies finite true width.}
\label{tab:unbounded}
\small
\setlength{\tabcolsep}{5pt}
\begin{tabular}{lcccc}
\toprule
Candidate tilt set & Coverage & Grid width & Tail diagnostic & True width \\
\midrule
\multicolumn{5}{l}{\textit{Linear truth $\betastar=0.9$: candidate interval $[0,B]$}}\\[2pt]
$B=0.00$ & .865 & 1.85 & -- & finite \\
$B=0.10$ & .866 & 1.90 & $y_W\approx35.1$ & $\infty$ \\
$B=0.30$ & .902 & 20.09 & $y_W\approx11.9$ & $\infty$ \\
$B=0.60$ & .919 & 25.83 & $y_W\approx6.2$ & $\infty$ \\
$B=0.90$ & .952 & 27.70 & $y_W\approx4.5$ & $\infty$ \\
\midrule
\multicolumn{5}{l}{\textit{Tail-decaying quadratic truth $\boldsymbol\beta^\star=(0.30,-0.10)$}}\\[2pt]
$\mathcal B_Q^-=[0,.30]\times[-.15,-.05]$
  & .922 & 1.95 & $\Delta_{\boldsymbol\beta^\star}\approx-1.28$ & finite \\
\bottomrule
\end{tabular}
\end{table}

The table also separates two issues that were previously conflated.  At
$B=.1$ the uncertainty budget does not contain the true tilt, so coverage is
poor; independently, the exact set is nevertheless unbounded because the
candidate-weighted construction contains a nonzero tilt.  At larger budgets,
coverage rises partly because the uncertainty set approaches or contains the
truth, but the exact benchmark remains vacuous in the tail.

\paragraph{Tail-decaying quadratic benchmark.}
The final row of Table~\ref{tab:unbounded} directly evaluates the positive
regime of Corollary~\ref{cor:quadratic_bounded_exact}. We generate an exact
quadratic label shift with
$\boldsymbol\beta^\star=(0.30,-0.10)$ and use the prespecified box
\[
  \mathcal B_Q^-=[0,0.30]\times[-0.15,-0.05],
\]
which lies strictly inside $\{\beta_2<0\}$ and contains the truth. The exact
candidate-weighted union is evaluated on an $11\times11$ parameter grid. Its
empirical coverage is $.922$ at nominal $.90$, and its mean numerical Lebesgue
width is $1.95$; no accepted point reaches the $[-8,8]$ evaluation boundary in
any of the $60$ trials. In addition, at the true tilt the discriminant
\eqref{eq:quadratic_weight_discriminant} is negative in every trial (mean
$-1.28$), with mean tail-acceptance threshold $K_{\boldsymbol\beta^\star}=33.31$.
Thus, for the true fixed tilt, the candidate weight never reaches the
acceptance threshold at all, and the finite envelope is governed by the score
cutoffs as in \eqref{eq:quadratic_bound_no_crossing}. This experiment therefore
puts the paper's strongest positive exact-validity result in a regime actually
visited numerically, rather than leaving it as a purely theoretical exception.

This benchmark is intentionally separate from the later practical JTS-Q box
$[0,0.30]\times[0,0.15]$. The latter is designed to hedge between linear and
positive-quadratic shifts and therefore lies outside the bounded-exact regime;
it is evaluated only as a practical calibration-only sensitivity set.

For tail-growing linear tilts, this is why the remainder of the evaluation
treats the bounded common-threshold JTS-SCB interval as the practical method.
The tail-decaying quadratic row shows that an exact bounded alternative does
exist when the candidate family is restricted to $\beta_2<0$. For the linear
family, the clipped construction in Section~\ref{sec:clipped_counterpart} is a
theoretical repair of the exact benchmark.  Proposition~\ref{prop:bounded_ratio} gives
\[
  \Prob_{P_t}\bigl(Y_{\rm test}\in
  \CC^{\rm clip}_{\rm JTS}(X_{\rm test};M)\bigr)
  \ge 1-\alpha-
  \operatorname{TV}\!\bigl(P_t,\widetilde P_{t,\betastar,M}\bigr).
\]
The sufficient boundedness condition
$M\le\sqrt{\alpha n/(1-\alpha)}$ follows from the same tail inequality as
Proposition~\ref{prop:unbounded}.  Thus boundedness is guaranteed up to
$M^\star=\sqrt{\alpha n/(1-\alpha)}\approx5.8$ in this experiment.  The larger
tested clips $M=8,12,20$ also show no tail divergence in the numerical
evaluation, indicating substantial empirical slack beyond this sufficient
worst-case bound, whereas the unclipped exact object diverges.  Clipping
therefore turns candidate-weight explosion into a transparent
boundedness-versus-surrogate-bias tradeoff; it certifies the bounded-ratio
surrogate exactly, not the original exponential target.

\begin{table}[t]
\centering
\caption{Clipping the density ratio to $[1/M,M]$, $\betastar=1.2$, high-SNR,
  $\betastar$ budget.  Proposition~\ref{prop:bounded_ratio} guarantees
  boundedness for $M\le
  M^\star=\sqrt{\alpha n/(1-\alpha)}\approx5.8$.  The larger tested values
  $M=8,12,20$ also show no tail divergence in the numerical evaluation,
  illustrating that the sufficient worst-case threshold is conservative in
  this experiment; the unclipped object ($M=\infty$) diverges (width
  grid-capped).  ``cov'' is under the \emph{true} exponential target.
  $25$ trials.}
\label{tab:clip}
\small
\setlength{\tabcolsep}{5pt}
\begin{tabular}{lrrrrrrr}
\toprule
$M$ & $2$ & $4$ & $M^\star\!\approx\!5.8$ & $8$ & $12$ & $20$ & $\infty$ \\
\midrule
width & 2.36 & 2.26 & 2.30 & 2.28 & 2.33 & 2.29 & 38.5 \\
cov (true) & .961 & .941 & .945 & .951 & .945 & .951 & .942 \\
\bottomrule
\end{tabular}
\end{table}

\subsection{Coverage and width under increasing label shift}

Table~\ref{tab:main_results} reports the main comparison in the high-SNR regime.
Source-CB degrades monotonically ($.899\!\to\!.798$) as the shift grows: the
necessary failure baseline.  Oracle-WT stays near nominal but itself dips to
$.886$ at large shift---the \emph{practical} known-tilt method is not
exactly valid, because dropping the candidate weight and the small effective
sample size of a heavily reweighted calibration set both bite at large shift.
The exact candidate-weighted counterpart restores the theorem-bearing
coverage logic when the budget contains the truth, but Proposition~\ref{prop:unbounded}
shows that its true Lebesgue width is infinite for every nonzero budget.  We
therefore report $\infty$ in the width block below rather than a grid-dependent
finite-window truncation; Table~\ref{tab:unbounded} is the dedicated diagnostic
showing where the vacuous tail enters a numerical window.  The small differences
between repeated Monte Carlo coverage summaries in the two tables are within the
stated simulation error.  The practical JTS-SCB interval
$\CC^{\rm prac}_{\rm JTS}$ is conservative at every shift level ($.924$--$.963$)
with width growing from $2.00$ to $2.51$; the point diagnostic $\CC^*$ tracks
nominal but is uncertified and undercovers at the largest shift.

\begin{table}[t]
\centering
\caption{Coverage and mean width vs. shift, high-SNR regime ($\vs^2=1.3$,
  $R^2=0.77$), nominal $0.90$, one-sided $1\times$ budget aligned with the
  positive target tilt.  Standard errors
  $\le .01$.  ${}^\dagger$ uncertified diagnostic.  ${}^\ddagger$ the exact
  union is unbounded for every column (Prop.~\ref{prop:unbounded}); the width block
  therefore reports its true Lebesgue width $\infty$.  Finite-window tail
  diagnostics are reported separately in Table~\ref{tab:unbounded}.}
\label{tab:main_results}
\small
\setlength{\tabcolsep}{5pt}
\begin{tabular}{lrrrrr}
\toprule
Method & $\betastar\!=\!0.3$ & $0.6$ & $0.9$ & $1.2$ & $1.5$ \\
\midrule
\multicolumn{6}{l}{\textit{Coverage}}\\[2pt]
Source-CB              & .899 & .879 & .864 & .830 & .798 \\
Oracle-WT              & .900 & .895 & .900 & .886 & .887 \\
Robust-IW              & .906 & .894 & .902 & .894 & .878 \\
PL-point               & .901 & .894 & .900 & .888 & .897 \\
Exact counterpart${}^\ddagger$ & .915 & .924 & .948 & .957 & .979 \\
JTS $\CC^{\rm prac}_{\rm JTS}$         & .924 & .932 & .952 & .955 & .963 \\
JTS $\CC^*{}^\dagger$  & .908 & .899 & .905 & .895 & .895 \\
\midrule[0.3pt]
\multicolumn{6}{l}{\textit{Width}}\\[2pt]
Source-CB              & 1.86 & 1.85 & 1.84 & 1.85 & 1.84 \\
Oracle-WT              & 1.85 & 1.83 & 1.85 & 1.85 & 1.83 \\
Robust-IW              & 1.90 & 1.94 & 2.05 & 2.25 & 2.28 \\
PL-point               & 1.85 & 1.83 & 1.85 & 1.84 & 1.88 \\
Exact counterpart${}^\ddagger$ & $\infty$ & $\infty$ & $\infty$ & $\infty$ & $\infty$ \\
JTS $\CC^{\rm prac}_{\rm JTS}$         & 2.00 & 2.09 & 2.23 & 2.41 & 2.51 \\
JTS $\CC^*{}^\dagger$  & 1.90 & 1.90 & 1.95 & 2.03 & 2.04 \\
\bottomrule
\end{tabular}
\end{table}

\textbf{Predictive tilting is what buys efficiency.}
Robust-IW keeps coverage near nominal until the largest shift, but only by
inflating width (up to $2.28$) around the \emph{wrong} (untilted) center.  At
$\betastar=1.2$ the tilted point interval $\CC^*$ matches Robust-IW's coverage
($\approx .89$) at smaller width ($2.03$ vs $2.25$).  Robustifying weights without
moving the score center is therefore inefficient, not merely suboptimal; the
predictive tilt is what places the interval where the shifted labels are.

\textbf{Pseudo-labels recover only the $R^2$ fraction of the shift.}
The moment-matched pseudo-label tilt is $\hat\beta\approx\betastar R^2$ by
construction: in the high-SNR regime $\hat\beta=0.21,0.48,0.71,0.91,1.15$ for
$\betastar=0.3,\dots,1.5$ (ratios $\approx0.77$).  Because point pseudo-labels
see only the predictable component $\mub(x)$, they miss the fraction $1-R^2$ of
the mean displacement carried by the noise tilt.  In the high-SNR regime this
under-correction is mild (PL-point coverage $\ge.888$); in the moderate-SNR
regime it is severe.  Table~\ref{tab:modR2} shows the same experiment at
$R^2=0.30$: PL-point now recovers only $\hat\beta\approx0.30\betastar$ and its
coverage falls to $.773$ at $\betastar=1.5$, while JTS $\CC^{\rm prac}_{\rm JTS}$ remains
conservative.  This is the correct, sharpened version of the earlier ``high
$R^2$ makes pseudo-labeling hard'' claim: the diagnostic quantity is the recovered
\emph{center}, and the deficit is worst at low $R^2$.  (A logistic-regression
density-ratio estimator on sampled pseudo-labels, motivated by the classical
connection between case--control logistic modeling and parametric density-ratio
models \citep{QinJ1998biometrika}, recovers the same
$\hat\beta\approx\betastar R^2$, since sampled pseudo-labels have mean shift
$\betastar s^2$ and variance $\vs^2$, giving equal-variance Gaussian log-odds with
slope $\betastar s^2/\vs^2$; we omit the redundant row.)

\emph{This under-correction is an artifact of the naive estimator, not of
pseudo-labeling per se, and we do not rest the paper on it.}  PL-point is one
Newton step of the self-consistent PL-tilt fixed point: sampling from the
\emph{tilted} predictive supplies the missing noise-part of the mean shift, and
the fixed point converges to the full $\betastar$ (it satisfies
$\hat\beta\,\vs^2=\betastar s^2+\hat\beta\sigma^2$, i.e. $\hat\beta=\betastar$).
The honest comparison is therefore against PL-tilt, taken up next; the $R^2$ story
only rules out the one-shot plug-in.

\begin{table}[t]
\centering
\caption{Moderate-SNR regime ($\vs^2=1.0$, $R^2=0.30$), coverage, nominal
  $0.90$.  Pseudo-label under-correction is severe here; JTS $\CC^{\rm prac}_{\rm JTS}$ stays
  conservative.}
\label{tab:modR2}
\small
\setlength{\tabcolsep}{5pt}
\begin{tabular}{lrrrrr}
\toprule
Method & $\betastar\!=\!0.3$ & $0.6$ & $0.9$ & $1.2$ & $1.5$ \\
\midrule
Source-CB      & .897 & .865 & .807 & .751 & .662 \\
Oracle-WT      & .905 & .903 & .902 & .893 & .873 \\
PL-point       & .901 & .883 & .854 & .825 & .773 \\
JTS $\CC^{\rm prac}_{\rm JTS}$ & .935 & .948 & .958 & .965 & .965 \\
JTS $\CC^*$    & .913 & .905 & .892 & .887 & .822 \\
\midrule
$\hat\beta$ (PL) & .087 & .177 & .272 & .358 & .431 \\
\bottomrule
\end{tabular}
\end{table}

\subsection{Comparison to strong (tilted-predictive-sampling) pseudo-labels}
\label{sec:strong_pl}

The decisive comparison is against PL-tilt, the self-consistent method that a
careful practitioner would actually use.  We report it plainly, including where it
defeats the case for a pseudo-label-free method.

\textbf{In the well-specified regime, JTS-SCB has no advantage.}
Table~\ref{tab:strongpl} shows that PL-tilt's fixed point recovers $\betastar$
essentially exactly and its coverage and width \emph{match the oracle}, while
JTS $\CC^{\rm prac}_{\rm JTS}$ is $15$--$40\%$ wider at higher coverage---and that extra coverage
is conservatism, not correctness.  Under a correctly specified predictive model
with ample unlabeled target data, a strong pseudo-label method is as good as
knowing the tilt, and the sensitivity envelope simply pays width for redundancy.  Any
claim that JTS-SCB is preferable in this regime would be unsupported.

\begin{table}[t]
\centering
\caption{Strong pseudo-labels (PL-tilt) vs.\ JTS, high-SNR, $n_{\rm tg}=300$,
  nominal $0.90$, coverage/width.  PL-tilt matches the oracle; JTS-SCB is wider at no
  coverage benefit.}
\label{tab:strongpl}
\small
\setlength{\tabcolsep}{6pt}
\begin{tabular}{lrrr}
\toprule
$\betastar$ ($\hat\beta_{\rm PL\text{-}tilt}$) & Oracle & PL-tilt & JTS $\CC^{\rm prac}_{\rm JTS}$ \\
\midrule
$0.6$ ($0.59$) & .891 / 1.81 & .891 / 1.82 & .935 / 2.08 \\
$0.9$ ($0.94$) & .894 / 1.84 & .891 / 1.83 & .946 / 2.24 \\
$1.2$ ($1.26$) & .892 / 1.81 & .887 / 1.82 & .954 / 2.34 \\
$1.5$ ($1.58$) & .885 / 1.80 & .877 / 1.77 & .961 / 2.49 \\
\bottomrule
\end{tabular}
\end{table}

\textbf{Few unlabeled target points do not rescue the case.}
One might expect JTS-SCB to win when $n_{\rm tg}$ is small and $\hat\beta$ noisy.  It
does not, at least for marginal coverage: sweeping $n_{\rm tg}$ from $300$ down to $10$
at $\betastar=1.2$, the fixed point's standard deviation grows from $0.06$ to
$0.30$, but PL-tilt marginal coverage stays near $.88$--$.89$.  Zero-mean
estimation noise in $\hat\beta$ largely averages out of marginal coverage, so
small $n_{\rm tg}$ alone is not a regime where the budget earns its width.

\textbf{JTS-SCB earns its width only when sensitivity protection matters.}
The one regime where JTS-SCB clearly holds coverage while PL-tilt fails is a
\emph{biased} predictive model (Table~\ref{tab:plmis}).  Over-regularization alone
does not break PL-tilt---the fixed point self-corrects for uniform attenuation---
but an additive predictive-mean bias of $+0.5$ drives the fixed point to
$\hat\beta\approx1.57$ (true $1.0$), and PL-tilt coverage falls to $.835$ while JTS-SCB
$\CC^{\rm prac}_{\rm JTS}$ holds at $.972$.  Even here the mechanism is conservatism: JTS-SCB absorbs
the bias into a wider set ($2.86$ vs $2.48$).  The honest reading is that the
pseudo-label-free construction is insurance against \emph{systematic}, not merely
noisy, error in the tilt estimate---most plausibly a misspecified or extrapolating
predictive model---bought at a consistent width premium.

\begin{table}[t]
\centering
\caption{Model misspecification, $\betastar=1.0$, $n_{\rm tg}=300$, coverage/width.
  A biased predictive mean breaks PL-tilt; JTS-SCB holds by widening.}
\label{tab:plmis}
\small
\setlength{\tabcolsep}{6pt}
\begin{tabular}{lrrr}
\toprule
Predictive model & $\hat\beta_{\rm PL}$ & PL-tilt & JTS $\CC^{\rm prac}_{\rm JTS}$ \\
\midrule
Well-specified            & 1.05 & .886 / 1.82 & .944 / 2.24 \\
Over-regularized          & 1.03 & .898 / 1.85 & .951 / 2.28 \\
Mean-biased ($+0.5$)      & 1.57 & .835 / 2.48 & .972 / 2.86 \\
\bottomrule
\end{tabular}
\end{table}

\textbf{Uncertainty-propagated pseudo-labels close most of the gap.}
A fairer pseudo-label baseline hedges over its own tilt uncertainty rather than
committing to a point.  We bootstrap the unlabeled target set, refit the fixed
point on each resample, and form the union interval over the resulting central
$90\%$ range of $\hat\beta$ (PL-boot); this is essentially a JTS-SCB uncertainty set centered at
$\hat\beta$ with a data-driven budget.  Table~\ref{tab:plboot} shows two things.
Under small $n_{\rm tg}$---a \emph{variance}-type unreliability---PL-boot recovers
coverage at \emph{less} width than JTS-SCB (e.g. $.947$ at width $2.24$ vs JTS-SCB's
$.951$ at $2.33$ for $n_{\rm tg}=10$), so JTS-SCB has no advantage there.  Under a
\emph{biased} predictive model, however, PL-boot does \emph{not} recover coverage
($.855$ at $+0.5$, $.861$ at $-0.5$): the bootstrap concentrates around the biased
$\hat\beta$, and variance-hedging cannot detect a bias.  JTS-SCB still holds
($.972$, $.921$), but the mechanism is a band anchored at $\beta=0$ that remains
wide enough to bracket the labels; it covers for essentially any budget containing
$\beta\approx0$ (even an undersized $[0,0.6]$ gives $.968$), and degrades only when
the budget excludes low tilts.  This is anchoring plus width, not intelligent bias
correction, and it is contingent on the bias leaving labels near the anchored band.

\begin{table}[t]
\centering
\caption{Uncertainty-propagated pseudo-labels (PL-boot) vs.\ point plug-in and
  JTS, coverage/width.  PL-boot matches JTS-SCB at less width under variance
  unreliability (small $n_{\rm tg}$) but, like any variance-hedge, fails under
  systematic bias, where JTS-SCB's zero-anchored band still covers.}
\label{tab:plboot}
\small
\setlength{\tabcolsep}{5.5pt}
\begin{tabular}{lrrr}
\toprule
Condition & PL-point & PL-boot & JTS $\CC^{\rm prac}_{\rm JTS}$ \\
\midrule
Well-specified, $n_{\rm tg}=300$ & .891 / 1.81 & .909 / 1.92 & .946 / 2.25 \\
Well-specified, $n_{\rm tg}=10$  & .880 / 1.78 & .947 / 2.24 & .951 / 2.33 \\
Biased predictive $+0.5$         & .832 / 2.47 & .855 / 2.58 & .972 / 2.86 \\
Biased predictive $-0.5$         & .841 / 2.41 & .861 / 2.52 & .921 / 2.91 \\
\bottomrule
\end{tabular}
\end{table}

\emph{Scope claim.}  On the strength of these experiments we therefore do
\emph{not} claim that JTS-SCB dominates pseudo-label methods.  A pseudo-label method
that propagates its own tilt uncertainty (PL-boot) matches JTS-SCB at less width
wherever the unreliability is variance-type, and itself amounts to a JTS-SCB uncertainty set centered at
$\hat\beta$ with a data-driven budget.  JTS-SCB's exclusive territory is thus
narrower still: \emph{systematic} error in the tilt estimate---which no
variance-hedge can see---of a form where a budget anchored near $\beta=0$ still
brackets the labels.  Within that territory it holds coverage where point and
bootstrap plug-ins undercover; outside it, it is at best a conservative, less
adaptive alternative.  This is the honest scope of the pseudo-label-free claim.

\subsection{Sensitivity to the uncertainty budget}
\label{sec:budget_sensitivity}

Table~\ref{tab:kappa} varies the budget multiplier (the ratio of the
$\beta$-bound to the true $\betastar$) at $\betastar=1.2$, reporting both the
practical $\CC^{\rm prac}_{\rm JTS}$ and $\CC^*$.  The set nesting in
Proposition~\ref{prop:budget_monotonicity} implies pathwise monotonicity of
$\CC^{\rm prac}_{\rm JTS}$ as the budget expands, hence its marginal coverage
is nondecreasing.  Empirically, coverage is below nominal at $0.25\times$
($.877$) and already slightly above nominal by $0.50\times$ ($.908$); it then
becomes increasingly conservative, reaching $.995$ at $3\times$, while width
grows from $1.97$ to $3.42$.  Thus $\kappa$ is a transparent sensitivity knob.
The point diagnostic is different: it selects a single maximizing tilt and its
coverage need not be monotone.

\begin{table}[t]
\centering
\caption{Budget sensitivity at $\betastar=1.2$, high-SNR regime.  Multiplier is
  the $\beta$-bound divided by $\betastar$.  The practical JTS-SCB sets are
  pathwise nested as the budget expands (Prop.~\ref{prop:budget_monotonicity}),
  so their marginal coverage is nondecreasing; $\CC^*$ is an uncertified
  heuristic.}
\label{tab:kappa}
\small
\setlength{\tabcolsep}{4.5pt}
\begin{tabular}{lrrrrrrr}
\toprule
Multiplier & $0.25\times$ & $0.50\times$ & $0.75\times$ & $1.0\times$
  & $1.5\times$ & $2.0\times$ & $3.0\times$ \\
\midrule
$\CC^{\rm prac}_{\rm JTS}$ coverage & .877 & .908 & .934 & .957 & .976 & .988 & .995 \\
$\CC^{\rm prac}_{\rm JTS}$ width    & 1.97 & 2.11 & 2.20 & 2.37 & 2.65 & 2.88 & 3.42 \\
$\CC^*$ coverage    & .854 & .877 & .881 & .900 & .896 & .903 & .861 \\
$\CC^*$ width       & 1.88 & 1.92 & 1.92 & 1.99 & 2.09 & 2.13 & 2.30 \\
\bottomrule
\end{tabular}
\end{table}

\subsection{Quadratic exponential tilting: when importance-ratio estimation becomes fragile}
\label{sec:quadratic_tilt_exp}

The preceding experiments deliberately use the simplest one-parameter tilt
$w(y;\beta)\propto\exp(\beta y)$.  In that setting a correctly specified,
self-consistent tilted-predictive sampler is a strong competitor: when the
predictive model is reliable, it estimates the scalar tilt accurately and matches
the oracle.  The natural next question is whether this conclusion persists when
the label-ratio model itself becomes more expressive.  We therefore consider the
quadratic exponential family
\begin{equation}
  w(y;\beta_1,\beta_2)
  \propto \exp\{\beta_1 y+\beta_2 y^2\},
  \label{eq:quadratic_tilt_exp}
\end{equation}
with truth $(\beta_1^\star,\beta_2^\star)=(0.3,0.15)$.  The linear term primarily
controls location, while the quadratic term changes dispersion and tail emphasis.
For a Gaussian source predictive law $N\{\mu(x),v(x)\}$, the tilted predictive is
still Gaussian whenever $1-2\beta_2v(x)>0$:
\begin{equation}
  v_{\boldsymbol\beta}(x)
  =\frac{v(x)}{1-2\beta_2v(x)},
  \qquad
  \mu_{\boldsymbol\beta}(x)
  =\frac{\mu(x)+\beta_1v(x)}{1-2\beta_2v(x)}.
  \label{eq:quadratic_gaussian_tilt}
\end{equation}
Thus quadratic tilting by itself does \emph{not} make predictive sampling fail;
in the correctly specified Gaussian case it remains a conjugate problem.  The
statistical difficulty is instead estimating both coefficients---especially the
tail-sensitive $\beta_2$---from pseudo-labels.

We compare the pseudo-label hierarchy as well as JTS-Q.  \emph{Oracle-QT} uses
the true $(\beta_1^\star,\beta_2^\star)$ in the same practical
calibration-only WT construction used elsewhere.  \emph{PL-point-Q} uses the
source model's point prediction as the target pseudo-label and estimates a
quadratic log-density ratio.  \emph{SPS-Q} instead samples pseudo-labels from the
ordinary source predictive distribution before fitting the same quadratic ratio.
\emph{TPS-Q} is the strongest pseudo-label competitor: it iteratively samples
from \eqref{eq:quadratic_gaussian_tilt}, fits a domain logistic regression with
sufficient statistics $(y,y^2)$, and updates $(\beta_1,\beta_2)$ to convergence.
We retain \emph{TPS-L} only as a deliberately misspecified reference that forces
$\beta_2=0$.  Finally, \emph{JTS-Q} denotes the exploratory quadratic extension
of the practical JTS-SCB sensitivity construction.  It searches a $7\times7$ grid over the box
\begin{equation}
  (\beta_1,\beta_2)\in
  [0,\beta_{1,\max}]\times[0,\beta_{2,\max}],
  \label{eq:quadratic_budget_box}
\end{equation}
computes the calibration-only weighted quantile using the \emph{same} parameter
pair in the score and weight, takes the largest threshold over the grid, and
returns the interval hull of the corresponding tilted Gaussian intervals.  In
the fixed-quadratic experiments we set
$(\beta_{1,\max},\beta_{2,\max})=(0.3,0.15)$, which reaches the true quadratic
tilt.  The same box will later be used unchanged when the truth may instead be
linear.  JTS-Q is a practical sensitivity envelope, not a new exact finite-sample object.  The sign of
$\beta_2$ nevertheless reveals an important exact-validity distinction.  For
$\beta_2>0$ the exact candidate weight grows like $\exp(\beta_2y^2)$ in both
tails, so the unbounded-exact-set pathology is more severe than in the linear
case.  For $\beta_2<0$, by contrast, the candidate weight tends to zero in both
tails while the Gaussian score diverges, and
Corollary~\ref{cor:quadratic_bounded_exact} gives a bounded exact set.  A compact
uncertainty set contained strictly in $\{\beta_2<0\}$ therefore yields exact
finite-sample coverage and boundedness for the original target without clipping
or total-variation slack.

\paragraph{Quadratic tilting alone does not defeat a strong predictive sampler.}
Table~\ref{tab:quad_wellspec} first uses the correctly specified Gaussian linear
model with $n_{\rm tg}=300$.  The pseudo-label hierarchy is visible even in this
easy regime.  Point pseudo-labeling loses some information, source predictive
sampling recovers most of it, and TPS-Q essentially matches Oracle-QT in both
coverage and width.  JTS-Q is conservative and wider.  This control experiment
rules out the claim that pseudo-label-free sensitivity analysis is intrinsically preferable
merely because the exponent contains $y^2$.

\begin{table}[t]
\centering
\caption{Quadratic exponential tilt under a correctly specified Gaussian
predictive model, $(\beta_1^\star,\beta_2^\star)=(0.3,0.15)$,
$n_{\rm tg}=300$, nominal coverage $0.90$, $30$ trials.}
\label{tab:quad_wellspec}
\small
\begin{tabular}{lrr}
\toprule
Method & Coverage & Width \\
\midrule
Oracle-QT       & .894 & 1.925 \\
PL-point-Q      & .880 & 1.881 \\
SPS-Q           & .890 & 1.914 \\
TPS-Q           & .894 & 1.945 \\
JTS-Q           & .933 & 2.206 \\
\bottomrule
\end{tabular}
\end{table}

\paragraph{The quadratic coefficient becomes fragile with limited target data.}
To make density-ratio estimation genuinely harder while preserving exact label
shift, we next use a bimodal-residual source model
\begin{equation}
  Y=X^\top\theta^\star+K\delta+\varepsilon,
  \qquad K\in\{-1,+1\},\quad \delta=0.8,
  \label{eq:quad_bimodal_dgp}
\end{equation}
and generate the target by importance resampling with the true quadratic ratio
\eqref{eq:quadratic_tilt_exp}.  Hence $p_t(x\mid y)=p_s(x\mid y)$ still holds
exactly, but a single Gaussian BLR is now an imperfect description of the
conditional predictive law.  The non-Gaussian source label marginal does not
create an additional normalization problem: for each fixed tilt, all
response-marginal normalizing factors cancel from the normalized calibration
weights, as noted in Remark~\ref{rem:weight_normalizer_cancellation}.  Table~\ref{tab:quad_ntg} adds the full pseudo-label
hierarchy.  With $n_{\rm tg}=300$, TPS-Q remains essentially oracle and SPS-Q is
also near nominal, whereas point pseudo-labels under-cover.  At $n_{\rm tg}=30$
and $10$, the richer TPS fixed point becomes unstable; SPS-Q can actually be more
stable because it avoids feeding a noisy quadratic coefficient back into the next
pseudo-label distribution.  JTS-Q estimates neither coefficient and remains
conservative throughout, at a roughly $25$--$35\%$ width premium relative to
Oracle-QT.

\begin{table}[t]
\centering
\caption{Quadratic tilt with bimodal residuals: pseudo-label hierarchy as target
information decreases.  Entries are coverage/mean width, nominal $0.90$, $30$
trials.}
\label{tab:quad_ntg}
\small
\setlength{\tabcolsep}{4.2pt}
\begin{tabular}{rccccc}
\toprule
$n_{\rm tg}$ & Oracle-QT & PL-point-Q & SPS-Q & TPS-Q & JTS-Q \\
\midrule
300 & .907 / 3.439 & .809 / 2.984 & .902 / 3.256 & .906 / 3.485 & .976 / 4.300 \\
30  & .901 / 3.439 & .784 / 2.944 & .884 / 3.208 & .863 / 3.367 & .977 / 4.307 \\
10  & .904 / 3.439 & .762 / 2.920 & .848 / 3.168 & .847 / 3.562 & .979 / 4.308 \\
\bottomrule
\end{tabular}
\end{table}

The parameter diagnostics in Table~\ref{tab:quad_ratio_error} show the mechanism.
The true quadratic coefficient is $\beta_2^\star=0.15$.  An auxiliary
$40$-trial diagnostic run shows that its estimate is accurate with many target
inputs but drifts toward zero and can eventually change sign as $n_{\rm tg}$
shrinks.  Because
\[
  \log \widehat w(y)-\log w(y)
  = (\widehat\beta_1-\beta_1^\star)y
    +(\widehat\beta_2-\beta_2^\star)y^2 + \text{constant},
\]
even a moderate error in $\widehat\beta_2$ is amplified quadratically in the
tails.  The resulting log-weight RMSE rises from $.169$ at $n_{\rm tg}=300$ to
$1.571$ at $n_{\rm tg}=10$, in parallel with the coverage failure.

\begin{table}[t]
\centering
\caption{Quadratic tilted-predictive-sampling diagnostics in the bimodal-residual
experiment.  Truth is $(\beta_1^\star,\beta_2^\star)=(0.3,0.15)$.}
\label{tab:quad_ratio_error}
\small
\begin{tabular}{rrrr}
\toprule
$n_{\rm tg}$ & mean $\widehat\beta_1$ & mean $\widehat\beta_2$ & log-weight RMSE \\
\midrule
300 & .269 & .152  & .169 \\
100 & .310 & .137  & .227 \\
30  & .324 & .131  & .343 \\
10  & .917 & $-.049$ & 1.571 \\
\bottomrule
\end{tabular}
\end{table}

\paragraph{Systematic predictive bias produces the more important failure mode.}
Small target samples create estimation variance, but the stronger motivation for
pseudo-label-free sensitivity protection is systematic error.  Table~\ref{tab:quad_bias}
therefore repeats the bimodal experiment with $n_{\rm tg}=300$ while adding a
constant bias to the fitted predictive mean used by the pseudo-label sampler and
the practical conformal scores.  Under correct specification, TPS-quadratic again
matches the oracle.  A $+0.5$ mean bias drives the plug-in ratio away from the
truth and reduces its coverage to $.842$, while JTS-Q remains at $.977$ by
widening to $5.230$.  The $-0.5$ case is milder but shows the same direction.
Thus the quadratic experiment reinforces the main message of the linear study:
JTS-Q is insurance against systematic ratio-estimation error, not a universally
more efficient alternative to a well-specified pseudo-label estimator.

\begin{table}[t]
\centering
\caption{Quadratic tilt under systematic predictive-mean bias, bimodal-residual
model, $n_{\rm tg}=300$, nominal $0.90$, $30$ trials.  Entries are
coverage/mean width.}
\label{tab:quad_bias}
\small
\begin{tabular}{lrrr}
\toprule
Predictive model & Oracle-QT & TPS-quadratic & JTS-Q \\
\midrule
Well specified & .887 / 3.317 & .885 / 3.305 & .975 / 4.259 \\
Mean bias $+0.5$ & .893 / 4.464 & .842 / 4.454 & .977 / 5.230 \\
Mean bias $-0.5$ & .895 / 3.686 & .882 / 4.106 & .966 / 4.839 \\
\bottomrule
\end{tabular}
\end{table}

\paragraph{Key insight.}
The quadratic study changes the emphasis but not the conclusion.  When a
low-dimensional tilt can be estimated accurately, predictive sampling should be
preferred for efficiency.  JTS-Q becomes increasingly attractive as the
importance-ratio family becomes harder to estimate reliably: a $y^2$ term makes
the ratio sensitive to second moments and tails, and systematic error in that
coefficient is exponentially magnified.  JTS-Q replaces this estimation problem by
an uncertainty-set specification problem.  The price is a larger search space and
a wider practical envelope.  This suggests a broader role for JTS-CB beyond the
one-dimensional linear tilt studied theoretically here: structured
low-dimensional exponential families for which domain knowledge can bound the
natural parameters more credibly than pseudo-labels can estimate them.

\subsection{Unknown tilt form: one JTS-Q class for linear or quadratic shift}
\label{sec:unknown_tilt_form_exp}

The preceding study assumed that the analyst knew a quadratic term might be
present.  In practice one may not know whether the exponential tilt is linear or
quadratic at all.  The quadratic family is useful precisely because it is nested:
\[
  w(y;\beta_1,\beta_2)\propto\exp\{\beta_1y+\beta_2y^2\},
  \qquad \beta_2=0
\]
is the linear model.  We therefore fix one uncertainty box for every dataset,
\begin{equation}
  \mathcal B_Q=[0,0.30]\times[0,0.15],
  \label{eq:unknown_form_box}
\end{equation}
and never change it according to the data-generating form. This box is chosen
for the \emph{practical} JTS-Q sensitivity experiment: because it contains
$\beta_2=0$ and positive $\beta_2$, it does not satisfy the tail-decay condition
of Corollary~\ref{cor:quadratic_bounded_exact} and carries no bounded exact
certificate. On each Monte Carlo trial the truth is chosen with equal probability from
$(\beta_1^\star,\beta_2^\star)=(0.30,0)$ and $(0.30,0.15)$.  JTS-Q uses
\eqref{eq:unknown_form_box} in both cases.  For a fair plug-in comparison, TPS-Q
also always fits the encompassing quadratic ratio rather than being told the
correct form.  Thus both methods face the same linear-versus-quadratic
uncertainty; the distinction is whether the two coefficients are estimated or
varied over a prespecified uncertainty set.

\paragraph{Unknown form with limited target information.}
The first column of Table~\ref{tab:unknown_form} uses the bimodal-residual DGP
with only $n_{\rm tg}=30$ target inputs.  Oracle-WT, which knows the true form and
coefficients, remains near nominal.  Point pseudo-labels fail badly.  SPS-Q is
surprisingly stable at $.891$, while both TPS-L and universal TPS-Q are around
$.864$--$.865$.  The reason is different for the two TPS methods: TPS-L is
misspecified on quadratic trials, whereas TPS-Q must estimate an unnecessary
quadratic coefficient on linear trials and a weakly identified one on quadratic
trials.  In separate linear-truth runs with $n_{\rm tg}=30$, TPS-Q estimates
$\widehat\beta_2=-.087\pm.268$ even though $\beta_2^\star=0$, illustrating the
variance cost of the encompassing plug-in.  The same JTS-Q box gives $.975$
coverage without deciding which form generated the trial, but with width $4.22$
versus the oracle's $3.25$.

\begin{table}[t]
\centering
\caption{One encompassing quadratic family when the true target tilt is randomly
linear or quadratic.  Entries are pooled coverage/mean width across the linear-
and quadratic-truth trials, nominal $0.90$, from the primary $30$-trial mixed-truth
run.  The same JTS-Q box and the same universal TPS-Q procedure are used on every
trial.}
\label{tab:unknown_form}
\small
\setlength{\tabcolsep}{4.2pt}
\begin{tabular}{lcc}
\toprule
Method & $n_{\rm tg}=30$ & $n_{\rm tg}=300$, mean bias $+0.5$ \\
\midrule
Oracle-WT  & .899 / 3.250 & .902 / 4.144 \\
PL-point-Q & .791 / 2.848 & .822 / 3.328 \\
SPS-Q      & .891 / 3.168 & .886 / 3.931 \\
TPS-L      & .864 / 3.054 & .797 / 3.691 \\
TPS-Q      & .865 / 3.234 & .823 / 4.205 \\
JTS-Q      & .975 / 4.221 & .976 / 5.200 \\
\bottomrule
\end{tabular}
\end{table}

The pooled mixed-truth result is not driven by only one of the two truths.  At
$n_{\rm tg}=30$, JTS-Q coverage is $.975$ under both linear and quadratic truth
(to three decimals), whereas TPS-Q gives $.855$ and $.874$, respectively.  SPS-Q
gives $.901$ under linear truth and $.881$ under quadratic truth.  This hierarchy
has a useful interpretation.  Point prediction discards predictive variation;
source predictive sampling restores it without feedback; tilted predictive
sampling is the most adaptive and efficient when its ratio estimate is reliable,
but a richer self-consistent fixed point can amplify estimation error.  JTS-Q
does not participate in that estimation hierarchy: it pays width to avoid
estimating either coefficient.

\paragraph{Unknown form under systematic predictive bias.}
The second column of Table~\ref{tab:unknown_form} restores $n_{\rm tg}=300$ but
adds $+0.5$ bias to the predictive mean used by the pseudo-label procedures and
practical scores.  Universal TPS-Q now covers only $.823$ even with abundant
target inputs, whereas JTS-Q remains at $.976$.  The experiment therefore
separates two sources of difficulty: limited target data can destabilize the
extra quadratic coefficient, while systematic predictive error can bias it even
when sampling noise is small.  JTS-Q protects against both only insofar as the
prespecified box still contains a useful tilt pair.  It does not protect against an
arbitrary shift outside the encompassing family.

\paragraph{Why JTS-Q over-covers.}
The $.97$--$.98$ JTS-Q coverage is much higher than the nominal $.90$, so we
decompose the practical construction.  The \emph{beta-specific practical union}
uses each pair's own calibration threshold,
\[
  \CC^{\rm prac,union}_{\rm JTS,Q}(x)
  =\bigcup_{(\beta_1,\beta_2)\in\mathcal B_Q}
    \{y:S_{\beta_1,\beta_2}(x,y)
        \le \widehat q^{\rm prac}_{\beta_1,\beta_2}\},
\]
whereas the reported common-threshold envelope first takes
$\widehat\theta=\sup_{\mathcal B_Q}\widehat q^{\rm prac}_{\beta_1,\beta_2}$ and
uses that larger threshold for every pair.  Table~\ref{tab:tsq_overcoverage}
shows that most of the conservatism is already present in the beta-specific
union.  This decomposition uses an independent $40$-trial Monte Carlo run,
whereas Table~\ref{tab:unknown_form} uses the primary $30$-trial run; the small
differences in the repeated Oracle-WT and JTS-Q summaries are therefore Monte
Carlo variation rather than conflicting estimates.  Relative to the oracle benchmark, about $90\%$ of the excess coverage
appears before the common-threshold step.  The latter adds only about $.007$
coverage and $.17$ mean width.  Thus the main reason JTS-Q over-covers is
structural: one interval is protecting simultaneously against many plausible
location-and-dispersion tilts.

\begin{table}[t]
\centering
\caption{Decomposition of JTS-Q conservatism in the mixed linear-or-quadratic
experiment with $n_{\rm tg}=30$.  Entries are averages over an independent
$40$-trial diagnostic run; Table~\ref{tab:unknown_form} uses the primary
$30$-trial run, so repeated Oracle-WT and JTS-Q summaries differ slightly by
Monte Carlo variation.  Both JTS rows are practical calibration-only
constructions and carry no exact finite-sample certificate.}
\label{tab:tsq_overcoverage}
\small
\begin{tabular}{lrr}
\toprule
Method & Coverage & Width \\
\midrule
Oracle-WT                     & .902 & 3.292 \\
Beta-specific practical union & .967 & 4.061 \\
Common-threshold JTS-Q envelope & .974 & 4.234 \\
\bottomrule
\end{tabular}
\end{table}

This decomposition suggests a direct efficiency improvement for future work:
retain the coupled two-parameter uncertainty set but use the beta-specific
practical union rather than the common-threshold envelope when numerical
inversion is affordable.  More generally, a nonrectangular uncertainty set
could reduce conservatism by excluding implausible simultaneous extremes of
$\beta_1$ and $\beta_2$.  These are efficiency refinements, not new validity
claims.

\subsection{Budget violation and dependence on pseudo-label quality}

\textbf{Budget violation.}
With a \emph{fixed} $\beta$-bound of $1.0$ and $\betastar$ swept from $0.2$ to
$2.0$, $\CC^{\rm prac}_{\rm JTS}$ coverage stays at $.948$--$.960$ while
$\betastar\le1.0$ (true tilt inside the budget) and degrades gracefully to
$.931$ at $\betastar=1.5$ and $.884$ at $\betastar=2.0$ as the budget is
violated.  Oracle-WT tracks a similar decline at large shift, confirming that the
degradation is a property of the shift magnitude and the practical quantile, not
of the sensitivity construction.

\textbf{Pseudo-label bias.}
Adding a systematic bias $\delta$ to the pseudo-labels at $\betastar=1.2$
collapses PL-point coverage from $.881$ ($\delta=0$) to $.814$ ($\delta=1.5$) to
$.756$ ($\delta=2.0$), while JTS $\CC^{\rm prac}_{\rm JTS}$ is unchanged at $\approx.954$ because
it never uses pseudo-labels.  This isolates precisely the dependence JTS-SCB
removes from the validity story; it is not a claim of protection against arbitrary
distributional violations, as the next subsection makes clear.

\subsection{Stress tests beyond the assumed shift model}

Table~\ref{tab:misspec} contains two different kinds of stress tests, and
they should not be conflated.  The conditional-noise experiment changes the
data-generating mechanism itself.  Starting from the linear-shift target, it
rescales the conditional noise by a factor $c$; for $c>1$ this changes
$p_t(x\mid y)$ and therefore violates the defining label-shift assumption
$p_t(x\mid y)=p_s(x\mid y)$.  Consequently, no response-only importance ratio
$w(y)$---linear, quadratic, or otherwise---can exactly repair this shift.
The collapse of practical JTS-SCB coverage from $.947$ at $c=1$ to $.516$
at $c=2$ should therefore be read as failure under a broken structural
assumption, not as evidence that the linear tilt merely lacks a dispersion
parameter.

This distinction is important because a genuine \emph{label-marginal} Gaussian
variance shift is representable by the quadratic family.  If
$P_s^Y=\mathcal N(m_s,v_s^2)$ and
$P_t^Y=\mathcal N(m_t,v_t^2)$ while $p_t(x\mid y)=p_s(x\mid y)$ is preserved,
then
\[
  \frac{p_t(y)}{p_s(y)}
  \propto
  \exp\{\beta_1 y+\beta_2 y^2\},
  \qquad
  \beta_1=\frac{m_t}{v_t^2}-\frac{m_s}{v_s^2},
  \qquad
  \beta_2=\frac{1}{2v_s^2}-\frac{1}{2v_t^2}.
\]
Thus a target marginal with larger variance corresponds to $\beta_2>0$, so its
density ratio amplifies both tails and the exact candidate-weighted set is
unbounded.  A more concentrated target with $v_t^2<v_s^2$ instead has
$\beta_2<0$; its density ratio decays in both tails and falls into the bounded
exact regime of Corollary~\ref{cor:quadratic_bounded_exact}.  This sharpens the
interpretation of the exact-validity limitation: the obstruction is tail
amplification, not quadratic tilting itself.  The conditional-noise stress test
is deliberately more severe because label shift itself fails.

The bimodal departures are different again: they preserve the response-only
shift construction but move outside the simple linear exponential family.  In
the reported sweeps, the symmetric and asymmetric variants degrade only mildly
and comparably relative to their Monte Carlo uncertainty; the two sweeps also
use different separation ranges and should not be ranked against each other.
The broader message is that JTS-SCB performs sensitivity analysis only over a \emph{structured}
response-marginal tilt class.  When the shift structure is genuinely unknown,
divergence-, Wasserstein-, or L\'evy--Prokhorov-based robust conformal methods
\citep{CauchoisM2024jasa,AiJ2024icml,AolariteiL2025neurips,XuR2025iclr} are more
appropriate.

\begin{table}[t]
\centering
\caption{Shift-model stress tests, high-SNR regime, nominal $0.90$.  In the
  conditional-noise block, $c=1$ is the correctly specified linear-shift
  baseline with $\betastar=1.0$, while $c>1$ breaks label shift.  The bimodal
  blocks are separate response-marginal constructions; $\delta=0$ in the
  symmetric block is a no-shift baseline.  Reported entries are practical
  JTS-SCB coverage.}
\label{tab:misspec}
\small
\begin{tabular}{lrrrr}
\toprule
Conditional-noise scale $c$ & $1.00$ & $1.25$ & $1.50$ & $2.00$ \\
$\CC^{\rm prac}_{\rm JTS}$ coverage & .947 & .857 & .745 & .516 \\
\midrule
Symmetric bimodal $\delta$ & $0.00$ & $0.50$ & $1.00$ & $1.50$ \\
$\CC^{\rm prac}_{\rm JTS}$ coverage & .948 & .938 & .925 & .898 \\
\midrule
Asymmetric bimodal $\delta$ & $0.50$ & $1.00$ & $1.50$ & $2.00$ \\
$\CC^{\rm prac}_{\rm JTS}$ coverage & .948 & .945 & .932 & .904 \\
\bottomrule
\end{tabular}
\end{table}

\subsection{Summary of what the experiments establish}

The experiments support a deliberately narrow claim.  First, for the simple
one-parameter linear tilt, a strong self-consistent tilted-predictive sampler
recovers the true tilt and matches the oracle when the predictive model is well
specified; JTS-SCB has no efficiency advantage there.  Second, the pseudo-label
hierarchy becomes more consequential for the quadratic family: point prediction
is weakest, source predictive sampling restores much of the lost information,
and TPS-Q is most efficient when its two-parameter ratio estimate is reliable.
With limited target data, however, the extra tail-sensitive coefficient can be
unstable and the TPS feedback loop can amplify that error.  Third, a single
quadratic JTS-Q uncertainty box can be used unchanged when the actual tilt is
randomly linear or quadratic because the linear model is nested at $\beta_2=0$.
The fair universal TPS-Q plug-in can also avoid discrete model selection, but it
must estimate an extra coefficient even on linear trials.  In the mixed
$n_{\rm tg}=30$ experiment, TPS-Q covers $.865$, SPS-Q $.891$, and JTS-Q $.975$;
under systematic predictive bias, TPS-Q covers $.823$ and JTS-Q $.976$.  These
numbers show conservative sensitivity protection, not efficiency: JTS-Q pays a large width premium.
Fourth, the high JTS-Q coverage is mostly intrinsic to unioning over the
uncertainty set.  A beta-specific practical union already covers $.967$ in the
mixed experiment, while the common-threshold envelope raises this only to $.974$.
Finally, the exact candidate-weighted counterpart has two sharply different
regimes.  The linear family is unbounded as predicted, but the dedicated
negative-quadratic benchmark $\mathcal B_Q^-$ attains $.922$ coverage with mean
finite numerical width $1.95$, illustrating bounded exact inference on the
original target when all candidate tilts are tail-decaying.  The bounded
common-threshold JTS-SCB set remains the practical procedure for the broader
linear/positive-quadratic experiments and carries no exact finite-sample
certificate; ratio clipping gives exact validity only for a bounded-ratio
surrogate.  The combined evidence therefore positions JTS-CB as
structured sensitivity analysis for cases where a credible low-dimensional
uncertainty set is easier to specify than an expressive importance-ratio model
is to estimate from pseudo-labels.  It does not solve arbitrary uncertainty about
the shift family, and closing the gap between bounded practical prediction sets
and exact validity for the original target in tail-growing or encompassing
uncertainty families remains a central theoretical open problem.

\section{Discussion and conclusion}
\label{sec:discussion}

JTS-CB is best viewed as a structured sensitivity-analysis framework for
Conformal Bayes under continuous label shift, rather than as a universally more
efficient replacement for plug-in shift estimation.  Its central distinction
is that it does not commit to one estimated target tilt.  Instead, it takes a
prespecified uncertainty set of plausible tilts and propagates that uncertainty
through both ingredients affected by label shift: the Bayesian conformal score
and the conformal importance weight.  This joint score--weight coupling is
essential because varying the two components independently can combine
incompatible tilts that do not correspond to any coherent target distribution.
For the practical split-conformal construction, JTS-SCB first computes the
smallest weighted calibration threshold for each fixed tilt, takes the largest
such requirement over the uncertainty set, and then unions the corresponding
score geometries.  In the Gaussian linear case this produces a bounded interval
with explicit endpoints and a pathwise monotone width--sensitivity trade-off as
the budget $\kappa$ expands.

The validity analysis reveals a sharper distinction between the practical
construction and its exact candidate-weighted counterpart.  The exact union has
finite-sample marginal coverage whenever the true tilt lies in the uncertainty
set, but whether that guarantee is informative is governed by the tail behavior
of the candidate importance ratio.  For scalar linear exponential tilts, every
nonzero candidate tilt makes the exact union unbounded because the candidate
weight diverges in one score tail.  The tail criterion shows that this is not a
peculiarity of the linear algebra: any tail-growing ratio can produce the same
mechanism.  Conversely, quadratic tilts with $\beta_2<0$ have weights that decay
in both tails, and compact uncertainty sets restricted to that region admit
bounded exact inference on the original target without clipping.  The dedicated
negative-quadratic benchmark illustrates this regime directly, attaining $.922$
coverage with finite mean numerical width $1.95$.  This positive certificate is
structurally restrictive, however.  It requires prior knowledge that plausible
tilts lie in the tail-decaying region; in the Gaussian marginal interpretation,
this corresponds to target variances smaller than the source variance.  An
uncertainty set containing nonzero linear candidates inherits their one-sided
unboundedness.  For tail-growing families, ratio clipping provides a different
repair: it bounds the candidate weight and yields exact validity for a
bounded-ratio surrogate target, with a total-variation transfer bound to the
original target.  Thus tail decay and clipping give two distinct routes to
bounded exact inference, one preserving the original target and the other
modifying it.

The experiments clarify when sensitivity analysis is useful and when its width
cost is unnecessary.  Strong tilted predictive sampling can match the oracle
when a one-parameter linear shift is well specified and reliably estimated, and
it also performs competitively for a well-identified quadratic tilt.  In such
settings JTS-SCB pays additional width for protection that may not be needed.
Its value is clearer when a credible low-dimensional set of plausible shifts is
easier to specify than one active tilt is to estimate reliably.  The nested
quadratic family provides a concrete example: because
$\exp\{\beta_1y+\beta_2y^2\}$ contains the linear model at $\beta_2=0$, one
JTS-Q uncertainty box can hedge over both linear and quadratic possibilities
without discrete model selection, whereas a universal quadratic plug-in must
estimate an additional tail-sensitive coefficient.  The mixed and biased
experiments show that JTS-Q can remain conservatively protective when this
estimation becomes unstable, but the gain is not free: the resulting prediction
sets can be substantially wider, and the decomposition experiment shows that
most of the overcoverage is already intrinsic to unioning over the uncertainty
set rather than to the common-threshold envelope itself.

The scope of these conclusions should remain explicit.  JTS-CB protects only
against uncertainty within a prespecified response-marginal tilt family; it is
not a distributionally robust method for arbitrary shift.  The user must choose
an encompassing family, specify a credible uncertainty set, respect
normalizability, and accept increasing computational and width costs as that set
grows.  If the true ratio falls outside the chosen family or budget, the inherited exact
coverage guarantee no longer applies.  More fundamentally, if
$p_t(x\mid y)\ne p_s(x\mid y)$, then label shift itself fails and no
response-only weight $w(y)$ can repair the joint distribution; broader
ambiguity-set or distributionally robust conformal methods are then more
appropriate.  The negative-$\beta_2$ exact certificate should likewise not be
read as an encompassing solution, because it cannot simultaneously hedge over
nonzero linear and strictly tail-decaying quadratic candidates while retaining
boundedness.

Overall, JTS-CB provides a bridge between Conformal Bayes and the broader
sensitivity-analysis tradition \citep{RosenbaumPR2002book,TanZ2006jasa}.  Its
main contribution is not a claim of universal efficiency, but a way to replace
uncertain estimation of one shift by transparent sensitivity analysis over a
structured set of plausible shifts while preserving the coupled Bayesian
score--weight geometry induced by each candidate tilt.  The theory also exposes
a precise limitation of exact weighted-conformal validity: a nominal coverage
guarantee can become practically vacuous when candidate density ratios grow in
extreme score tails.  Closing the gap between the bounded practical JTS-SCB
construction and exact original-target validity for tail-growing or encompassing
uncertainty families, and developing sharper finite-sample theory for the
calibration-only weighted quantile used in practice, remain the main directions
for future work.

% references
\bibliographystyle{abbrvnat}
\bibliography{/users/seungjin/pub/bib/sjc}

% appendix
\clearpage
\appendix

\section{Detailed proofs for the quadratic and clipped exact results}
\label{app:proofs_section4}

The elementary formal results in Section~\ref{sec:method} are proved completely
where they are stated.  This appendix retains only the two arguments for which
additional technical detail is useful: uniform boundedness over a compact
negative-quadratic uncertainty set and the clipped surrogate-target result.

\subsection{Proof of Corollary~\ref{cor:quadratic_bounded_exact}}
\begin{proof}[Corollary~\ref{cor:quadratic_bounded_exact}]
For a fixed $\boldsymbol\beta=(\beta_1,\beta_2)$, the normalized quadratic
candidate weight is
\[
  W_{n+1}(y;\boldsymbol\beta)
  =\exp\{\beta_1y+\beta_2y^2-A_s(\boldsymbol\beta)\}.
\]
If $\beta_2<0$, this weight tends to zero as $|y|\to\infty$. The tilted Gaussian
predictive is proper because $1-2\beta_2\sigb^2(x)>1$, and its negative-log-density
score diverges in both tails. Corollary~\ref{cor:tail_decay_growth} therefore
rejects both sufficiently extreme tails.

For the explicit envelope, let
$K_{\boldsymbol\beta}=\{\alpha/(1-\alpha)\}\sum_iW_i(\boldsymbol\beta)$. Once
the candidate score exceeds all calibration scores, Lemma~\ref{lem:candidate_tail_criterion}
accepts exactly when
\[
  \beta_2y^2+\beta_1y-
  \{A_s(\boldsymbol\beta)+\log K_{\boldsymbol\beta}\}>0.
\]
The discriminant is
$\Delta_{\boldsymbol\beta}=\beta_1^2+4\beta_2D_{\boldsymbol\beta}$. Because
$\beta_2<0$, if $\Delta_{\boldsymbol\beta}\le0$ the quadratic is never positive,
so all score-tail candidates are rejected and
\eqref{eq:quadratic_bound_no_crossing} follows. If
$\Delta_{\boldsymbol\beta}>0$, the quadratic is positive only between its two
finite roots $y_{W,-}$ and $y_{W,+}$. Outside the finite score-central interval,
accepted candidates can therefore occur only inside that root interval; taking
the convex hull of the two finite intervals gives
\eqref{eq:quadratic_explicit_bound}.

If $\beta_2>0$ and normalizability holds, the candidate weight tends to infinity
in both tails, so Corollary~\ref{cor:tail_decay_growth} accepts both sufficiently
extreme tails. If $\beta_2=0$ and $\beta_1\ne0$, Proposition~\ref{prop:unbounded}
applies. At $(\beta_1,\beta_2)=(0,0)$, $W_i\equiv1$ and an extreme candidate has
$\pi(y)=1/(n+1)$. Hence $\alpha>1/(n+1)$ is covered by the strict-limsup case of
Corollary~\ref{cor:tail_decay_growth}, while at
$\alpha=1/(n+1)$ rejection follows directly from the strict prediction rule
$\pi(y)>\alpha$.

Now let $\mathcal B_Q$ be compact and contained in $\{\beta_2<0\}$. Compactness
gives constants $\varepsilon>0$ and $B_1<\infty$ such that
$\beta_2\le-\varepsilon$ and $|\beta_1|\le B_1$ throughout $\mathcal B_Q$.
The log-normalizer is continuous and finite on this compact set, so
\[
  \sup_{\boldsymbol\beta\in\mathcal B_Q}
  W_{n+1}(y;\boldsymbol\beta)\longrightarrow0
  \qquad\text{as }|y|\to\infty.
\]
For the realized finite calibration sample, each calibration weight is positive
and continuous in $\boldsymbol\beta$, so
\[
  \inf_{\boldsymbol\beta\in\mathcal B_Q}
  \sum_{i=1}^n W_i(\boldsymbol\beta)>0.
\]
The tilted Gaussian means and variances are continuous on $\mathcal B_Q$, with
centers uniformly bounded and variances bounded above and away from zero.
Consequently the candidate score diverges uniformly in $|y|$, while the finite
calibration score maximum is uniformly bounded over $\mathcal B_Q$. Thus one
finite cutoff simultaneously gives the score-tail condition and weight
rejection criterion for all tilts in $\mathcal B_Q$, proving boundedness of the
exact union. If the true tilt belongs to $\mathcal B_Q$, the vector-tilt
inclusion argument of Remark~\ref{rem:vector_tilt} and weighted exchangeability
give marginal coverage at least $1-\alpha$ for that same bounded union under the
original target.
\end{proof}

\subsection{Proof of Proposition~\ref{prop:bounded_ratio}}
\begin{proof}[Proposition~\ref{prop:bounded_ratio}]
Once the calibration indicators vanish,
Lemma~\ref{lem:candidate_tail_criterion} applies with clipped weights. Since
$\widetilde W_{n+1}(y;\beta)\le M$ and
$\sum_i\widetilde W_i(\beta)\ge n/M$,
\[
  \widetilde\pi_{\beta,M}(y)
  \le
  \frac{M}{n/M+M}
  =\frac{M^2}{n+M^2}
  \le\alpha.
\]
This bound is valid in either tail and for every fixed $\beta$, so sufficiently
extreme candidates are rejected in both directions.

To make the cutoff uniform over a compact uncertainty set $\mathcal B$, fix
$x$. The Gaussian score $S_\beta(x,y)$ is continuous in $\beta$, its center is
uniformly bounded over $\mathcal B$, and its variance is fixed in the scalar
linear specialization. Hence
\[
  \inf_{\beta\in\mathcal B}S_\beta(x,y)\longrightarrow\infty
  \qquad\text{as }|y|\to\infty.
\]
For the finite calibration sample,
$\max_i\sup_{\beta\in\mathcal B}S_i(\beta)<\infty$ by continuity and compactness.
Therefore the calibration indicators vanish simultaneously for all
$\beta\in\mathcal B$ outside one finite interval, and the preceding universal
clipped-weight bound proves that the uncertainty-set union is bounded.

Under the normalized surrogate target, the likelihood ratio relative to $P_s$
is proportional to $\widetilde w_M$. Standard weighted exchangeability
\citep{TibshiraniR2019neurips} gives exact finite-sample coverage for the
fixed-$\betastar$ clipped set, and the uncertainty-set union inherits that
guarantee by inclusion. Finally, applying
the total-variation inequality to the coverage event gives
\eqref{eq:clipped_tv_transfer}.
\end{proof}

\section{Gaussian tilting identity}
\label{app:gaussian}

Suppose $\ps(y\mid x,\Dtr)=\N(\mub(x),\sigb^2(x))$ and $w(y;\beta)\propto\exp(\beta y)$. Then
\begin{align}
  \log \ps(y\mid x,\Dtr)+\beta y
  &= -\frac{(y-\mub(x))^2}{2\sigb^2(x)}+\beta y+\mathrm{const}\notag\\
  &= -\frac{(y-\mub(x)-\beta\sigb^2(x))^2}{2\sigb^2(x)}+\mathrm{const}(x,\beta),
\end{align}
which proves \eqref{eq:gaussian_tilt}. The variance is unchanged and the predictive mean shifts by $\beta\sigb^2(x)$.

\section{Weighted quantiles and pinball loss}
\label{app:pinball}

This appendix gives a self-contained justification for the quantile-regression
characterization used in Section~\ref{sec:method}.  The characterization is the
standard weighted check-loss characterization of weighted empirical quantiles;
see, for example, \citet[Chapter~5]{KoenkerR2005book}.  We give the short
argument to make the treatment of ties and nonunique minimizers explicit.  The
connection to conformal calibration via quantile regression is also used by
\citet{GibbsI2025jrsssb}.  Let
$S_1,\ldots,S_n\in\R$ be fixed scores, let $W_i\geq0$ be weights with
$T:=\sum_{i=1}^n W_i>0$, and write
\[
  \tau=1-\alpha\in(0,1).
\]
Define the weighted empirical distribution function
\begin{equation}
  \widehat F_W(\theta)
  =
  \frac{1}{T}\sum_{i=1}^n W_i\1\{S_i\leq\theta\},
  \qquad
  \widehat F_W(\theta^-)
  =
  \frac{1}{T}\sum_{i=1}^n W_i\1\{S_i<\theta\}.
  \label{eq:app_weighted_cdf}
\end{equation}
The lower weighted empirical $\tau$-quantile is
\begin{equation}
  \widehat q_W
  =
  \inf\{\theta:\widehat F_W(\theta)\geq\tau\}.
  \label{eq:app_weighted_quantile}
\end{equation}
Consider the weighted pinball objective
\begin{equation}
  L_W(\theta)
  =
  \sum_{i=1}^n W_i\,\rho_\tau(S_i-\theta),
  \qquad
  \rho_\tau(u)=\tau u_+ +(1-\tau)(-u)_+,
  \label{eq:app_pinball}
\end{equation}
which is identical to
\[
  \sum_{i=1}^n
  W_i\{(1-\alpha)(S_i-\theta)_+ + \alpha(\theta-S_i)_+\}.
\]

\begin{lemma}[Weighted quantiles minimize weighted pinball loss]
\label{lem:weighted_pinball_quantile}
A value $\theta\in\R$ minimizes $L_W$ if and only if
\begin{equation}
  \widehat F_W(\theta^-)
  \leq \tau
  \leq \widehat F_W(\theta).
  \label{eq:app_quantile_subgradient_condition}
\end{equation}
Consequently,
\begin{equation}
  \widehat q_W
  =
  \inf\operatorname*{arg\,min}_{\theta\in\R} L_W(\theta).
  \label{eq:app_lower_quantile_argmin}
\end{equation}
Thus the smallest minimizer of the weighted pinball loss is exactly the lower
weighted empirical $\tau$-quantile.
\end{lemma}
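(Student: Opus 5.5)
The plan is to treat $L_W$ as a convex, piecewise-linear function of $\theta$ and characterize its minimizers through one-sided derivatives. Each summand $W_i\rho_\tau(S_i-\theta)$ is convex in $\theta$ because $\rho_\tau$ is convex and $W_i\ge0$. Hence $L_W$ is convex, and $\theta$ is a minimizer if and only if its left derivative is $\le 0$ and its right derivative is $\ge 0$. Equivalently, $0$ must lie in the subdifferential $[\partial^-L_W(\theta),\partial^+L_W(\theta)]$.

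The first step is to compute these one-sided derivatives term by term. For a single term, $\theta\mapsto\rho_\tau(S_i-\theta)$ has right derivative $-\tau$ if $S_i>\theta$ and $1-\tau$ if $S_i\le\theta$. Its left derivative is $-\tau$ if $S_i\ge\theta$ and $1-\tau$ if $S_i<\theta$. Summing with weights gives
\[
\partial^+L_W(\theta)=T\{\widehat F_W(\theta)-\tau\},\qquad
\partial^-L_W(\theta)=T\{\widehat F_W(\theta^-)-\tau\}.
\]
For the right derivative, the computation is
\[
(1-\tau)\sum_i W_i\1\{S_i\le\theta\}-\tau\sum_iW_i\1\{S_i>\theta\}
=T\widehat F_W(\theta)-\tau T .
\]
The left derivative is analogous. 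Since $T>0$, the optimality condition $\partial^-\le0\le\partial^+$ is exactly \eqref{eq:app_quantile_subgradient_condition}, which proves the first claim.

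For \eqref{eq:app_lower_quantile_argmin}, I will show that $\widehat q_W$ satisfies \eqref{eq:app_quantile_subgradient_condition} and that no smaller $\theta$ does. Right-continuity of $\widehat F_W$ gives $\widehat F_W(\widehat q_W)\ge\tau$, and the infimum is finite because $\widehat F_W$ takes values $0$ and $1$ far left and right. For any $\theta<\widehat q_W$, minimality gives $\widehat F_W(\theta)<\tau$. Letting $\theta\uparrow\widehat q_W$, we get $\widehat F_W(\widehat q_W^-)=\lim\widehat F_W(\theta)\le\tau$, so $\widehat q_W$ is a minimizer. Conversely, any $\theta<\widehat q_W$ has $\widehat F_W(\theta)<\tau$, which violates the right inequality, so it is not a minimizer. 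Thus $\widehat q_W$ is the smallest element of the argmin set, which is attained and closed.

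The main obstacle is only bookkeeping at ties, that is, scores equal to $\theta$ and zero weights. Using the strict versus nonstrict indicators in the two one-sided derivatives handles this cleanly, and zero weights contribute nothing to either side.
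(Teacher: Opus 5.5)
Your proof is correct and follows essentially the same route as the paper. Your one-sided derivatives $T\{\widehat F_W(\theta^-)-\tau\}$ and $T\{\widehat F_W(\theta)-\tau\}$ are exactly the endpoints $A(\theta)-\tau T$ and $A(\theta)+B(\theta)-\tau T$ of the subdifferential that the paper obtains by Minkowski-summing the kink intervals, and your identification of $\widehat q_W$ as the smallest minimizer is the same argument, with the right-continuity and left-limit step written out explicitly where the paper states it directly.
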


\begin{proof}
For one observation $S_i$, the loss contribution is
\[
  \rho_\tau(S_i-\theta)
  =
  \begin{cases}
    \tau(S_i-\theta), & \theta<S_i,\\
    (1-\tau)(\theta-S_i), & \theta>S_i.
  \end{cases}
\]
Hence its derivative with respect to $\theta$ is $-\tau$ for
$\theta<S_i$ and $1-\tau$ for $\theta>S_i$.  At the kink
$\theta=S_i$, its subgradient is the interval
$[-\tau,1-\tau]$.

For a fixed $\theta$, split the observations into three groups and define
\[
  A(\theta)=\sum_{i:S_i<\theta}W_i,
  \qquad
  B(\theta)=\sum_{i:S_i=\theta}W_i,
  \qquad
  C(\theta)=\sum_{i:S_i>\theta}W_i,
\]
so that $T=A(\theta)+B(\theta)+C(\theta)$.  The three groups contribute to the
subgradient as follows.  If $S_i<\theta$, the contribution is the singleton
$\{(1-\tau)W_i\}$; if $S_i>\theta$, it is the singleton
$\{-\tau W_i\}$.  For the observations exactly at the kink, $S_i=\theta$,
each contribution is the interval
\[
  [-\tau W_i,(1-\tau)W_i].
\]
The sum of these intervals is a Minkowski sum.  For intervals on the real line,
\[
  [a_1,b_1]+\cdots+[a_m,b_m]
  =
  \left[\sum_{j=1}^m a_j,\sum_{j=1}^m b_j\right].
\]
Therefore all observations satisfying $S_i=\theta$ contribute exactly
\begin{equation}
  \sum_{i:S_i=\theta}
  [-\tau W_i,(1-\tau)W_i]
  =
  [-\tau B(\theta),(1-\tau)B(\theta)].
  \label{eq:app_equal_score_interval_sum}
\end{equation}
Combining the three groups gives
\begin{align}
  \partial L_W(\theta)
  &=
  (1-\tau)A(\theta)
  +[-\tau B(\theta),(1-\tau)B(\theta)]
  -\tau C(\theta) \notag\\
  &=
  \Bigl[
    (1-\tau)A(\theta)-\tau B(\theta)-\tau C(\theta),\notag\\[-1mm]
  &\hspace{27mm}
    (1-\tau)A(\theta)+(1-\tau)B(\theta)-\tau C(\theta)
  \Bigr] \notag\\
  &=
  \bigl[A(\theta)-\tau T,\,
        A(\theta)+B(\theta)-\tau T\bigr].
  \label{eq:app_pinball_subgradient}
\end{align}
The last equality uses $T=A(\theta)+B(\theta)+C(\theta)$.  Thus the interval in
\eqref{eq:app_pinball_subgradient} is not an ad hoc expression: its lower and
upper endpoints are obtained by summing, respectively, the smallest and largest
possible subgradients from all observations at the kink.

Because $L_W$ is convex, $\theta$ is a minimizer if and only if
$0\in\partial L_W(\theta)$.  By \eqref{eq:app_pinball_subgradient}, this is
exactly
\begin{equation}
  A(\theta)
  \leq \tau T
  \leq A(\theta)+B(\theta).
  \label{eq:app_weighted_mass_condition}
\end{equation}
Dividing by $T$ and using \eqref{eq:app_weighted_cdf} yields
\eqref{eq:app_quantile_subgradient_condition}.

It remains to identify the lower endpoint of the minimizer set.  By definition,
$\widehat q_W$ in \eqref{eq:app_weighted_quantile} is the smallest point at
which the weighted cumulative mass reaches $\tau$.  Therefore
\[
  \widehat F_W(\widehat q_W^-)
  \leq \tau
  \leq
  \widehat F_W(\widehat q_W),
\]
so $\widehat q_W$ is a minimizer.  No $\theta<\widehat q_W$ can be a minimizer,
since then $\widehat F_W(\theta)<\tau$, violating
\eqref{eq:app_quantile_subgradient_condition}.  Hence
$\widehat q_W$ is the infimum of the minimizer set, proving
\eqref{eq:app_lower_quantile_argmin}.
\end{proof}

The possibility of a nonunique minimizer is worth noting.  If the cumulative
weighted mass equals $\tau$ exactly at an observed score and the next positive
weight occurs at a larger score, then $L_W$ is flat between those two scores.
In that case $\operatorname*{arg\,min}L_W$ is an interval.  The convention
\eqref{eq:app_weighted_quantile} selects its left endpoint, which explains the
$\inf\operatorname*{arg\,min}$ notation in
\eqref{eq:jts_quantile_regression_lower}.

For JTS-SCB, apply the lemma at each fixed $\boldsymbol\beta$ with
$S_i=S_i(\boldsymbol\beta)$ and $W_i=W_i(\boldsymbol\beta)$.  Then
\[
  \widehat q_{\boldsymbol\beta}
  =
  \inf\operatorname*{arg\,min}_{\theta\in\R}
  L_{\boldsymbol\beta}(\theta)
  =
  \inf\{\theta:\widehat F_{\boldsymbol\beta}(\theta)\geq1-\alpha\},
\]
which is precisely the inner threshold used in
\eqref{eq:jts_sensitivity_threshold}.  JTS-SCB then takes the supremum of this
\emph{quantile solution} over $\boldsymbol\beta\in\mathcal B$; it does not
maximize or minimize the optimized pinball-loss value itself.

The optional augmented $(n+1)$-point interpretation used in some uncertainty-set
conformal derivations is an algebraic device.  In this paper, the theorem-bearing
exact set is defined by the upper-tail p-value \eqref{eq:exact_pvalue}; the
pinball representation is used only to characterize and compute the practical
calibration threshold.

\section{One-dimensional sensitivity search}
\label{app:computation}

The slack-variable representation of the fixed-$\beta$ weighted quantile problem is
\begin{align}
  \min_{\theta\in\R,\,u,v\geq0}\quad
  & \sum_{i=1}^n W_i(\beta)\{(1-\alpha)u_i+\alpha v_i\}\notag\\
  \text{s.t.}\quad & S_i(\beta)-\theta-u_i+v_i=0,
  \qquad i=1,\ldots,n .
  \label{eq:app_primal}
\end{align}
This formulation is equivalent to minimizing the weighted pinball loss over
$\theta$.  Its dual contains both box constraints on the equality multipliers
and the zero-sum constraint induced by the free scalar $\theta$.  Since the
implementation only needs the smallest minimizer, we compute
$\qhat^{\rm prac}_\beta$ directly by sorting the scores and accumulating the
normalized weights until the lower-tail mass reaches $1-\alpha$.
For the practical JTS threshold, the outer search is over the scalar quantile
map $\beta\mapsto \qhat^{\rm prac}_\beta$, not over the minimized pinball
objective value.  Any positive rescaling of the weights for fixed $\beta$ leaves
$\qhat^{\rm prac}_\beta$ unchanged.

\section{One-dimensional optimization}
\label{app:optimization}

The map $\beta\mapsto\qhat^{\rm prac}_\beta$ is piecewise smooth but need not be
globally unimodal.
The implementation evaluates on an adaptive grid of $K_0=20$ points, brackets
local maxima by sign changes of the numerical derivative, and refines each
bracket with bounded Brent search with boundary comparison.
No unimodality assumption is required.  Per-point diagnostic objectives such as
$g_i(\beta;\theta)=W_i(\beta)(S_i(\beta)-\theta)$ can be differentiated
analytically, but their stationarity equations do not by themselves imply
global unimodality.  The implementation therefore relies on grid-based global
search with endpoint checks rather than on a cubic-stationarity uniqueness
claim.

Appendix~\ref{app:altopt} records several alternative optimization viewpoints, including a feasibility reformulation of the sensitivity threshold and a breakpoint-style search for the Gaussian linear-tilt case.  These ideas help clarify the two-stage sensitivity calibration structure, but all experiments in the paper use the grid-based solver described in the main text and in this appendix.

\section{Alternative optimization viewpoints (not used in experiments)}
\label{app:altopt}

The practical JTS threshold can be rewritten as a sensitivity feasibility problem in
$\theta$.  Since
\[
  \qhat^{\rm prac}_\beta \le \theta
  \quad\Longleftrightarrow\quad
  \widehat F_\beta(\theta)\ge 1-\alpha,
\]
we have the equivalent representation
\begin{equation}
  \thetahat^{\rm prac}
  =
  \inf\Bigl\{\theta:\inf_{\beta\in\BB(\kappa)}\widehat F_\beta(\theta)\ge 1-\alpha\Bigr\}.
  \label{eq:feasibility_reformulation}
\end{equation}
Thus one may perform bisection on $\theta$ and, at each candidate threshold,
solve the inner problem $\inf_{\beta\in\BB(\kappa)}\widehat F_\beta(\theta)$.
This view makes clear that JTS searches for the \emph{smallest} threshold that
is adequate for \emph{every} admissible tilt.

For the Gaussian linear-tilt model, fixed $\theta$ induces a breakpoint
structure in $\beta$.  Writing
\[
  S_i(\beta)
  =
  \frac{(Y_i-\mub(X_i)-\beta\sigb^2(X_i))^2}{2\sigb^2(X_i)}
  + c_i,
  \qquad
  c_i=\tfrac12\log\{2\pi\sigb^2(X_i)\},
\]
the condition $S_i(\beta)\le \theta$ is equivalent to
\[
  \beta\in[L_i(\theta),U_i(\theta)],
\]
for explicit endpoints obtained by solving a quadratic inequality.  Hence, for
fixed $\theta$, the active index set
\[
  A(\beta,\theta)=\{i:S_i(\beta)\le \theta\}
\]
changes only at finitely many breakpoints $L_i(\theta),U_i(\theta)$.  Between
consecutive breakpoints, $A(\beta,\theta)$ is fixed and
$\widehat F_\beta(\theta)$ becomes a smooth one-dimensional ratio of exponential
sums.  This suggests an event-driven or breakpoint search that checks interval
boundaries and any interior stationary points, rather than evaluating a dense
uniform grid.

The same feasibility reformulation also extends naturally to a vector-valued
exponential-family tilt $\boldsymbol\beta$, for example the quadratic family
$\log w(y)=\beta_1 y + \beta_2 y^2 - A(\beta_1,\beta_2)$.  The resulting inner
optimization is low-dimensional but no longer one-dimensional; coarse-to-fine
partitioning, branch-and-bound, or multistart smooth optimization inside fixed
active-set regions are all possible.  We regard such optimization refinements as
interesting future work.  They are \emph{not} used in the experiments of this
paper, which all rely on the transparent grid-based solver described in the main
text.

\section{Additional experimental details}
\label{app:experiments}

The experiments use a Gaussian BLR data-generating process with $d=15$ features
$x\sim\mathcal N(0,I_d)$ and a fixed $\theta^\star$ scaled so that
$\|\theta^\star\|^2=s^2$; source labels are
$y_i=x_i^\top\theta^\star+\varepsilon_i$, $\varepsilon_i\sim\N(0,\sigma^2)$.  This
makes the source label marginal exactly $\ps(y)=\mathcal N(0,\vs^2)$ with
$\vs^2=s^2+\sigma^2$ and $R^2=s^2/\vs^2$, so the Gaussian working model for the
label density ratio is exact.  We use $(s^2,\sigma^2)=(1.0,0.3)$ for the high-SNR
regime ($\vs^2=1.3$, $R^2=0.77$) and $(0.3,0.7)$ for the moderate-SNR regime
($\vs^2=1.0$, $R^2=0.30$).  The BLR posterior uses prior $\N(0,\tau^2 I)$,
$\tau^2=1.5$.  For the scalar linear experiments, target data are obtained by
importance-resampling a source pool of $80{,}000$ points with weights
$\propto\exp(\betastar y)$, which preserves
$p_t(x\mid y)=\ps(x\mid y)$ (genuine label shift).  The quadratic experiments,
including the tail-decaying exact benchmark, use the corresponding exact
Gaussian quadratic label-shift sampler, so $p_t(x\mid y)=\ps(x\mid y)$
continues to hold.  The pseudo-label plug-in
estimates $\hat\beta=(\overline{\mub(x^{\rm tg})}-\hat\ms)/\hat\vs^2$ by moment
matching and plugs into \eqref{eq:practical_quantile}; a logistic-regression
density-ratio estimator on sampled pseudo-labels yields the same
$\hat\beta\approx\betastar R^2$.  JTS-SCB searches a $61$-point tilt grid over the
budget.  Coverage is empirical marginal coverage over $n_{\rm test}=200$ fresh
target points, averaged over $60$ trials.  Exact-union coverage uses membership
of the realized label. Finite-window exact-union widths are used only as numerical
tail diagnostics (Table~\ref{tab:unbounded} and the corresponding appendix table);
when Proposition~\ref{prop:unbounded} implies an unbounded set, the true Lebesgue
width is $\infty$ and any grid-capped number depends on the chosen response window.

\textbf{Budget violation (M1): full table.}
Fixed $\beta$-bound $=1.0$; high-SNR; $60$ trials; $1-\alpha=0.90$.

\smallskip
\begin{tabular}{lrrrrrrrr}
\toprule
$\betastar$ & 0.2 & 0.4 & 0.6 & 0.8 & 1.0 & 1.2 & 1.5 & 2.0 \\
\midrule
$\CC^{\rm prac}_{\rm JTS}$ cov  & .951 & .953 & .960 & .953 & .948 & .944 & .931 & .884 \\
$\CC^*$ cov     & .908 & .905 & .912 & .908 & .896 & .893 & .875 & .810 \\
Oracle-WT cov   & .900 & .897 & .893 & .897 & .890 & .893 & .878 & .844 \\
$\CC^{\rm prac}_{\rm JTS}$ wid  & 2.29 & 2.25 & 2.27 & 2.26 & 2.26 & 2.30 & 2.32 & 2.30 \\
$\CC^*$ wid     & 1.97 & 1.93 & 1.96 & 1.95 & 1.95 & 1.98 & 2.01 & 1.98 \\
\bottomrule
\end{tabular}

\bigskip
\textbf{Shift-model misspecification (M2): full table.}
$\beta$-bound reference $1.0$, high-SNR, $60$ trials.  The conditional-noise
block starts from a genuine linear label shift with $\betastar=1.0$ at $c=1$
and rescales the conditional noise for $c>1$, thereby breaking label shift.
The bimodal block is a separate response-marginal construction; at $\delta=0$
there is no shift.

\smallskip
\begin{tabular}{lrrrr}
\toprule
Shift & $\CC^{\rm prac}_{\rm JTS}$ cov & $\CC^{\rm prac}_{\rm JTS}$ wid & $\CC^*$ cov & $\CC^*$ wid \\
\midrule
Cond.-noise $c=1.0$ (linear-shift baseline) & .947 & 2.24 & .894 & 1.93 \\
Cond.-noise $c=1.25$      & .857 & 2.28 & .785 & 1.97 \\
Cond.-noise $c=1.50$      & .745 & 2.29 & .667 & 1.97 \\
Cond.-noise $c=2.00$      & .516 & 2.28 & .443 & 1.97 \\
\midrule
Bimodal $\delta=0$ (no-shift baseline) & .948 & 2.30 & .899 & 1.99 \\
Bimodal $\delta=0.5$     & .938 & 2.26 & .895 & 1.95 \\
Bimodal $\delta=1.0$     & .925 & 2.29 & .862 & 1.98 \\
Bimodal $\delta=1.5$     & .898 & 2.28 & .828 & 1.97 \\
\bottomrule
\end{tabular}

\bigskip
\textbf{Exact-union unboundedness: full table.}
$\betastar=0.9$, high-SNR, $60$ trials.  The numerical width is measured on
$[-20,30]$.  Proposition~\ref{prop:unbounded} implies infinite true width for
every $B>0$; at $B=0.1$ the approximate tail onset lies beyond the grid.

\smallskip
\begin{tabular}{lrrrrr}
\toprule
$\beta$-bound $B$ & 0.00 & 0.10 & 0.30 & 0.60 & 0.90 \\
\midrule
Exact-union cov & .865 & .866 & .902 & .919 & .952 \\
Grid-measured wid & 1.85 & 1.90 & 20.09 & 25.83 & 27.70 \\
Approx.\ $y_W(B)$ & -- & 35.1 & 11.9 & 6.2 & 4.5 \\
True width & finite & $\infty$ & $\infty$ & $\infty$ & $\infty$ \\
\bottomrule
\end{tabular}

\bigskip
\textbf{Tail-decaying quadratic exact benchmark.}
The high-SNR source model is unchanged, but the target is generated under the
exact quadratic label shift
$\boldsymbol\beta^\star=(0.30,-0.10)$. The candidate uncertainty set is
$\mathcal B_Q^-=[0,0.30]\times[-0.15,-0.05]$, evaluated on an $11\times11$ grid
that contains the truth. Exact-union coverage is computed on $200$ fresh target
points per trial. Numerical Lebesgue width is computed on a $0.01$ response grid
over $[-8,8]$ for $60$ test inputs per trial; no accepted point reaches either
boundary in any of the $60$ trials. The mean coverage is $.922$ and mean width
is $1.95$. At the true tilt, the discriminant
\eqref{eq:quadratic_weight_discriminant} is negative in every trial, with mean
$-1.28$ (range $[-1.30,-1.27]$), and the mean
$K_{\boldsymbol\beta^\star}$ is $33.31$.

\end{document}